\newtheorem{lemma}{Lemma}
\newtheorem{theorem}{Theorem}
\newtheorem{assumption}{Assumption}
\newtheorem{corollary}{Corollary}
\let\oldnl\nl% Store \nl in \oldnl
\newcommand{\nonl}{\renewcommand{\nl}{\let\nl\oldnl}}% Remove line number for one line
\definecolor{red}{rgb}{1,0.2,0.2}
\definecolor{green}{rgb}{0.2,1,0.5}
\definecolor{blue}{rgb}{0,0,0}%{0,0,1}
\definecolor{lightblue}{rgb}{0.3,0.5,1}
\newcommand{\blue}[1]{\textcolor{blue}{#1}}
\newcommand{\inprod}[2]{\left< #1, #2 \right>}
\newcommand{\Ept}{{\mathbb E}}
\title{\LARGE \bf Federated Learning with Server Learning: Enhancing Performance for Non-IID Data}
\author{Van Sy Mai, Richard J. La, Tao Zhang\thanks{V. S. Mai and T. Zhang are with the National Institute of Standards and Technology (NIST), Gaithersburg, MD 20899, USA. Email: \{vansy.mai, tao.zhang\}@nist.gov. R.J. La is with NIST and the University of Maryland, College Park, MD 20742, USA. Email: hyongla@umd.edu. 
Any mention of commercial products in this paper is for information only; it does not imply any recommendation or endorsement by NIST.  U.S. Government work not protected by U.S. copyright}}
\begin{document}

\maketitle
\begin{abstract}
Federated Learning (FL) has emerged as a means of distributed learning using local data stored at clients with a coordinating server. Recent studies showed that FL can suffer from poor performance and slower convergence when training data at clients are not independent and identically distributed. Here we consider a new complementary approach to mitigating this performance degradation by allowing the server to perform auxiliary learning from a small dataset. Our analysis and experiments show that this new approach can achieve significant improvements in both model accuracy and  convergence time even when the server dataset is small and its distribution differs from that of the aggregated data from all clients. %We validate our findings via experimental results obtained using two datasets -- EMNIST and CIFAR10.
\end{abstract}

\section{Introduction}

Federated Learning (FL) is a recent paradigm in which multiple clients collaborate under the coordination of a central server to train machine learning (ML) models \cite{Kairouz2019}. A key advantage is that clients need not send their local data to any central sever or share their data with each other. Performing learning where the data is generated (or collected) is becoming necessary as a large and growing amount of data is created at the network edge and cannot all be forwarded to any central location due to many factors such as network capacity constraints, latency requirements, and data privacy concerns \cite{MungZhang2016}.

In its basic form, FL trains a global model for all clients based on the following high-level iterative procedure. At each global round: 1) the central server selects a subset of clients and shares the current global model with them, 2) each selected client updates the model using only its local data and forwards the updated model to the central server, and 3) the central server aggregates the updated local models from the clients to update the global model. This process is repeated until certain convergence criteria are satisfied. 

\underline{\bf Background:} Conventional FL techniques, such as the well-known Federated Averaging (FedAvg) algorithm \cite{McMahan2017}, carry out model aggregation by averaging the model parameters received from the clients. This performs well when clients have access to independent and identically distributed (IID) training samples. In practice, however, the local data available to the clients often do not satisfy this IID assumption for different reasons. For instance, clients may collect data from different sources, using different tools, %from different perspectives, 
under different conditions, or only have access to partial or biased data, which can cause the distributions of the samples or features at different clients to differ considerably. Such divergences are also referred to as {\em drifts} or {\em shifts}, and can take different forms \cite{Kairouz2019}. 

Large divergences can cause conventional FL techniques to suffer from poor model performance and slow training convergence \cite{Deng2020, Haddadpour2019, Jiang2019, Karimireddy2020, Li2020, Zhao2018}. For example, feature divergence, where the distributions of features differ at different clients, may cause local models to focus on different features or even use different feature representations. %This makes it difficult {\color{red}{[RL: I am not sure what is meant by "for the server to align the optimization objectives of local models".]}} for the server to align the optimization objectives of local models. 
Non-IID training data can also cause clients to optimize their local models toward local optima that can differ significantly from global optima. This can further cause the weights of clients' local models to diverge \cite{Malinovsky2020, Zhao2018}. As a result, simply averaging local models may not move the aggregated model toward a global optimum. 

Recently, growing efforts have been devoted to improving FL performance for non-IID data. The following are several representative categories of approaches.

$\bullet$ Personalization: Clients personalize their local models to perform well on their local data \cite{Deng2020, Rad2021, Kulkarni2020, DLi2019, Xie2021}. Personalization can be for individual clients or groups of clients (e.g., clients that have similar training data or contribute similar model updates to the server) \cite{Briggs2020} \cite{Huang2021}. Many real-world applications, however, desire a common model for all clients. For example, consider autonomous vehicles (AVs) in different geographical regions learning to recognize stop signs. The snow-covered stop signs in northeast United States can look very different from those along the sunny southern country roads. Since cars can travel anywhere, they will benefit from a model that can work well everywhere. 

$\bullet$ Changing how clients learn or contribute: Several approaches aim to better align the objectives of clients that can diverge due to non-IID training data, e.g.,~\cite{Shoham2019, Zhang2021}. Clients may use Batch Normalization to alleviate local model divergence caused by non-IID data \cite{Li2021}. Batch Normalization \cite{Ioffe2015} has been used in deep learning to mitigate the impact of domain shifts (i.e., differences between training data distribution and test data distribution). 
%Other methods can also be used to better align clients' optimization objectives that can diverge due to non-IID training data \cite{Zhang2021}. 
Various methods have also been proposed to choose a subset of the clients to participate in each round of FL to counterbalance distribution shifts  \cite{Wang2020, WZhang2021}. 

$\bullet$ Changing how the server aggregates local models: This approach alters the aggregation method of local models based on, e.g., their distances to an estimated global model baseline \cite{Yeganeh2020}, or additional client states or control variates \cite{Karimireddy2020}.

% $\bullet$ Changing how the server aggregates local models \cite{Karimireddy2020, Yeganeh2020}: This approach alters the aggregation method of local models based on their distances to an estimated global model baseline. Various methods are available for estimating a global model baseline and its distances to the local models. In \cite{Yeganeh2020}, for example, this baseline is the average model of all clients. Other approaches assign different weights to local models based on the training data distributions across the clients \cite{Wu2021}.

$\bullet$ Lifelong learning techniques: These techniques treat the learning at each client as a separate task and learn these tasks sequentially using a single model without forgetting the previously learned tasks \cite{Kairouz2019}. 

\underline{\bf Motivation:} Our main observation is that these existing FL algorithms do not consider the central server as a learner or assume that the server has no training data. In practice, however, the server can and often have access to some training data. For example, the server may receive data from sensors and testing devices that do not participate in the learning process. It may have synthetic data obtained from simulation (or emulation) and digital twins. The server may also receive some raw data directly from the clients; this is often required to, for example, support system monitoring and diagnosis. 

Consider again AVs, as an example, which need ML models to recognize objects. Today, two main sources of data are used to train and test such models. First, test vehicles are used to scout selected areas to collect real-world data. Note that this typically imposes no privacy concerns. It, however, may require large fleets of test vehicles, take years to accomplish, incur heavy costs, and yet still fail to collect enough data to cover the vast range of possible learning needs \cite{Zhang2020}. Therefore, the AV industry is increasingly relying on a second source of data -- synthetic data, typically generated in the cloud -- to extend model training and testing scopes. 
Going forward, when some AVs participate in FL, a small fleet of test vehicles, which may not all participate in FL, can still be used to collect and send data to the server to compensate the data that the FL clients can collect.

%, but do not necessarily need to eliminate, 

%Consider another example -- quality of experience (QoE) monitoring for smartphones. Today, centralized learning requires collecting extensive user experience data from the smartphones. Such data, however, is highly personal, which hinders data collection. With edge learning, smartphones can learn from their local data right on the phones and share only small amounts of raw data with the central server if needed.

Sharing a common IID training dataset with all clients (so that each client will train its local model on its local data plus this common dataset) has been shown to improve FL performance with non-IID data \cite{Kairouz2019, DLi2019, Zhao2018}. But, this method, which we refer to as {\it FL with data sharing} or simply {\it data sharing}, also increases clients' workload, making them less suitable for resource-constrained devices. More importantly, it is often impractical for clients to share data with each other due to privacy concerns, network bandwidth constraints, and latency requirements. We will show that it is unnecessary to share such common datasets among clients, as comparable or better performance can be achieved by having the server learn from the same dataset.

\blue{Several recent works have considered server learning with some centralized data, e.g., hybrid training \cite{yang2022convergence}, mixed FL \cite{augenstein2022mixed}, and FL with server learning~\cite{mai2023ciss}. However, \cite{yang2022convergence} analyzes only the case where both clients' data and server data are IID and their algorithm requires all clients to participate in every round. Similarly, \cite{augenstein2022mixed} assumes IID client data and considers server's role as a regularizer. In contrast, \cite{mai2023ciss} focuses on FL with non-IID client data. In this paper, we build upon our work in \cite{mai2023ciss} to study the idea of using server learning to enhance FL  on non-IID data and provide both analytical and experimental results showing that this approach can be effective under certain conditions. Therefore, the primary focus of our study and reported analysis are fundamentally different from those in \cite{augenstein2022mixed} and \cite{yang2022convergence}.}

\underline{\bf Contributions:} %Based on the above observation, 
We consider a new FL algorithm that incorporates server learning to improve performance on non-IID data. Specifically, the server collects a small amount of data, learns from it, and distills the knowledge into the global model incrementally during the FL process. We refer to this method as {\em Federated Learning with Server Learning} (FSL). 
Our main contributions can be summarized as follows:
 
%Specifically, we will demonstrate the following: 

$\bullet$ Through our analysis and experimental studies, we show that FSL can significantly improve the performance in both final accuracy and  convergence time when clients have non-IID data. Also, only a {\it small amount of data} is needed at the server for FSL to improve performance, even when the server data distribution deviates from that of the aggregated data stored at the clients. As expected, the training performance improves as such distribution divergence diminishes. 
%the server data distribution becomes closer to that of aggregated data across all the clients.  
%\blue{The less non-IID the server data is, the larger improvement it gets.} 

$\bullet$ By incorporating server learning with FL in an \textit{incremental fashion}, we will demonstrate
%, both analytically and numerically, 
that FSL significantly accelerates the learning process when the current model is far from any (locally) optimal model.

$\bullet$  FSL is simple and can be tuned relatively easily, even when the server dataset is relatively small. Compared to FL, FSL adds only a local learning component to the server and does not affect the clients. Thus, FSL has the same per-round communication overhead as FL while practically requiring to tune only one additional parameter, which is the weight given to server's loss function. Our experimental studies show that the performance improvement of FSL remains significant for a relatively large range of this weight. 

In our experiments, FSL consistently outperforms the data sharing method in \cite{Zhao2018}, suggesting that sharing common datasets with clients might be unnecessary. We also demonstrate that by employing a small amount of data from either a few clients or other data sources (including synthetic data) for server learning, FSL can achieve similar (and often better) performance compared to FedDyn \cite{durmus2021federated} and SCAFFOLD \cite{Karimireddy2020}, while enjoying a significant boost in learning rate at the beginning.
% $\bullet$ In our experiments, FSL consistently outperforms the data sharing method \cite{Zhao2018}, suggesting that sharing common datasets with clients might be unnecessary. We also show that FSL can achieve similar (and often better) performance compared to SCAFFOLD \cite{Karimireddy2020}, while enjoying significantly lower communication overheads per round.% 

\blue{Preliminary results of this paper appeared in \cite{mai2023ciss}, where only the main algorithm and limited experimental results using \underline{IID server data} were reported. In this paper, we provide a theoretical analysis of FSL and more extensive experimental evaluations, including a comparison with SCAFFOLD algorithm using \underline{non-IID server data}.}

% We point out that FSL is simpler, has lower overheads, and hence is more practical than most other existing techniques for improving FL performance on non-IID data. First, the FSL algorithm is simpler -- it adds only a local learning component to the server and does not change the clients. This also means lower overheads than other existing methods that typically require changes to the clients -- FSL does not increase client's workload and storage or require the server and clients to exchange more data beyond what the basic FL algorithms will require. Furthermore, FSL has fewer parameters to tune and they can be tuned relatively easily for significant benefits, even when the server dataset is very small. %and not necessarily IID. 
% Compared to the baseline FL algorithm, FSL needs to tune only one additional parameter -- the weight given to server's loss function. Our experimental results show that the performance improvements of FSL remain significant over a wide range of this parameter.

The rest of the paper is organized as follows. The problem formulation and our algorithm are given in \S~\ref{sec_prob}. Main convergence results are presented in \S~\ref{sec:Convergence}, followed by experimental evaluations in \S~\ref{sec_experiment}. Conclusions are given in \S~\ref{sec_conclusion}. All the proofs and additional numerical results can be found in our technical report in %\cite{FLSL_arXiv}.
Appendices~\ref{sec_Proofs} and~\ref{sec_further_result}, respectively. 

\textbf{Notation}: For each integer $n>0$, we use $[n]$ to denote the set $\{1, \ldots, n\}$. For a finite set $\mathcal{D}$, $|\mathcal{D}|$ denotes its cardinality. For any vector $x$, $\|x\|$ denotes its 2-norm. We denote by $\langle x, y \rangle$ the inner product of two vectors $x$ and $y$. A function $f: D \to \mathbb{R}$ is said to be smooth with parameter $L$, or simply $L$-smooth, if $f(x) - f(y) - \langle \nabla f(y), x-y \rangle  \le \frac{L}{2}\| x-y \|^2$ for all $x, y \in D$. For a random variable $X$, we use both $\Ept[X]$ and $\Ept X$ to denote its expected value.

% \noindent \textbf{Notation}: For each integer $n>0$, we use $[n]$ to denote the set $\{1, \ldots, n\}$. For a finite set $\mathcal{D}$, $|\mathcal{D}|$ denotes its cardinality. For any vector $x$, $\|x\|$ denotes its 2-norm. For a random variable $X$, we use both $\Ept[X]$ and $\Ept X$ to denote its expected value. 

\section{Problem Formulation and Our Approach}\label{sec_prob}
In this section, we first present our problem formulation in connection with the data sharing approach, and then delineate the FSL algorithm aimed at coping with non-IID data.
% In this section, we first present an optimization framework for general ML and then put forth a unified framework that can be used to study FL with server learning and FL with data sharing. We will show that FL with data sharing, such as the one in \cite{Zhao2018}, can be considered a special case of FL with server learning. 

\subsection{Problem Formulation}

Consider the following ML problem in which we train a model to minimize an empirical loss:
\begin{align} 
\textstyle \min_{x \in \mathbb{R}^d} \quad F(x) \triangleq \frac{1}{n}\sum_{i\in [n]} \ell(x, s_i), 
    \label{eqProblem_Central}
\end{align} 
where $x \in \mathbb{R}^d$ is the vector of model parameters that need to be learned, $\mathcal{D} = \{s_1,\ldots,s_n\}$ is the set of training samples, and $\ell(x, s_i)$ is the loss for sample $s_i$ under model $x$. 
%To solve the problem, a solver needs access to all samples in $\mathcal{D}$. %As we will see later, our formulation below directly applies to the case of minimizing the \textit{expected loss} as well. \la{Not sure what is meant by this.}

In FL, the goal remains the same, which is to minimize the total loss, but training data are distributed at multiple clients. 
Suppose that there are $N$ clients and 
%let $[N] := \{1, 2, \ldots, N\}$ be the set of clients. 
the dataset $\mathcal{D}$ is partitioned into $\{\mathcal{D}_1, \mathcal{D}_2, \ldots, \mathcal{D}_N\}$, where $\mathcal{D}_i$ is the local dataset at client $i$. For each $i \in [N]$, define $n_i := |\mathcal{D}_i|$ and $f_i(x) := \frac{1}{n_i} \sum_{s\in \mathcal{D}_i} \ell(x, s)$ to be the loss function of client $i$ over its own dataset $\mathcal{D}_i$ under model $x$. Then, problem~\eqref{eqProblem_Central} can be reformulated as follows with $p_i = \frac{n_i}{n}$ for all $i\in [N]$: 
\begin{align} 
\textstyle \min_{x \in \mathbb{R}^d} \quad F(x) =\sum_{i\in [N]} p_i f_i(x).
    \label{eqProblem}
\end{align}

Suppose that the server also has access to a dataset $\mathcal{D}_0$ with $n_0 = | \mathcal{D}_0 |$. In the algorithm of \cite{Zhao2018}, a subset of samples in $\mathcal{D}_0$ is shared with {\em all} clients and is not utilized by the server. Each client $i$ implements the conventional FL algorithm using the augmented dataset $\mathcal{D}'_i = \mathcal{D}_i \cup \mathcal{D}_0$.\footnote{For simplicity, we either assume that $\mathcal{D}_i \cap \mathcal{D}_0 = \varnothing$ or consider any dataset as a multiset, allowing for possible multiple instances for each of its elements. Thus, we can write $|\mathcal{D}'_i| = |\mathcal{D}_i| + |\mathcal{D}_0|$.} Under such data sharing, the optimization problem in \eqref{eqProblem_Central} is modified as follows to reflect the change in clients' datasets: 
\begin{align*} 
\min_{x \in \mathbb{R}^d} ~~ F'(x) =\frac{1}{n+Nn_0}\Big(\sum_{s \in \mathcal{D}} \ell(x, s) + N\!\!\sum_{s' \in \mathcal{D}_0} \ell(x, s') \Big)
\end{align*} 
Similar to \eqref{eqProblem}, this problem can be rewritten using the weighted sum of clients' loss functions as follows:
\begin{align} 
\textstyle	\min_{x \in \mathbb{R}^d} \quad F'(x) = \sum_{i\in [N]} p_i' f'_i(x), \label{eqProblem2}
	\end{align} 
where $f'_i(x) = \frac{1}{n_i+n_0}\sum_{s\in \mathcal{D}'_i} \ell(x, s)$ is the modified loss of client $i$, %over $\mathcal{D}'_i$, 
and $p_i' = \frac{n_i+n_0}{n+Nn_0}$ is the corresponding weight. 

Define $f_0 := n_0^{-1} \sum_{s \in \mathcal{D}_0} \ell(x, s)$ to be the loss function for the samples in $\mathcal{D}_0$. Using the definition of $F$ in \eqref{eqProblem_Central}, the new objective function $F'$ can be rewritten as
\begin{align}
    F' %= \frac{1}{n+Nn_0}\Big( nF + Nn_0 f_0 \Big) 
    = \textstyle\frac{n}{n+Nn_0}\Big( F + \frac{Nn_0}{n} f_0 \Big) . \label{eq_defn_Ftilde}
\end{align}
This tells us that the above data sharing method alters the objective function by adding the loss function $f_0$ for the shared samples with a weight of $\frac{N n_0}{n}$. It also suggests that the quality of the solution obtained from \eqref{eqProblem2}, relative to the original problem in \eqref{eqProblem}, depends on how similar $F$ and $f_0$ are: when $F = f_0$, the two problems become equivalent. More importantly, it shows that sharing the samples in $\mathcal{D}_0$ with clients may be unnecessary; instead, the server can learn from $\mathcal{D}_0$ and combine its learned model with clients' models in a federated fashion. Having the server learn, rather than sharing training samples among the clients, avoids practical issues such as extra communication overheads, long and unpredictable network delays, and privacy concerns. It also allows us to choose the weight for $f_0$, which we denote by $\gamma$, to be different from $\frac{N n_0}{n}$, based on the quality of $\mathcal{D}_0$. This leads to a following (centralized) optimization problem:
\begin{align}
\textstyle \min_{x\in \mathbb{R}^d} \quad F + \gamma f_0.
    \label{eq:objective1}
\end{align}
% In Sections~\ref{subsec_main_algorithm} and \ref{sec:Convergence}, we will describe the proposed algorithm inspired by this observation, which allows the server to perform its own local learning on $\mathcal{D}_0$, and present the analysis on its convergence. 

Note that our problem formulation above can be generalized to the case with expected losses as follows:
\begin{align}
\textstyle \min_{x\in \mathbb{R}^d} \quad \Big( \sum_{i\in [N]} p_i f_i(x) \Big) + \gamma f_0(x), 
    \label{eq:objective2}
\end{align}
where $p = (p_i; i \in [N])$ is a probability vector, and $f_i(x) = \Ept_{z \sim \mathcal{D}_i}[f_i(x;z)]$ is the expected loss function of the server ($i=0$) and each client $i\in [N]$, and $\mathcal{D}_i$ is the corresponding data distribution. In what follows, we will use \eqref{eq:objective1} to facilitate our discussion and emphasize that our analysis applies directly to \eqref{eq:objective2}.

%where we introduce $\gamma \ge 0$ as a weight representing the amount of approximation that server wishes to introduce to the original loss $F$. Clearly, with $\gamma = \frac{Nn_0}{n}$, we recover \eqref{eq_defn_Ftilde}. 

% Here, we let $i=0$ denote the server. Note that this formulation indeed captures both centralized and federated settings:
% \begin{itemize}
%     \item The usual FL corresponds to the case $f_0 \equiv 0$, i.e., the server has no data to learn from. 
%     \item The centralized setting corresponds to the case $f_i \equiv 0, \forall i=1,\ldots,n$, i.e., all the data is available at the server. 
% \end{itemize}

%Different from existing methods, in this work, we present a hybrid scheme that combines federated learning and server learning without distributing common training data to clients. We will demonstrate that this scheme is particularly effective in improving learning performance when clients have non-i.i.d data, even when the server learns from just a small amount of training data. 

% Here, $x_i \in \R^m$ is the local estimate of node $i$, $\alpha >0$ is some fixed step size.

\subsection{FSL Algorithm} \label{subsec_main_algorithm}

% \la{Describe the main algorithm with all the parameter somewhere in this section.}\blue{see blue text below -VS.}

We assume that the server has access to dataset $\mathcal{D}_0$ and will augment FL with what the server learns over $\mathcal{D}_0$. As stated earlier, we refer to this approach as {\em Federated Learning with Server Learning} or FSL. 

%\blue{VS: The following paragraph should be somewhere else.} \red{Tao: moved it to be integrated with the last paragraph of this seciton. please double check.}

There are several ways to incorporate server learning (SL) into FL. One is to treat the server as a regular client that participates in every round of FL process \cite{Yoshida2020}: During each global round, the server updates the current global model using $\mathcal{D}_0$ and then aggregates it with the updated models reported by the clients.  %This corresponds to \eqref{eq:objective1} where $\gamma$ is determined by the step size employed by the server. 
We call this approach {\em non-incremental} SL. %During model aggregation, the server may assign a special weight to its own updated model. 
One issue with non-incremental SL is that the weight for the server would be very small when $n_0 \ll n$, which means that the server's contributions, based on its learning from $\mathcal{D}_0$, to the global model will be minor. Moreover, this approach fails to exploit the good quality of $\mathcal{D}_0$, especially when its distribution is close to that of $\mathcal{D}$. This issue can be partially alleviated by increasing the weight given to the server's model in the aggregation step. 

These observations motivate us to consider an {\em incremental learning} scheme in which the server performs additional learning over dataset $\mathcal{D}_0$ based on the aggregated model, as shown in  Algorithm~\ref{FLSL} below in more detail. 
In particular, lines 1--9 of Algorithm~\ref{FLSL} are the same as in a conventional FL algorithm \cite{McMahan2017}, where in each global round $t$, each selected client $i$ (1) receives the current global model $x_t$ from the server, (2) performs $K$ steps of the Stochastic Gradient Descent (SGD) algorithm using its local data $\mathcal{D}_i$ (\textsc{LocalSGD}) with learning rate $\eta_l$, and (3) returns to the server its update $\Delta_t^{(i)}$. The server then combines its current model $x_t$ with the updates from the clients using some weight $\eta_g >0$ (lines~8--9). It then uses the resulting updated model  to learn locally by performing $K_0$ steps of \textsc{LocalSGD} with learning rate $\gamma\eta_0$ (line~10). As one can see, our approach has the same  computation and communication costs at the clients as the usual FL framework.

Note that our algorithm is similar to the incremental (stochastic) gradient method, which has been shown to be much faster than the non-incremental gradient method when the model is far from a (locally) optimal point  \cite{bertsekas2011incremental}. While FL with local SGD also works in an incremental fashion, it often needs small learning rates, hence longer learning times, to ensure convergence when the distribution of clients' data is heterogeneous. 

Before presenting a formal analysis and experimental results, let us provide some insights into FSL. First, when the distributions of $\mathcal{D}_0$ and $\mathcal{D}$ are close, server's loss function $f_0$ will be similar to the overall loss function $F$ in \eqref{eqProblem_Central}. Consequently, if the current model is far from an optimal point, the gradient $\nabla f_0$ will track the global gradient $\nabla F$, even when individual clients' gradients $\nabla f_i$ do not follow $\nabla F$ closely. Therefore, when the updated model obtained by aggregating clients' updated models does not make (much) progress, $\nabla f_0$ will help improve the updated model. In fact, it turns out that significant improvements can still be achieved even when the distributions of $\mathcal{D}_0$ and $\mathcal{D}$ are not very similar as long as their difference is small in relation to the non-IIDness of clients' data.  
We will elaborate on these points in the following section. 

\begin{algorithm2e}
\caption{\textsc{FSL}: FL with Server Learning} \label{FLSL}
\DontPrintSemicolon
% \begin{algorithmic}%[1]
\nonl\textbf{Server:}\; %\qquad\text{ Combine FL and Local Learning}
initialize $x_0$, $K$, $K_0$, $\eta_l, \eta_g, \gamma\eta_0$\;
\For{$t = 0, \ldots, T-1$}{
    sample a subset $\mathcal{S}$ of clients (with $|\mathcal{S}| = S$)\;
    broadcast $x_t$ to clients in $\mathcal{S}$\;
    \ForAll{$\mathrm{clients}~i \in \mathcal{S}$}{
        $x^{(i)}_{t,K} = \textsc{LocalSGD}(x_t, \eta_l, K, \mathcal{D}_i)$\;
        upload to server: $\Delta^{(i)}_t = x^{(i)}_{t,K} - x_t$\;
    }
    $\Delta_t = \frac{\sum_{i\in \mathcal{S}}\Delta^{(i)}_t}{|\mathcal{S}|}$\;
    $\bar{x}_t = x_t + \eta_g \Delta_t$\;
    % %$\bar{x} = \frac{1}{n} \sum_{i=1}^n x^{i}_{t+1}$\;
    % $z_{t+1} = \textsc{LocalSGD}( \bar{x}_t, \gamma\eta_0, K_0,  \mathcal{D}_0)$\;
    % $x_{t+1} = z_{t+1} + \beta(x_t - x_{t-1})$\qquad\blue{Momentum, added 221201}\; 
    % %$x_{t+1} = (1-\gamma)\times \textsc{LocalSGD}( \bar{x}_t, K_0, \eta_0, \mathcal{D}_0) + \gamma\times x_t$\;
    $x_{t+1} = \textsc{LocalSGD}( \bar{x}_t, \gamma\eta_0, K_0,  \mathcal{D}_0)$\;
}
\nonl\;
\nonl$\textsc{LocalSGD}(x, \eta, K, \mathcal{D}_i)$:\; %\qquad \text{ \righttriangle Local learning via SGD}
    $y_0 = x$\;
    \For{$k=0,\ldots,K-1$}{
        %compute a minibatch gradient $g(y_k)$\;
        compute an unbiased estimate $g(y_k)$ of $\nabla f_i(y_k)$\;
        $y_{k+1} = y_{k} - \eta g(y_k)$\;
    }
% \end{algorithmic}
\end{algorithm2e}

\section{Convergence Results}
    \label{sec:Convergence}

We first show in subsection~\ref{subsec_SL_correction} that SL can be viewed as a correction step for FL in the case of non-IID training data. Then the main convergence results are provided in subsection~\ref{subsec_convergence_result}. 

\subsection{SL as Corrections to FL When Far from Convergence} 
    \label{subsec_SL_correction}

In order to simplify our discussion presented in this subsection which provides key intuition behind our approach, assume that the server can compute gradient $\nabla f_0(x)$, $x \in \mathbb{R}^d$, and consider the usual gradient descent (GD) method for SL. 
First, consider a single update carried out by the server using GD, starting with some model $w_0$, i.e., 
$w_{1} = w_0 - \eta_0\nabla f_0(w_0)$.
Suppose that $\nabla F$ is Lipschitz continuous with parameter $L$.\footnote{This assumption is standard in FL and often holds when training neural networks. We will state this assumption formally in Section~\ref{subsec_convergence_result}.} Then,
\begin{align}
    F(w_1) -F(w_0) &\le 
     \inprod{\nabla F (w_0)}{w_1-w_0} + 0.5 L \|w_{1}-w_0\|^2 \nonumber \\
    &\le  -\eta_0\inprod{\nabla F (w_0)}{\nabla f_0(w_0)} +  0.5 L \eta_0^2 \|\nabla f_0(w_0)\|^2.
    \label{eq:ineq1}
    %\\
    %&= F(w_0) - \frac{h}{2}\Big( \|\nabla F (w_0)\|^2 + \|\nabla f_0 (w_0)\|^2 - \|\nabla F (w_0) -\nabla f_0 (w_0)\|^2 \Big) +  \frac{L}{2}h^2 \|\nabla f_0(w_0)\|^2\\
    %&= F(w_0) - \frac{h}{2}\Big( \|\nabla F (w_0)\|^2 + (1-Lh)\|\nabla f_0 (w_0)\|^2  \Big) +  \frac{h}{2}\|\nabla F (w_0) -\nabla f_0 (w_0)\|^2
\end{align}
The above inequality indicates that SL can improve FL further when the second term in \eqref{eq:ineq1} is sufficiently negative so that 
\\ \vspace{-0.22in}
\begin{align}
    2\inprod{\nabla F (w_0)}{\nabla f_0(w_0)} >  L\eta_0 \|\nabla f_0(w_0)\|^2. \label{grad_f0_cond_angle}
\end{align}
This condition holds when $\nabla f_0(w_0)$ makes an acute angle with $\nabla F(w_0)$ (provided that $\|\nabla F(w_0)\| >0$), in which case progress can be made by using a sufficiently small step size $\eta_0$. This will likely be the case when the distribution of $\mathcal{D}_0$ is similar to that of $\mathcal{D}$ and, when $w_0$ is far from a (local) minimizer, $-\nabla f_0(w_0)$ will likely be a descent direction of $F$ at $w_0$. 

In order to further see the role of $\mathcal{D}_0$, let us rewrite condition \eqref{grad_f0_cond_angle} as follows: 
\begin{align}
    &\|\nabla F (w_0)\|^2 + (1-L\eta_0)\|\nabla f_0 (w_0)\|^2 > \|\nabla F (w_0) -\nabla f_0 (w_0)\|^2. \label{grad_f0_cond_norm}
\end{align}
This implies the following. First, the error $\|\nabla F (w_0) -\nabla f_0 (w_0)\|^2$ in general depends on relationship between the server's dataset $\mathcal{D}_0$ and the aggregate dataset $\mathcal{D}$; the more dissimilar $\mathcal{D}_0$ is to $\mathcal{D}$, the larger the error and thus the smaller the improvement. In fact, SL can have negative impact if the error is sufficiently large. This suggests that the server dataset should be selected carefully in order to maximize the benefits of SL. As an example, consider $\mathcal{D}_0$ consisting of IID samples. In this case, the error $\|\nabla F (w_0) -\nabla f_0 (w_0)\|^2$ tends to decrease with the size of $\mathcal{D}_0$ according to (sampling without replacement)
\begin{align}
\Ept_{\mathcal{D}_0} \| \nabla f_0 (x) - \nabla F(x)\|^2 = 
\Big( \frac{n}{n_0} - 1 \Big)\frac{\tilde\sigma_0^2(x)}{n-1}, \label{eqServer_iid_variance}
\end{align}
where $\tilde\sigma_0^2(x) = \frac{1}{n}\sum_{s\in \mathcal{D}}\|\nabla_x \ell(x,s) - \nabla F(x)\|^2$ is the population variance. 
% \\ \vspace{-0.22in}
% \begin{align*}
% \tilde\sigma_0^2(x) = \frac{1}{n}\sum_{s\in \mathcal{D}}\|\nabla_x \ell(x,s) - \nabla F(x)\|^2.
% \end{align*}
Thus, condition \eqref{grad_f0_cond_norm} can be satisfied by increasing $n_0$.

Second, for fixed $\mathcal{D}_0$ (of reasonable quality), the inequality in \eqref{grad_f0_cond_norm} holds when $\|\nabla F (w_0)\|$ is large, i.e., $w_0$ is far from being a stationary point, which is expected at the beginning of the training process. This is true even when $\nabla f_0(x)$ is a biased estimate of $\nabla F(x)$ as long as $\|\nabla F (w_0) -\nabla f_0 (w_0)\|^2$ is strictly smaller than $\|\nabla F (w_0)\|^2+\|\nabla f_0 (w_0)\|^2$, i.e., the angle between the gradients is acute as mentioned earlier, for a sufficiently small step size $\eta_0$.
% Second, for fixed $\mathcal{D}_0$, the inequality in \eqref{grad_f0_cond_norm} will likely hold when $\|\nabla F (w_0)\|$ is large, i.e., $w_0$ is far from being a stationary point, which is to be expected at the start of the training process. This is true even when $\nabla f_0(x)$ is a biased estimate of $\nabla F(x)$ as long as $\|\nabla F (w_0) -\nabla f_0 (w_0)\|^2$ is sufficiently small (compared to $\|\nabla F (w_0)\|^2+\|\nabla f_0 (w_0)\|^2$). 
Third, when $\|\nabla f_0 (w_0)\|$ is sufficiently small, e.g., when overfitting happens at the server, the improvement by SL is also insignificant. Finally, when $w_0$ is near a stationary point of $F$ but far from that of $f_0$, i.e., $\|\nabla F (w_0)\| \ll \|\nabla f_0(w_0)\|$, the inequality in \eqref{grad_f0_cond_norm} may be reversed, in which case SL can impair FL, pushing the model toward server's local stationary points. In this case, our algorithm does not yield exact convergence but oscillates between stationary points of $F$ and $f_0$, which is expected for an incremental gradient method \cite{bertsekas2011incremental}. Such convergence will be analyzed in details in the next subsection. 

The above analysis also applies when the server performs multiple updates. In particular, suppose that the server performs $K_0$ updates of the model using the GD method with a fixed step size $\eta_0$: 
\\ \vspace{-0.22in}
\begin{align*}
    w_{t,k+1} = w_{t,k} - \eta_0\nabla f_0(w_{t,k}), \quad k=0,\ldots,K_0-1,
\end{align*}
with $w_{t,0} = \bar{x}_t$ and $x_{t+1} = w_{t,K_0}$. 
Then, repeating the steps above and summing over the iterations, we obtain
\begin{align*}
    F(x_{t+1}) - F(\bar{x}_t) 
    &\le  - 0.5\eta_0 \textstyle \sum_{k=0}^{K_0-1} \big( \|\nabla F (w_{t,k})\|^2 + (1-L\eta_0)\|\nabla f_0 (w_{t,k})\|^2  \big) \\
    &\qquad\qquad +  0.5\eta_0 \textstyle \sum_{k=0}^{K_0-1}\|\nabla F (w_{t,k}) -\nabla f_0 (w_{t,k})\|^2.
\end{align*}
Similarly to the single-update case discussed earlier, we can see that carrying out multiple updates at the server is beneficial when $w_{t,k}$ is far from being a stationary point of either $F$ or $f_0$, more precisely, $\|\nabla F (w_{t,k})\|^2 + (1-L\eta_0)\|\nabla f_0 (w_{t,k})\|^2 > \|\nabla F (w_{t,k}) -\nabla f_0 (w_{t,k})\|^2.$ 
% \\ \vspace{-0.22in}
% \begin{align*}
% \|\nabla F (w_{t,k})\|^2 + (1-L\eta_0)\|\nabla f_0 (w_{t,k})\|^2 > \|\nabla F (w_{t,k}) -\nabla f_0 (w_{t,k})\|^2.
% \end{align*}
This also suggests that when learning collaboratively with clients, the server should not overfit its own data, which could happen easily when $n_0$  is small. 

\subsection{Convergence Analysis} 
    \label{subsec_convergence_result}

In this subsection, we study the convergence properties of FSL. Specifically, we will prove that, under suitable conditions on step sizes, FSL converges to a neighborhood of a stationary point of the following modified loss function 
$$\tilde{F} = \textstyle\frac{1}{1+\gamma} F + \frac{\gamma}{1+\gamma}f_0$$ 
which is simply the normalized version of that in \eqref{eq:objective1}, where the weight $\gamma>0$ is chosen by the server. 
% 
% For example, the choice of $\gamma = \frac{N n_0}{n}$ would yield the same approximated loss function as in sharing data with clients \cite{Zhao} as explained earlier. Here, we are interested in an approach that does not require sharing data with clients. 
The value of $\gamma$ should depend on the quality of server's dataset $\mathcal{D}_0$: when the distribution of $\mathcal{D}_0$ is close to that of $\mathcal{D}$, a larger value would offer greater benefits. But, our analysis presented below does not assume that their distributions are close. Also, our experimental results demonstrate that the FSL algorithm can deliver significant benefits even when the two distributions differ considerably (see Section~\ref{sec_experiment})

First, we state several assumptions under which our analysis of Algorithm~1 is carried out.

\begin{assumption} \label{assm_LossFunc}
%The sample loss function $\ell(\cdot, s)$ is $L$-smooth for all $s \in \mathcal{D} \cup \mathcal{D}_0$.
The server and client's local loss functions $\{f_i\}_{i = 0}^N$ are $L$-smooth on $\mathbb{R}^d$.
\end{assumption}

This assumption is standard in the literature and is often satisfied in practice. It also implies that %client's local loss functions $f_i$, $i = 0, \ldots, N$, and 
the global loss functions $F$ and $\tilde{F}$ are $L$-smooth. 
The second assumption is used to bound the gradient dissimilarity caused by clients' non-IID data; see, e.g., \cite{reddi2020adaptive}.

\begin{assumption}\label{assm_globalGrad}
There exists a finite constant $G$ such that $\frac{1}{N}\sum_{i \in [N]} \| \nabla f_i(x) -\nabla F (x) \|^2 \le G^2$ for all $x\in \mathbb{R}^d$.
\end{assumption}

% \begin{assumption}\label{assm_globalGrad}
% There are finite constants $B$ and $G$ such that $\frac{1}{N}\sum_{i \in [N]} \| \nabla f_i(x) \|^2 \le G^2 + B^2\|\nabla F (x)\|^2$ for all $x\in \mathbb{R}^d$.
% \end{assumption}

% We note that the pair $(G,B)$ jointly (not individually) provides a bound due to the skewness of the clients' datasets. For example, when all samples are IID and $n$ is large, the assumption will likely hold with $(G, B) \gtrsim (0, 1)$.  However, when $\mathcal{D}_i$, $i \in [N]$, are very dissimilar, at least one of these two parameters will likely need to be larger to satisfy the assumption.

Here, $G$ bounds the average disparity between the gradients of clients' loss functions and the empirical loss caused by non-IID samples at the clients; the IID case corresponds to $G \to 0$. 
Similarly, when the distributions of $\mathcal{D}_0$ and $\mathcal{D}$ are different, there can be a discrepancy between $\nabla f_0$ and $\nabla F$. We use the following assumption to characterize the quality of server dataset $\mathcal{D}_0$.

% Define $\xi : \R^d \to \R_+$, where $\xi(x) = \| \nabla F(x) - \nabla f_0(x) \|$. 

\begin{assumption} \label{assm_server_data}
There exists a finite constant $\bar{\xi}$ such that $\| \nabla f_0(x) - \nabla F(x) \|^2   \le \bar{\xi}^2$ for all $x\in \mathbb{R}^d$.
% \begin{align*}
%  \| \nabla f_0(x) - \nabla F(x) \|^2   \le \bar{\xi}^2, \quad \forall x\in \mathbb{R}^d.
% \end{align*}
\end{assumption}

\blue{ 
This assumption does {\bf not} imply that the server data distribution is similar to that of the
clients' aggregate data (although this would be an ideal situation). In other words, $\bar{\xi}^2$ is not necessarily small, and \underline{our analysis presented below} \underline{examines how this bound affects the performance of FSL}.}

Note that the uniform bounds in Assumptions~\ref{assm_globalGrad} and \ref{assm_server_data} are introduced to simplify presentation; what we need in our analysis is that the bounds hold for the sequence $\{x_t\}_{t\ge 0}$ generated by our algorithm. This holds, for example, when $\{x_t\}$ is bounded. \blue{Although those bounds are usually unknown, they quantify the extent of non-IIDness in clients' and server's data and facilitate our analysis.} %In practice, this is not restrictive given that the level set $\{ x~|~ f_i(x) < \nu\}$ is bounded for some $\nu > f_i(x_0)$ \blue{and a sufficiently small and/or diminishing step size is used.} %\la{I am not sure why this is sufficient. This requires that $x$ remains in a set where $f_i(x) < \nu$, i.e., the set is invariant or something.}\blue{Because we are using (stochastic) gradient descent to minimize loss function and usually stepsize is assumed to be sufficiently small. -VS} \la{But, this does not necessarily guarantee that without probability 1 $x_t$ would remain in a pre-selected compact set without additional assumptions due to the randomness in the gradient estimates (unless $x_t$ is required to be in a compact set to begin with.}

Finally, we assume that the clients and the server can obtain unbiased noisy estimates of the gradient of their local loss functions for updating their local models.This assumption is also standard in stochastic optimization.

\begin{assumption}\label{assm_localGrad}
All clients $i \in [N]$ and the server ($i =0$) have access to unbiased estimates $g_i$ of $\nabla f_i$ with variance bounded by $\sigma_i^2$. For simplicity, we further assume that $\sigma_i = \sigma$ for all $i \in [N]$. 
\end{assumption}

Here, $\sigma_i$ only bounds the variance of noisy estimates for the clients and the server. Note that it is not uncommon in practice that the server has enough computing capability so that it can obtain gradient estimates with small variance. For example, when $n_0$ is sufficiently small, the server may utilize all samples to compute the exact gradient for each update, in which case we have $\sigma_0= 0$. 
%When $n_0$ is very small, it is reasonable to assume $\sigma_0 \approx 0$ or $\sigma_0 \ll \sigma$. \la{See my comment about the clarification for update and step.}\blue{See my answer above.-VS}
%\la{This is somewhat confusing to me. This last comment gives an impression that the estimated gradient of the server is obtained using the entire dataset $\mathcal{D}_0$ which is not necessarily what we assume. Also, in the extreme case when the original SGD is used in which only a single sample is used to compute the gradient, it is not entirely obvious to me that we would have $\sigma_0 \ll \sigma$. For example, consider the cifar-10 dataset with each of 10 clients hosting only one label (i.e., $C = 1$ in our setup). In this case, the dataset for each client is much more homogeneous than that of the server, and it may be that $\sigma_0$ is larger than $\sigma$.}

% ==================================================================
Let us now briefly describe the idea to prove the convergence of FSL. For the special case when $N=1$, $K=1$, and $\sigma_i=0$, FSL simply reduces to the incremental gradient method. For a general case, we can relate the sequence $\{x_t\}$ generated by our algorithm to that of a centralized incremental stochastic gradient method applied to the global loss function $\tilde{F}$, where the difference between the two is caused by client sampling and local learning steps. As a result, by choosing step sizes sufficiently small in connection with the bounds in Assumptions~\ref{assm_LossFunc}--\ref{assm_localGrad}, we can bound such differences and relate the convergence of the two algorithms. 

Our first result below demonstrates the progress in one global round of FSL, which resembles that of a centralized stochastic gradient algorithm. Here, we use $\Ept_t[\cdot]$ to denote the conditional expectation\footnote{This conditional expectation is given the $\sigma$-algebra generated by random variables that determine $x_t$.} over the randomness at round $t$ and define the following: 
$$\textstyle\rho_s = \frac{N-S}{N-1},\quad \Psi = \frac{\gamma^2\sigma_0^2}{K_0} + \frac{\sigma^2}{KS} + \frac{\rho_s G^2}{S}.$$

\begin{theorem} \label{thm_descent_lem}
Suppose that Assumptions \ref{assm_LossFunc}--\ref{assm_localGrad} hold, and let the step sizes satisfy 
\begin{align}
    K \eta_l \eta_g = K_0\eta_0 \le \textstyle \frac{1}{4L}\min\{\eta_g, 1/\gamma, 8/9\} . \label{eq_stepsize_mainCond}
\end{align}
Then, the following holds for any $t\ge 0$: 
\begin{align}\label{eq_descent_lemma}
    \Ept_{t} [\tilde{F}({x}_{t+1})] 
    \le \tilde{F}(x_t) - K_0\eta_0 h\|\nabla \tilde F(x_t)\|^2  + 5K_0^2\eta_0^2 L\Psi + 8K_0^3\eta_0^3L^2\big( \textstyle\frac{\gamma\kappa}{1+\gamma} \bar{\xi^2} + \Phi \big)
\end{align}
where $h = \gamma + \frac{1}{2} - K_0\eta_0 L\frac{1+\gamma}{2}\big( 3\gamma+3 + 16\kappa K_0\eta_0L \big)$, $\kappa = \max\{4\gamma^3, 2\eta_g^{-2} + 3\gamma^2\}$ and 
$\Phi = \gamma^2\Psi + \frac{2\gamma^2}{S}\big( \frac{\sigma^2}{K} + \rho_s G^2 \big) + \eta_g^{-2}\big( 2G^2+\frac{\sigma^2}{K}\big)$.
\end{theorem}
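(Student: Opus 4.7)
The plan is to leverage $L$-smoothness of $\tilde F$ together with the crucial scaling identity $K\eta_l\eta_g = K_0\eta_0$ and the linearity $(1+\gamma)\nabla\tilde F = \nabla F + \gamma\nabla f_0$, which together make the aggregated FSL update look, in expectation, like a single centralized SGD step on $\tilde F$ with effective step size $K_0\eta_0$. The argument is essentially a two-phase descent lemma for local SGD, but with two nonstandard features: (i) the server phase uses the biased surrogate $f_0$ (Assumption~\ref{assm_server_data}), which injects a $\bar\xi^2$ term, and (ii) the server iterates are initialized at the already-perturbed $\bar x_t$ rather than at $x_t$, so I must carry the client-phase drift through to the server phase.

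First I would decompose $x_{t+1}-x_t = (\bar x_t - x_t) + (x_{t+1}-\bar x_t)$ and write the first piece as $-\eta_l\eta_g\sum_{i\in\mathcal{S}}\sum_{k=0}^{K-1} g_i(y^{(i)}_{t,k})/S$ and the second as $-\gamma\eta_0\sum_{k=0}^{K_0-1} g_0(w_{t,k})$, where $y^{(i)}_{t,k}$ and $w_{t,k}$ denote the inner \textsc{LocalSGD} iterates of client $i$ and of the server, respectively. Applying smoothness of $\tilde F$ gives $\tilde F(x_{t+1})\le \tilde F(x_t)+\langle\nabla\tilde F(x_t),x_{t+1}-x_t\rangle + \tfrac{L}{2}\|x_{t+1}-x_t\|^2$, and I take $\Ept_t[\cdot]$. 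By unbiasedness of the stochastic gradients (Assumption~\ref{assm_localGrad}), the expected inner product splits into a principal descent term $-K_0\eta_0(1+\gamma)\|\nabla\tilde F(x_t)\|^2$ via the identity above, plus two bias error terms corresponding to the client drift $\nabla f_i(y^{(i)}_{t,k})-\nabla f_i(x_t)$ and the server drift $\nabla f_0(w_{t,k})-\nabla f_0(x_t)$. Each of these is controlled by $L$-smoothness followed by Young's inequality, which splits them into a fraction of $\|\nabla\tilde F(x_t)\|^2$ (to be absorbed in $h$) and a residual proportional to squared iterate deviations.

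Second I would establish the standard local-SGD drift bounds. For the client phase, unrolling the \textsc{LocalSGD} recursion and applying Young plus Assumptions~\ref{assm_globalGrad}--\ref{assm_localGrad} gives $\sum_{k=0}^{K-1}\Ept_t\|y^{(i)}_{t,k}-x_t\|^2 = \mathcal{O}(K^2\eta_l^2)(\|\nabla F(x_t)\|^2 + G^2 + \sigma^2/K)$. Averaging over the sampled clients $\mathcal{S}$ and using the hypergeometric-type variance formula for sampling without replacement contributes the factor $\rho_s/S$ in $\Psi$ and gives $\Ept_t\|\bar x_t - x_t\|^2 = \mathcal{O}(\eta_g^2K^2\eta_l^2)(\|\nabla F(x_t)\|^2 + G^2 + \sigma^2/(KS) + \rho_sG^2/S)$. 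For the server phase, the same unrolling yields $\sum_{k=0}^{K_0-1}\Ept_t\|w_{t,k}-\bar x_t\|^2 = \mathcal{O}(K_0^2\gamma^2\eta_0^2)(\|\nabla f_0(\bar x_t)\|^2 + \sigma_0^2/K_0)$; I would then use $\|\nabla f_0(\bar x_t)\|^2 \le 2L^2\|\bar x_t - x_t\|^2 + 2\|\nabla f_0(x_t)\|^2$ together with Assumption~\ref{assm_server_data} to replace $\|\nabla f_0(x_t)\|^2$ by $2\|\nabla F(x_t)\|^2 + 2\bar\xi^2$, which is precisely where the $\gamma\kappa\bar\xi^2/(1+\gamma)$ term in \eqref{eq_descent_lemma} originates.

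Third I would bound $\Ept_t\|x_{t+1}-x_t\|^2 \le 2\Ept_t\|\bar x_t - x_t\|^2 + 2\Ept_t\|x_{t+1}-\bar x_t\|^2$, where the variance of the client average contributes $\sigma^2/(KS) + \rho_sG^2/S$ and the server contributes $\gamma^2\sigma_0^2/K_0$, matching $\Psi$. Summing all contributions, the coefficient of $-\|\nabla\tilde F(x_t)\|^2$ becomes $K_0\eta_0(\gamma + 1/2) - K_0^2\eta_0^2 L\frac{1+\gamma}{2}(3\gamma+3+16\kappa K_0\eta_0 L)$, exactly matching the definition of $h$. The step-size condition \eqref{eq_stepsize_mainCond} is precisely what is needed to close the drift recursions, to make the second-order terms $\mathcal{O}(K_0^2\eta_0^2 L)$ dominate the third-order $\mathcal{O}(K_0^3\eta_0^3L^2)$ ones, and to ensure $h>0$. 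The principal obstacle will be the interaction between the client-phase drift and the server-phase drift through $\bar x_t$: propagating a client-side $\|\bar x_t - x_t\|^2$ bound (which scales like $\eta_g^2$) into the server drift while balancing it against the intrinsic $\gamma^2$ scaling of the server updates forces the Young-inequality weights to be chosen so that the worst case becomes $\kappa = \max\{4\gamma^3,\,2\eta_g^{-2}+3\gamma^2\}$, which is the source of the somewhat unusual constant in $\kappa$ and of the $\eta_g^{-2}$ factor appearing in $\Phi$.
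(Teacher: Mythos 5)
Your proposal follows essentially the same route as the paper: $L$-smoothness of $\tilde F$, the scaling $K\eta_l\eta_g=K_0\eta_0$ to expose a centralized step on $\tilde F$, separate client and server drift lemmas, and propagation of $\|\bar x_t - x_t\|^2$ into the server drift, which is indeed the source of the $\eta_g^{-2}$ term in $\Phi$ and the max in $\kappa$. The one small divergence is in extracting the $\bar\xi^2$ term: rather than the Young-type bound $\|\nabla f_0(x_t)\|^2\le 2\|\nabla F(x_t)\|^2+2\bar\xi^2$ you propose (which would inflate the constants and not reproduce $h$ and $\kappa$ exactly), the paper uses the exact algebraic identity $\|\nabla F(x_t)\|^2+\gamma\|\nabla f_0(x_t)\|^2=(1+\gamma)\|\nabla\tilde F(x_t)\|^2+\frac{\gamma}{1+\gamma}\|\nabla F(x_t)-\nabla f_0(x_t)\|^2$, which yields the stated coefficient $\frac{\gamma\kappa}{1+\gamma}\bar\xi^2$ without loss.
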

\begin{IEEEproof}
See Appendix~\ref{proof_descent_FL}.
\end{IEEEproof}

We have the following remarks. 
% {\bf Remark:} 
First, condition \eqref{eq_stepsize_mainCond} means that the server and the clients use the same effective step size per round, which is sufficiently small in the order of $\mathcal{O}(1/L(1+\gamma))$. Second, by choosing a sufficiently small $K_0\eta_0$, we have $h \ge 1/2$; in fact, it can be shown that if 
\begin{equation}
    K_0\eta_0 \le  \textstyle\frac{1}{8L(\gamma+1)}\min\big\{ 1,\frac{(\gamma+1)^2}{2\kappa} \big\},  \label{eq_stepsize_mainCond2}
\end{equation}
then $h \ge \frac{3\gamma+1}{4}$. Thus, when the current model is far from a stationary point and $\|\nabla \tilde F(x_t)\|^2$ is large, it is desirable to use large $\gamma$. But, if $\gamma$ is too large, the last two terms in \eqref{eq_descent_lemma} will likely dominate and prevent the algorithm from making significant improvements, potentially causing it to diverge. Although this suggests that one could use a diminishing~$\gamma$, we consider a fixed $\gamma$ in our analysis for simplicity.

Using the result above, we can quantify the overall progress of the algorithm as follows. 
\begin{theorem}\label{thm_progress_overall}
Suppose Assumptions \ref{assm_LossFunc}--\ref{assm_localGrad} and condition \eqref{eq_stepsize_mainCond} hold. Let $\mathcal{E}_T \!=\! \min_{t \le T-1} \Ept \|\nabla \tilde F(x_t)\|^2$. Then,
\begin{align*}
    h\mathcal{E}_T 
    \le \textstyle \frac{\tilde{D}_0}{TK_0\eta_0}   + 5K_0\eta_0 L\Psi  + 8K_0^2\eta_0^2L^2\big( \textstyle\frac{\gamma\kappa}{1+\gamma} \bar{\xi}^2 + \Phi \big)
    %\label{eq_gradNorm}
\end{align*}
for any $T > 0$, where $\tilde{D}_0 =\tilde{F}(x_0) - \tilde{F}^*$. 
\end{theorem}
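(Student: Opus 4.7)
The plan is to obtain Theorem~2 as a direct telescoping consequence of the per-round descent inequality in Theorem~1. First I would take the total expectation of both sides of \eqref{eq_descent_lemma}; by the tower property this replaces $\tilde{F}(x_t)$ and $\|\nabla\tilde F(x_t)\|^2$ on the right by their unconditional expectations, while leaving the deterministic constants $K_0$, $\eta_0$, $h$, $L$, $\Psi$, $\kappa$, $\gamma$, $\bar{\xi}^2$, and $\Phi$ untouched. The key observation for this step is that $h$ depends only on the step sizes and on $L$, $\gamma$, $\kappa$, so it is constant in $t$ and can eventually be pulled out of a sum over~$t$.

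Next I would rearrange to isolate $K_0\eta_0\,h\,\mathbb{E}\|\nabla\tilde F(x_t)\|^2$ on the left and sum the resulting inequality over $t=0,\ldots,T-1$. The differences $\mathbb{E}[\tilde F(x_t)]-\mathbb{E}[\tilde F(x_{t+1})]$ telescope to $\tilde F(x_0)-\mathbb{E}[\tilde F(x_T)]$, which is at most $\tilde D_0 = \tilde F(x_0)-\tilde F^\ast$ since $\tilde F\ge \tilde F^\ast$ pointwise. The two non-negative error terms $5K_0^2\eta_0^2 L\Psi$ and $8K_0^3\eta_0^3 L^2\bigl(\tfrac{\gamma\kappa}{1+\gamma}\bar{\xi}^2+\Phi\bigr)$ are constants in $t$, so summing merely multiplies each by~$T$.

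To finish, I would apply the trivial bound $\mathcal{E}_T \le \tfrac{1}{T}\sum_{t=0}^{T-1}\mathbb{E}\|\nabla\tilde F(x_t)\|^2$ and divide both sides by $TK_0\eta_0$; the $\tilde D_0$ term becomes $\tilde D_0/(TK_0\eta_0)$ and the powers of $K_0\eta_0$ in the accumulated error terms each drop by one, giving precisely the claimed bound. The step-size condition \eqref{eq_stepsize_mainCond} plays no new role — it is inherited verbatim from Theorem~1 to guarantee the descent inequality. There is no substantive obstacle here; the only care required is the initial tower-property step (so that $\mathbb{E}_t$ becomes $\mathbb{E}$ before summing) and verifying that $h$ is genuinely deterministic and $t$-independent before dividing through.
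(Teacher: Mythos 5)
Your proposal is correct and matches the paper's own argument: both rearrange the descent inequality of Theorem~\ref{thm_descent_lem}, telescope the expected function-value differences over $t=0,\ldots,T-1$, bound the telescoped sum by $\tilde D_0$ using $\tilde F \ge \tilde F^*$, and conclude via $\mathcal{E}_T \le \frac{1}{T}\sum_{t}\Ept\|\nabla\tilde F(x_t)\|^2$ after dividing by $TK_0\eta_0$. No gaps.
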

%\begin{proof}
%First, from \eqref{eq_descent_lemma} we obtain 
%\begin{align}
%     &h\Ept \|\nabla \tilde F(x_t)\|^2 \nonumber\\
%     &\le \frac{\Ept \tilde{F}(x_t) - \Ept \tilde{F}({x}_{t+1})}{K_0\eta_0}   + 5K_0\eta_0 L\Psi  + 8K_0^2\eta_0^2L^2\big( \textstyle\frac{\gamma\kappa}{1+\gamma} \bar{\xi}^2 + \Phi \big) \nonumber\\
%     &= \frac{\Ept \big[ \tilde{F}(x_t) - \tilde{F}^*\big] - \Ept \big[ \tilde{F}({x}_{t+1}) - \tilde{F}^*\big] }{K_0\eta_0}   + 5K_0\eta_0 L\Psi  + 8K_0^2\eta_0^2L^2\big( \textstyle\frac{\gamma\kappa}{1+\gamma} \bar{\xi}^2 + \Phi \big) . \nonumber %\label{eq_descent_lemma2}
%\end{align}
%Summing this relation over $t = 0,\ldots,T-1$ and then simplifying terms yields \eqref{eq_gradNorm}.  
%\end{proof}

Note that 
%from the definitions of $\Psi$ and $\Phi$ in and just before  Theorem~\ref{thm_descent_lem}, respectively, 
$\Phi = \frac{\gamma^4\sigma_0^2}{K_0} + \phi$ with $\phi =  \frac{3\gamma^2}{S}\big( \rho_s G^2 + \frac{\sigma^2}{K} \big) + \frac{1}{\eta_g^2} \big( 2G^2+\frac{\sigma^2}{K}\big)$. 
% \begin{align*}
% \phi =  \frac{3\gamma^2}{S}\Big( \rho_s G^2 + \frac{\sigma^2}{K} \Big) + \frac{1}{\eta_g^2} \big( 2G^2+\frac{\sigma^2}{K}\big).
% \end{align*}
When $S \ll N$, we have $\rho_s  \approx 1$, and $\phi = \Theta \big( ( \frac{\gamma^2}{S} + \frac{1}{\eta_g^2}) ( G^2+\frac{\sigma^2}{K} ) \big)$. %\la{What is the definition of `$\sim$' here?}
% \paragraph{Usual FL without SL} This corresponds to $f_0 \equiv 0$ and $\eta_g = 1$, in which case one has $\sigma_0 = L_0 = 0$, and 
% \begin{align*}
% c_1 &= L \bar\xi^2\Big( 1+ \frac{2B^2}{3} + \frac{3\rho_s(B^2-1)}{S} \Big) + \frac{3L}{S}\Big(\frac{\sigma^2}{K} + \rho_sG^2 \Big)\\
% c_2 &=  8L^2\big( 2G^2+\frac{\sigma^2}{K}\big)
% \end{align*}
% with $\bar\xi^2 = \max_{t \le T} \| \nabla F(x_t) \|^2$. 
%Clearly, choosing $\eta_g = 1$ as done in the usual FL algorithm is not guaranteed to yield a good upper bound $\phi$. \la{Not sure you what you mean by upper bound $\phi$.} In fact, 
As both $\phi$ and $\kappa$ decrease in $\eta_g$, in principle we can select large $\eta_g$ to reduce the upper bound in Theorem~\ref{thm_progress_overall}. Here, since we are interested in scenarios where $\gamma = \mathcal{O}(1)$ and $\rho_s \approx 1$,  %\la{Is this meant to be $\rho_s$, not $\rho$?}, 
$\eta_g$ need not be too large either. Based on these observations, let us consider $\eta_g = \Theta(\sqrt{S})$, which gives $\phi = \Theta\big(\frac{\gamma^2+1}{S}( G^2+\frac{\sigma^2}{K}) \big)$ and thus 
$\Phi = \mathcal{O}\big((\gamma^2+1) \Psi \big)$. Under these conditions, we have the following result.

\begin{corollary}   \label{coro:1}
%Assume that condition \eqref{eq_stepsize_mainCond2} is satisfied. 
If $\eta_g = \Theta(\sqrt{S})$, $K_0 = \Theta(K)$,  $K_0\eta_0 = \Theta \big( \frac{\sqrt{KS}}{\sqrt{LT}(\gamma+1)} \big)$, and condition \eqref{eq_stepsize_mainCond2} hold, then 
    \begin{align}\label{eqConvergenceError}
        \mathcal{E}_T \!=\! \mathcal{O} \textstyle \left( \frac{\sqrt{L}}{\sqrt{KST}}   \big( \tilde{D}_0 \!+\! \frac{M^2}{(\gamma+1)^2} \big)  + \frac{L}{T} \big( \frac{M^2}{1+\gamma} \!+\! \frac{\gamma\kappa KS\bar{\xi}^2}{(1+\gamma)^4} \big) \right)
    \end{align}
    with $M^2 = \gamma^2\sigma_0^2S +\sigma^2 + \rho_s KG^2$.
\end{corollary}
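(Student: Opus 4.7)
The plan is to derive the bound directly from Theorem~\ref{thm_progress_overall} via an order-of-magnitude calculation under the prescribed scalings; no new technical estimate is needed beyond those already embedded in $\Psi$, $\Phi$, $\kappa$, and $h$. First, I would invoke the hypothesis that condition~\eqref{eq_stepsize_mainCond2} holds, which (as noted right after Theorem~\ref{thm_descent_lem}) forces $h \ge \tfrac{3\gamma+1}{4}$, so $h = \Theta(\gamma+1)$ and dividing the bound of Theorem~\ref{thm_progress_overall} by $h$ only costs a factor $\Theta((\gamma+1)^{-1})$ on each summand.

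Next, I would rewrite $\Psi$ and $\Phi$ in terms of $M^2$. Using $K_0 = \Theta(K)$,
\[
\Psi \;=\; \frac{\gamma^2\sigma_0^2}{K_0} + \frac{\sigma^2}{KS} + \frac{\rho_s G^2}{S} \;=\; \Theta\!\left(\frac{\gamma^2\sigma_0^2 S + \sigma^2 + \rho_s K G^2}{KS}\right) \;=\; \Theta\!\left(\frac{M^2}{KS}\right).
\]
With $\eta_g = \Theta(\sqrt{S})$, the residual piece of $\Phi$ that the paper denotes $\phi$ equals $\Theta\!\big(\tfrac{\gamma^2+1}{S}(G^2+\sigma^2/K)\big)$, and because $\Psi$ already contains the terms $G^2/S$ and $\sigma^2/(KS)$, it follows that $\Phi = \mathcal{O}((1+\gamma)^2\Psi) = \mathcal{O}\!\big((1+\gamma)^2 M^2/(KS)\big)$, as the paper remarks just before the Corollary.

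Third, I would substitute $K_0\eta_0 = \Theta\!\big(\tfrac{\sqrt{KS}}{\sqrt{LT}(\gamma+1)}\big)$ into each of the four pieces on the right-hand side of Theorem~\ref{thm_progress_overall}. A direct calculation gives: (i)~$\tfrac{\tilde D_0}{TK_0\eta_0} = \Theta\!\big(\tilde D_0(\gamma+1)\sqrt{L/(KST)}\big)$; (ii)~$K_0\eta_0 L \Psi = \Theta\!\big(\sqrt{L}\,M^2/((\gamma+1)\sqrt{KST})\big)$; (iii)~$K_0^2\eta_0^2 L^2 \Phi$ collapses to $\Theta(LM^2/T)$ because the factor $KS/(\gamma+1)^2$ inside $K_0^2\eta_0^2$ exactly cancels the factor $(\gamma+1)^2/(KS)$ inside $\Phi$; and (iv)~the $\bar{\xi}^2$ term becomes $\Theta\!\big(\gamma\kappa KSL\bar{\xi}^2/(T(\gamma+1)^3)\big)$. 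Dividing all four through by $h=\Theta(\gamma+1)$ and grouping by their $T$-dependence reproduces~\eqref{eqConvergenceError} term by term: (i) and (ii) combine into the $1/\sqrt{T}$ bracket, while (iii) and (iv) combine into the $1/T$ bracket.

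The hard part will not be any of these algebraic manipulations. The main point to be careful about is feasibility of the stated scaling: the chosen $K_0\eta_0$ must still obey~\eqref{eq_stepsize_mainCond2}, which forces $T = \Omega\!\big(LKS\max\{1,\kappa/(\gamma+1)^2\}\big)$; for smaller $T$ the $1/\sqrt{T}$ term in~\eqref{eqConvergenceError} is already $\Omega(1)$ and the bound is vacuous. I would therefore state this regime assumption explicitly and treat it as part of the implicit constants in the $\Theta$ and $\mathcal{O}$ notation used throughout the derivation.
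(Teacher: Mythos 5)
Your proposal is correct and follows essentially the same route as the paper: lower-bound $h$ by $\Omega(\gamma+1)$ via condition \eqref{eq_stepsize_mainCond2}, divide the bound of Theorem~\ref{thm_progress_overall} by $h$, and substitute the stated scalings of $\eta_g$, $K_0$, and $K_0\eta_0$ (using $\Psi=\Theta(M^2/(KS))$ and $\Phi=\mathcal{O}((1+\gamma)^2\Psi)$, which the paper establishes in the remarks preceding the corollary). Your added observation about the feasibility constraint on $T$ implied by \eqref{eq_stepsize_mainCond2} is a reasonable refinement that the paper leaves implicit.
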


%\begin{proof}
%Note that under condition \eqref{eq_stepsize_mainCond2}, we have $h = \Omega(\gamma+1)$. Thus, the proof follows immediately from dividing both sides of \eqref{eq_gradNorm} by $h$ and then using the step size conditions in each statement. 
%\end{proof}

% \begin{theorem}
% Let  $\eta_g = \Theta(\sqrt{S})$ and condition \eqref{eq_stepsize_mainCond} hold. 
% \begin{itemize}
%     \item  If $K_0\eta_0 = \Theta(\frac{\sqrt{S}}{\sqrt{LT}(\gamma+1)})$ then 
%     \begin{align}
%     \min_{0\le t \le T-1} \Ept \|\nabla \tilde F(x_t)\|^2 
%     &= \mathcal{O}\Big( \frac{\sqrt{L}}{\sqrt{ST}}   \big( \tilde{D}_0+\frac{\tilde{G}^2}{(\gamma+1)^2}\big)  + \frac{L}{T} \big( \frac{\tilde{G}^2}{1+\gamma} + \frac{\gamma\kappa S\xi^2}{(1+\gamma)^4} \big) \Big) \label{eq_convergence_rate}
% \end{align}
% where $\tilde{D}_0 =\tilde{F}(x_0) - \tilde{F}^*$, $\tilde{G}^2 = \frac{\gamma^2\sigma_0^2S}{K_0} + \frac{\sigma^2}{K}+ \rho_sG^2$.

%     \item If $K_0 = \Theta(K)$ and $K_0\eta_0 = \Theta(\frac{\sqrt{KS}}{\sqrt{LT}(\gamma+1)})$, then 
%     \begin{align}
%     \min_{0\le t \le T-1} \Ept \|\nabla \tilde F(x_t)\|^2 
%     &= \mathcal{O}\Big( \frac{\sqrt{L}}{\sqrt{KST}}   \big( \tilde{D}_0+\frac{M^2}{(\gamma+1)^2}\big)  + \frac{L}{T} \big( \frac{M^2}{1+\gamma} + \frac{\gamma\kappa KS\xi^2}{(1+\gamma)^4} \big) \Big) \label{eq_convergence_rate}
% \end{align}
% with $M^2 = \gamma^2\sigma_0^2S +\sigma^2 + \rho_s KG^2$.
% \end{itemize}

% \end{theorem}
Let us make the following remarks. First, the above sublinear rate of $\mathcal{O}(\frac{1}{\sqrt{T}})$ is to be expected for FL with a nonconvex loss function and is also similar to that of the usual SGD method. 
%They differ only in scaling constants related to $S$ and $K$ due to different step sizes. 

\blue{ 
Second, the FedAvg~\cite{McMahan2017} is a special case of FSL with $\gamma = 0$, i.e., without server learning. In this case, $M^2=\sigma^2 + \rho_s KG^2$ and thus $\mathcal{E}_T = \mathcal{O} \big(\frac{\sqrt{L}}{\sqrt{KST}}   \big( \tilde{D}_0+ M^2 \big)  + \frac{L}{T} M^2 \big)$ is large when clients' data is highly nonhomogemeous and $G^2$ is large. In this case, increasing $\gamma$ can alleviate the adverse effect of non-IID data, as the dependence on $G^2$ scales as $\mathcal{O}\big(\frac{G^2}{\sqrt{T}(1+\gamma)^2} + \frac{G^2}{T(1+\gamma)} \big)$, assuming that the last term in \eqref{eqConvergenceError} is not dominant. This happens when $\bar{\xi}^2$ is small compared to $G^2$ and $\gamma$ is not too large, 
especially in cases of our interest where $\sigma_0 \ll \sigma$ and $\bar{\xi}^2 \ll G^2$. We discuss two examples scenarios:
%, \la{I suggest that we lose the rest of this sentence because it is not very clear what is meant by it.} one with a rigorous bound but impractical and the other practical but not rigorous: 
(1) The server samples are taken from $\mathcal{D}$ via uniform sampling without replacement\footnote{In this case, $\bar{\xi}$ tends to decrease with the size of $\mathcal{D}_0$ according to 
$\Ept_{\mathcal{D}_0} \| \nabla f_0 (x) - \nabla F(x)\|^2 = 
\big( \frac{n}{n_0} - 1 \big)\frac{\tilde\sigma_0^2(x)}{n-1}$, 
where $\tilde\sigma_0^2(x) = \frac{1}{n}\sum_{s\in \mathcal{D}}\|\nabla_x \ell(x,s) - \nabla F(x)\|^2$ is the population variance.} (2) In the applications we target, such as AVs, the
manufacturers can likely ensure that the samples collected by test vehicles are more diverse and representative (than those of
a typical client) as the collection process is under their control. Thus, it is likely that the server’s data would
be more representative than those of a typical “single” client and $\bar{\xi}^2$ is likely much smaller than $G^2$. We will experiment with these scenarios in the following section.}

Third, note that $\tilde{D}_0 =\frac{D_0 + \gamma(f_0(x_0) - f_0(\tilde{x}^*))}{1+\gamma}$, where $D_0 = F(x_0) - F(\tilde{x}^*)$ and $\tilde{x}^*$ is any global minimizer of $\tilde{F}$. If $x_0$ is chosen far from $\tilde{x}^*$ or a stationary point and the distributions of $\mathcal{D}_0$ and $\mathcal{D}$ are similar, it is likely that $\tilde{D}_0$ is large and $\tilde{D}_0  \approx D_0$. On the other hand, if the server pre-trains its model using its own data so as to minimize $f_0$, then $\tilde{D}_0$ can be improved. In fact, because of small size of the server dataset, overfitting can happen and thus $f_0(x_0) \approx 0$ and $\tilde{D}_0 \le \frac{D_0}{1+\gamma}$. This shows that both pre-training and increasing $\gamma$ help.

%Next, let us further comment on the bounds in the corollary; take case (c) for example (other cases can be handled in a similar fashion). 
Forth, the first term of the bound in \eqref{eqConvergenceError} often dominates and scales as $\mathcal{O}(\frac{\tilde{D}_0 + \gamma^2\sigma_0^2S + \sigma^2}{\sqrt{KST}} + \frac{\rho_s\sqrt{K}G^2}{\sqrt{ST}})$. This implies that while increasing $K$ helps reduce the effect of stochastic noises and initialization, it increases client and server drifts and consequently amplifies the effect of non-IIDness (via the terms $\frac{\sqrt{K}G^2}{\sqrt{TS}}$ and $\frac{K \rho_s G^2}{T}+\frac{K\bar{\xi}^2}{T}$). Similarly, increasing $S$ will reduce the dominant term, which scales as  $\mathcal{O}(\frac{1}{\sqrt{S}})$, at the cost of slightly increasing the 
smaller term $\mathcal{O}(\frac{S\bar{\xi}^2}{T})$. %\red{(what do we mean by "improve the dominate term"?)}

Finally, let us remark on the optimality of the original loss. Since $\|\nabla F(x_t)\|^2  \le (1+\gamma)\|\nabla \tilde F(x_t)\|^2 +\frac{\gamma }{1+\gamma } \bar{\xi}^2$, it follows that $\min_{t \le T-1} \Ept \|\nabla F(x_t)\|^2 \le (1+\gamma)\mathcal{E}_T +\frac{\gamma }{1+\gamma } \bar{\xi}^2$. Here, $\mathcal{E}_T$ can be bounded using Corollary~\ref{coro:1}, while the second term affects the neighborhood to which the model converges. Thus, in principle, one should select $\gamma$ judiciously to trade off between these two terms. However, we show numerically in the next section that this can be done fairly easily. 

%\blue{TODO: how about choosing $\gamma$ based on $\bar{\xi}$ and $G$? For example, $\gamma$ is large when $\bar{\xi} \ll G$ (i.e., increase server learning weight when server data $\mathcal{D}_0$ is close to global distribution $\mathcal{D}$) and small when $ G \ll \bar{\xi}$ (i.e., reduce server learning when clients' data is IID or server data distribution is too far from global distribution).}\red{-reasonable, but impractical as it's hard to estimate $\bar{\xi}$ and $G$.-VS}

% ============================================================================================
\section{Experimental Results}
    \label{sec_experiment}
% \blue{TODO: 
% For experiments:}
% \begin{itemize}
%     \item Compare FLSL, FLSLp (nonincremental version), FL, Zhao
%     \item $\gamma = Nn_0/n, 0.5, 1 , 1.5, 2$
    
%     \item $\eta_l = 0.005, 0.01, 0.02$
    
%     \item $\eta_g = \sqrt{S}$ for FLSL and FLSLp but $1$ for FL and Zhao
%     \item $E_c = 1$, same batch size $B=100$ for both server and Clients
% \end{itemize}

We now illustrate the benefits of FSL  through experiments using two datasets CIFAR-10  \cite{Krizhevsky2009} and EMNIST \cite{cohen2017emnist}. 

\subsection{Setup}

\underline{\em Data and Model:} For CIFAR-10 and EMNIST, we use, respectively, 50k samples with 10 label classes and 108k samples with 45 label classes for training. Each dataset also has 10k samples for testing. For simplicity, we partition the $n$ training samples roughly evenly among $N$ clients so that each client~$i$ has $n_i \!=\! \frac{n}{N}$ samples of $C$ label classes. Each client will have $\frac{n_i}{C}$ samples per label class, selected uniformly at random without replacement from training data. 
We vary $C$ to study the effect of client data heterogeneity -- smaller $C$ represents more non-IID client data. We use neural networks with two convolutional layers and two dense hidden layers and cross-entropy as the loss function for training; see 
%our technical report \cite{FLSL_arXiv} 
Appendix~\ref{subsec_NN_models}  
for further details.

% \noindent \quad $\bullet$  We consider $N=10$ clients, each holding $n_i = 5,000$ samples of $C$ label classes. We vary $C$ over \{1, 2, 5, 10\}.

\underline{\em Methods} We compare our approach \texttt{FSL} against (1) Federated Learning FedAvg (\texttt{FL}) \cite{McMahan2017}, (2) \texttt{FL} combined with Data Sharing (\texttt{DS}) \cite{Zhao2018} that requires {\em sharing among all clients a common dataset} comprising samples uniformly distributed over classes, (3) \texttt{FedDyn} \cite{durmus2021federated} which requires additional client storage to retain their state, and (4) SCAFFOLD \cite{Karimireddy2020} that {\em doubles  communication overheads} compared to other methods. 
For \texttt{FSL}, we assume that the server dataset $\mathcal{D}_0$ has $n_0$ samples taken from the training data. To facilitate comparison, we use $\mathcal{D}_0$ as the dataset shared among  clients in \texttt{DS}. We also tested \texttt{FSL} with non-incremental SL (\S~\ref{sec_prob}), but put its results in 
%our technical report \cite{FLSL_arXiv} 
Appendix~\ref{subsec_compare_FSL_nonIncremental} 
for reference as it underperforms \texttt{FSL}.

\underline{\em Implementation}
Each client chosen by the server at each round trains its local model for $E_c$ epochs using local data with batch size $B$. In \texttt{FSL}, the server also updates its model 
%{\color{red}{[RL: Isn't this the global model that the server directly updates when using incremental server learning?] -VS: Yes, it is}} 
for $E_s$ epochs in each round using batch size $B_0$. Here, we fix $E_c=1$, $E_s =\lceil \frac{n}{N n_0}E_c \rceil$, $B_0=B$, $\eta_g = \sqrt{S}$, and set $\eta_0 = \sqrt{S}\eta_l K/K_0$.

\underline{\em Evaluations} 
We run all algorithms for $T=1,000$ rounds, and compare their test accuracy (averaged using a rolling window of size 20) and convergence time measured by \textit{rise time}, which we define as the first time the test accuracy reaches $90\%$ of the final accuracy. The reported numbers are the averages of 3 runs. 

We consider the following two scenarios (which mimic the settings in \cite{Zhao2018} and \cite{Karimireddy2020}, respectively): (1) $N$ is small, and server data are IID and of small size compared to client's data, % ($n_0 \ll n_i$), 
and (2) $N$ is large, server data are non-IID, and client data size is relatively small.

\subsection{Scenario 1}
    \label{subsec:NumericalResults}
Consider $(N,n_i,n_0)=(10, 5000, 500)$ for CIFAR-10 and $(45, 2400, 225)$ for EMNIST. Here, $\mathcal{D}_0$ has roughly $\frac{n_0}{C}$ samples per label class, sampled without replacement from  $\mathcal{D}$. We study the role of different parameters in \texttt{FSL} and compare it against \texttt{FL} and \texttt{DS}. Both \texttt{DS} and \texttt{FSL} use a pretrain step where the server trains its local model using SGD with learning rate of $0.01$ for $500$ epochs over its data $\mathcal{D}_0$ with batch size $B$. We varied $\gamma \in \{  \frac{Nn_0}{n}, 0.5, 1 , 1.5, 2 \}$; note that when $\gamma = \frac{Nn_0}{n}$, \texttt{FSL} has the same (global) objective as \texttt{DS}.

\underline{\em Effects of Client Data Distributions:}
Fig.~\ref{fig_emnist_varyC} shows the test accuracy as we vary $C$ to create different levels of non-IIDness.  We have the following observations. 

First, all algorithms achieve a similar final accuracy in the IID case ($C=45$ for EMNIST and $C=10$ for CIFAR-10). But, when client data become more non-IID as $C$ decreases, \texttt{FL} suffers significantly in both accuracy and convergence time, which is expected and well reported in the literature. 
Second, \texttt{DS} greatly improves over \texttt{FL}, but has a similar convergence property: slower learning with wide oscillations. This is to be expected as \texttt{DS} is essentially \texttt{FL} where each client has an additional small set of shared data. 
Third, in all cases, \texttt{FSL} provides the highest accuracy and fastest convergence with considerable acceleration at the beginning and much smaller oscillations in accuracy, thanks to only a small dataset at the server (which is about $0.21\%$ of training data for EMNIST and $1\%$ for CIFAR-10). 
Fourth, \texttt{FSL} performs fairly consistently for a range of $\gamma$ values, suggesting that fine tuning might be unnecessary. 
Finally, although we use a pretrained model for \texttt{FSL} and \texttt{DS} but not \texttt{FL}, we show in 
%our technical report \cite{FLSL_arXiv} 
Appendix~\ref{subsec_wo_Pretrain} 
that similar observations can be obtained when \texttt{FSL}, \texttt{DS}, and \texttt{FL} all use the same initial model. In fact, FSL provides more significant acceleration, even in the IID cases, whereas \texttt{DS} offers little to no benefits over  \texttt{FL}.

\begin{figure}[tb]
\centering
\includegraphics[scale=0.55]{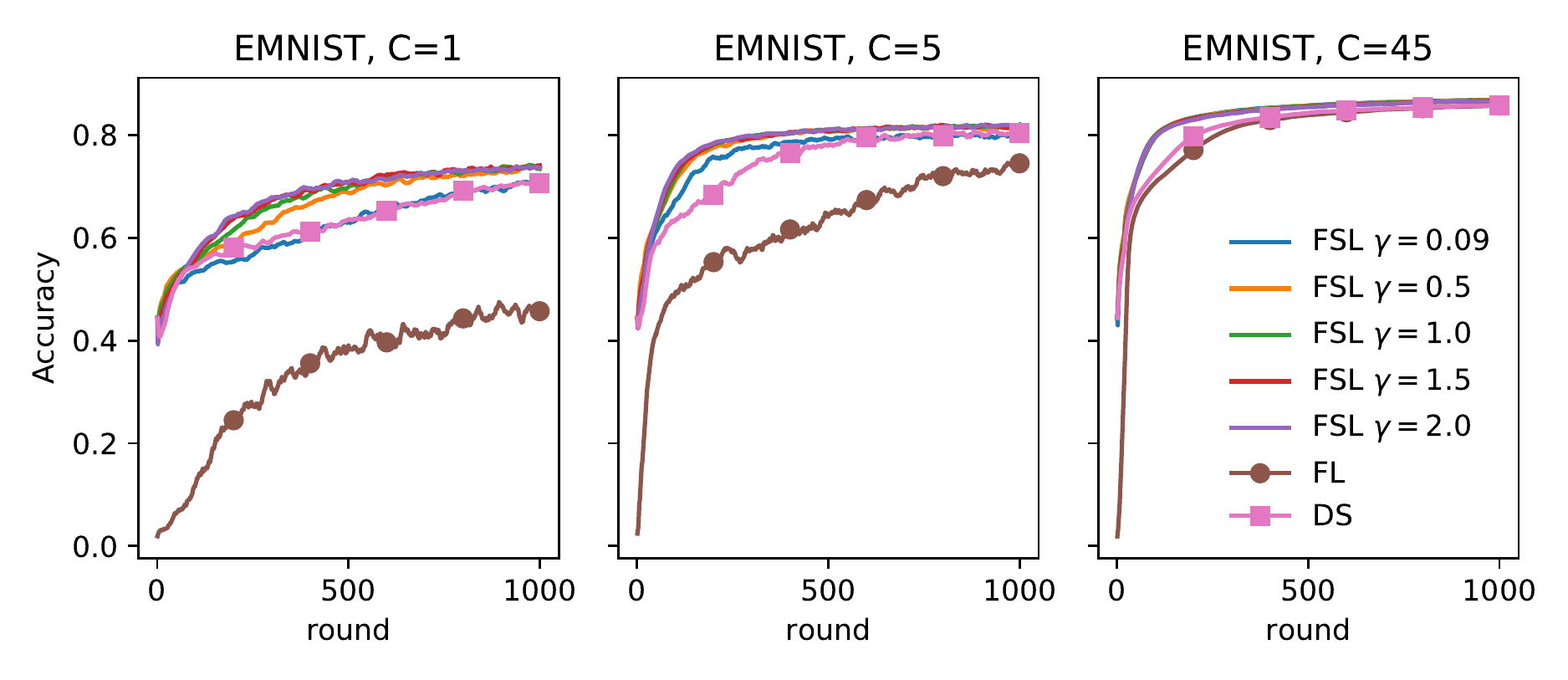}
    \includegraphics[scale=0.55]{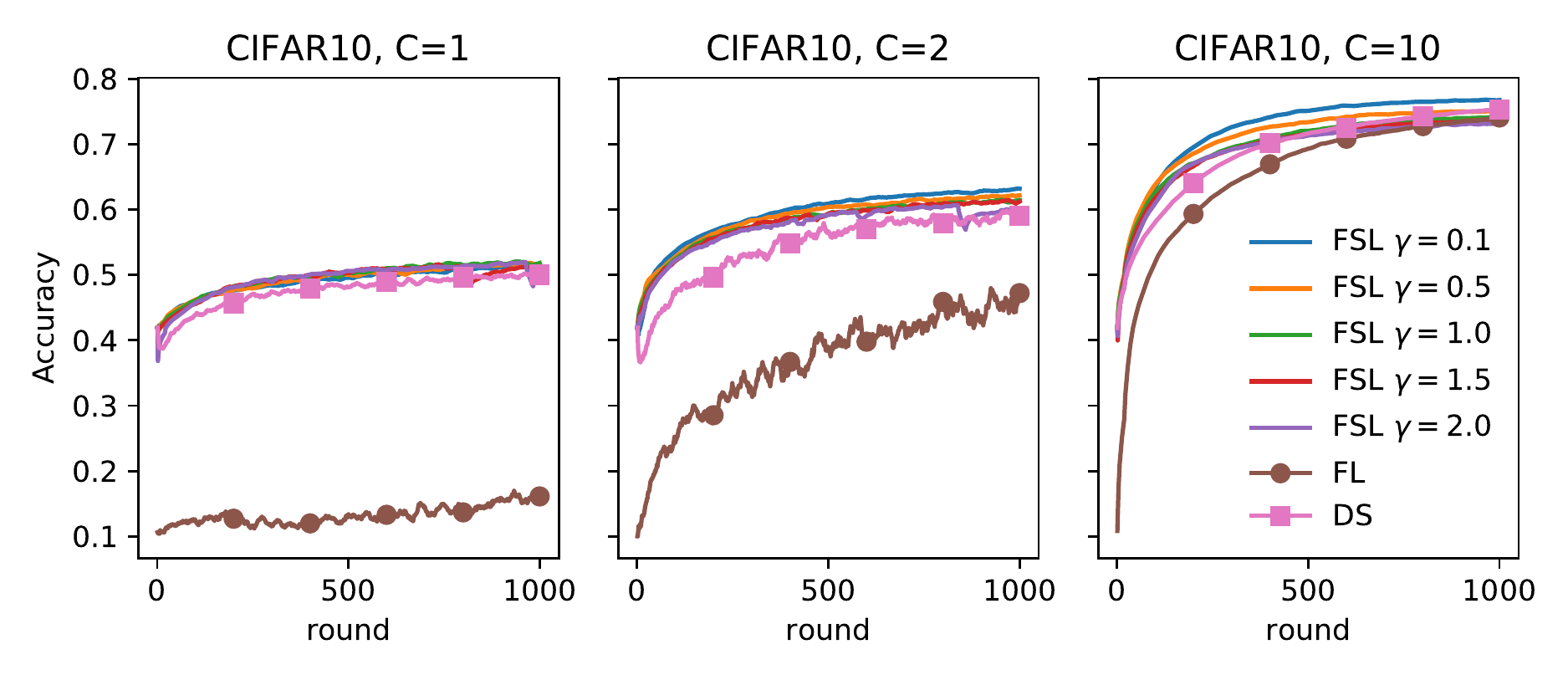}
    \vskip -0.1in
    \caption{Test accuracy for different values of $C$ and $\gamma$, where $n_0 = 225$, $S = 5$ and $\eta_l = 0.01$ for EMNIST, and $n_0 = 500$, $S = 4$ and $\eta_l = 0.01$ for CIFAR-10.}
    \label{fig_emnist_varyC}
    \vskip -0.1in
\end{figure}

\begin{figure*}[!t]
    \centering
    \includegraphics[scale=0.45]{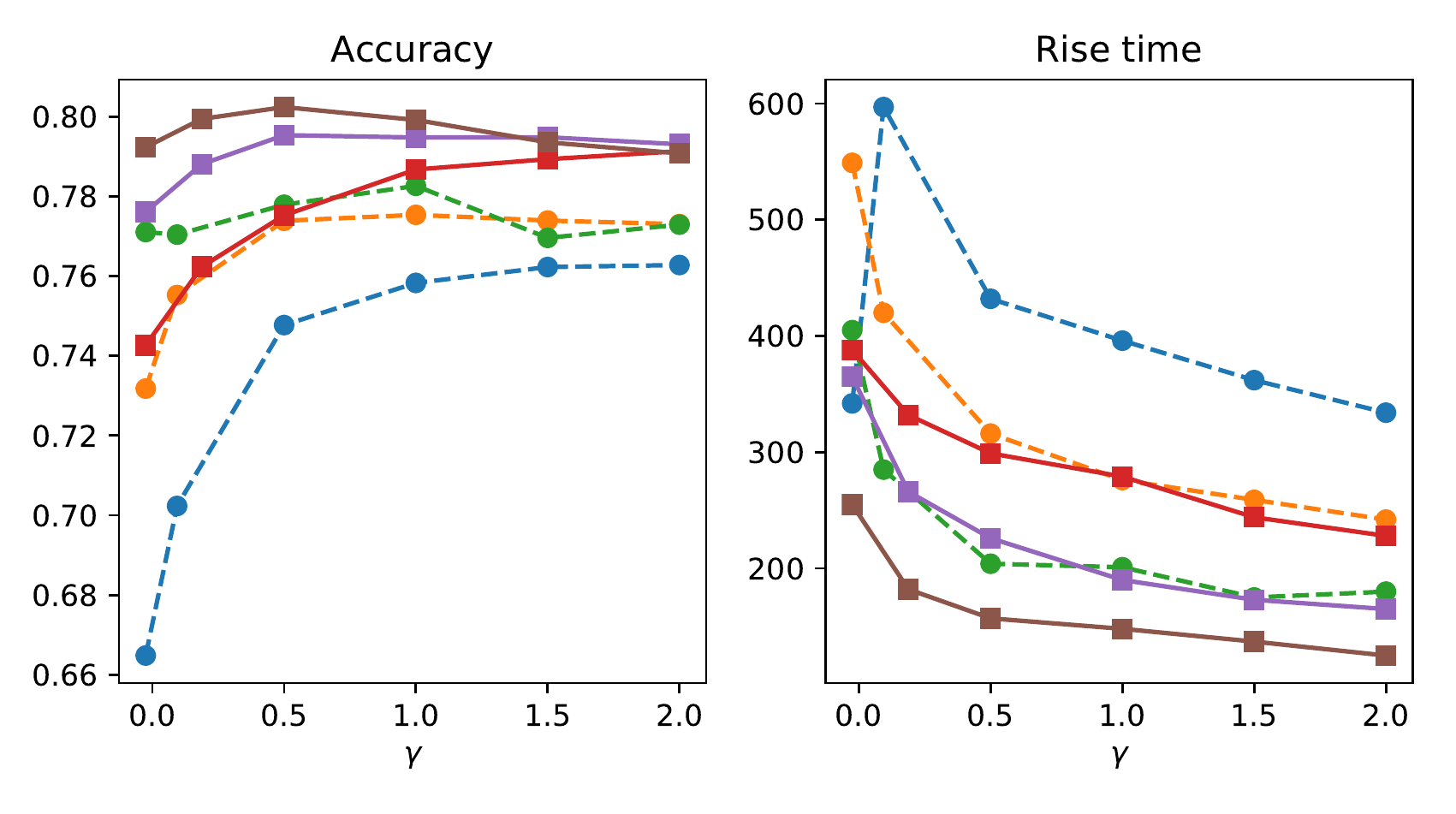}
    \hspace{2ex}
    \includegraphics[scale=0.45]{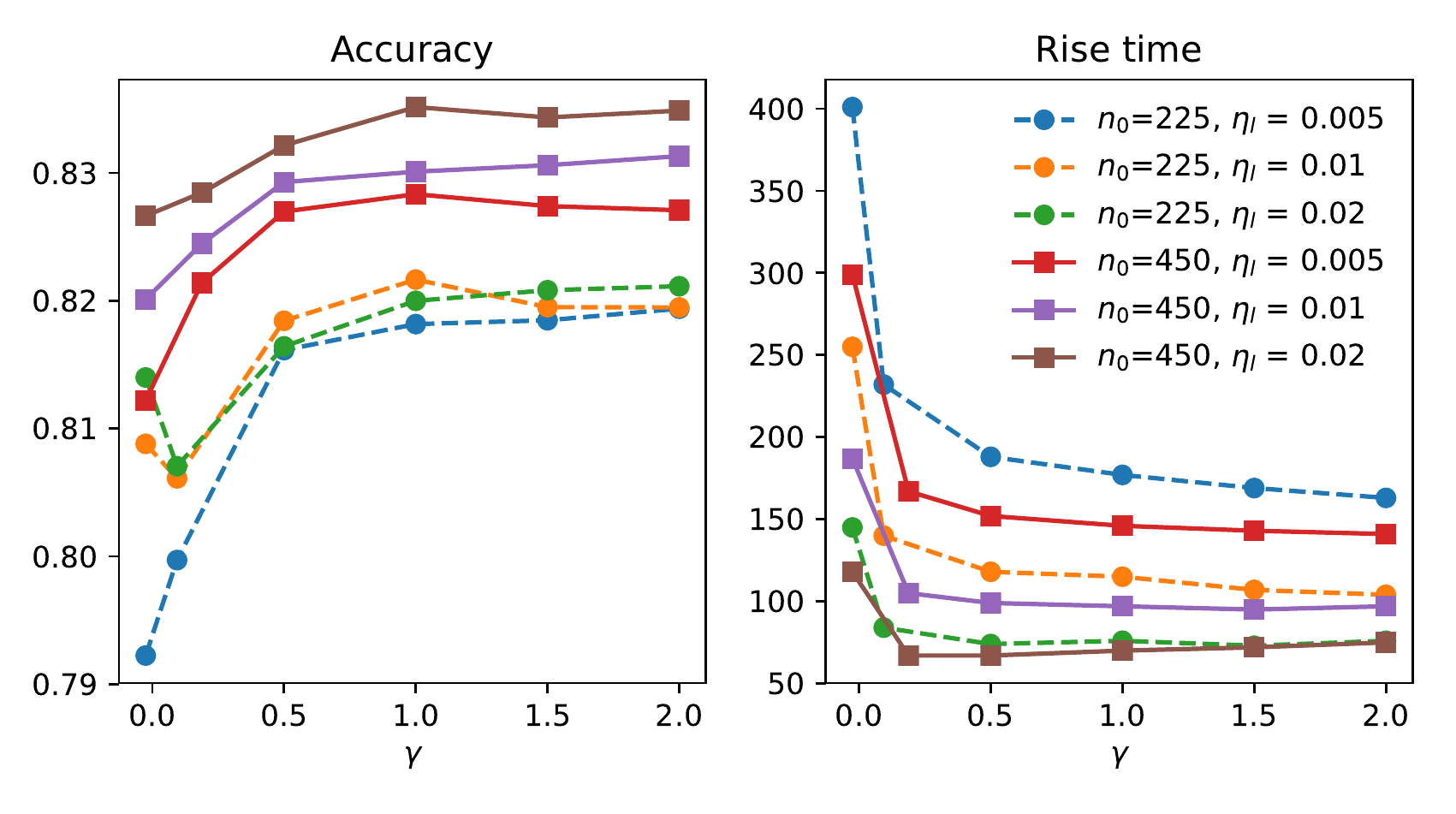}
    \vskip -0.1in
    \centerline{(a) EMNIST: $C\!=\!1, S\!=\!10$ \hspace{1.6in} (b) EMNIST: $C\!=\!5, S\!=\!5$}
    
    \includegraphics[scale=0.45]{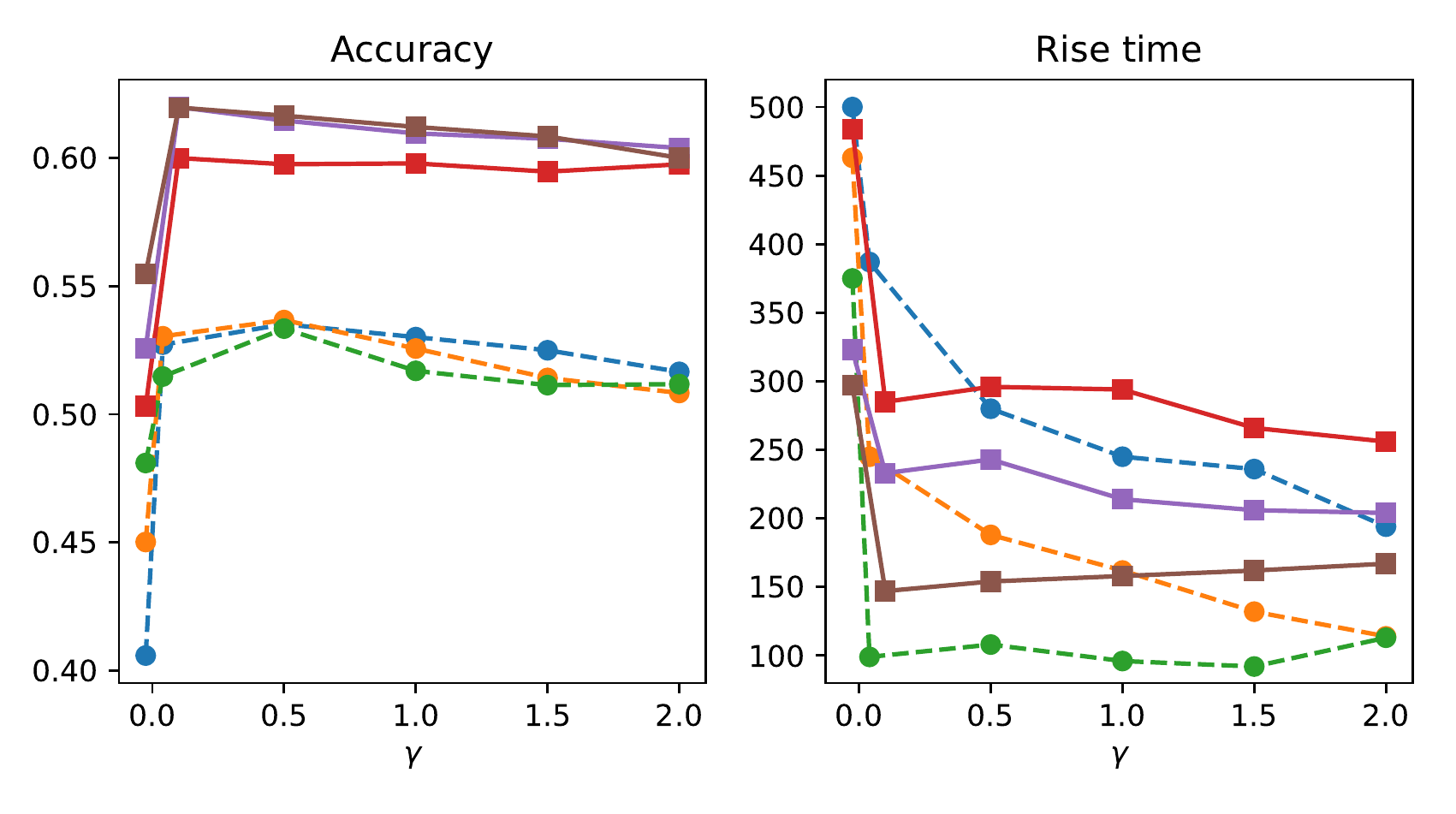}
    \hspace{2ex}
    \includegraphics[scale=0.45]{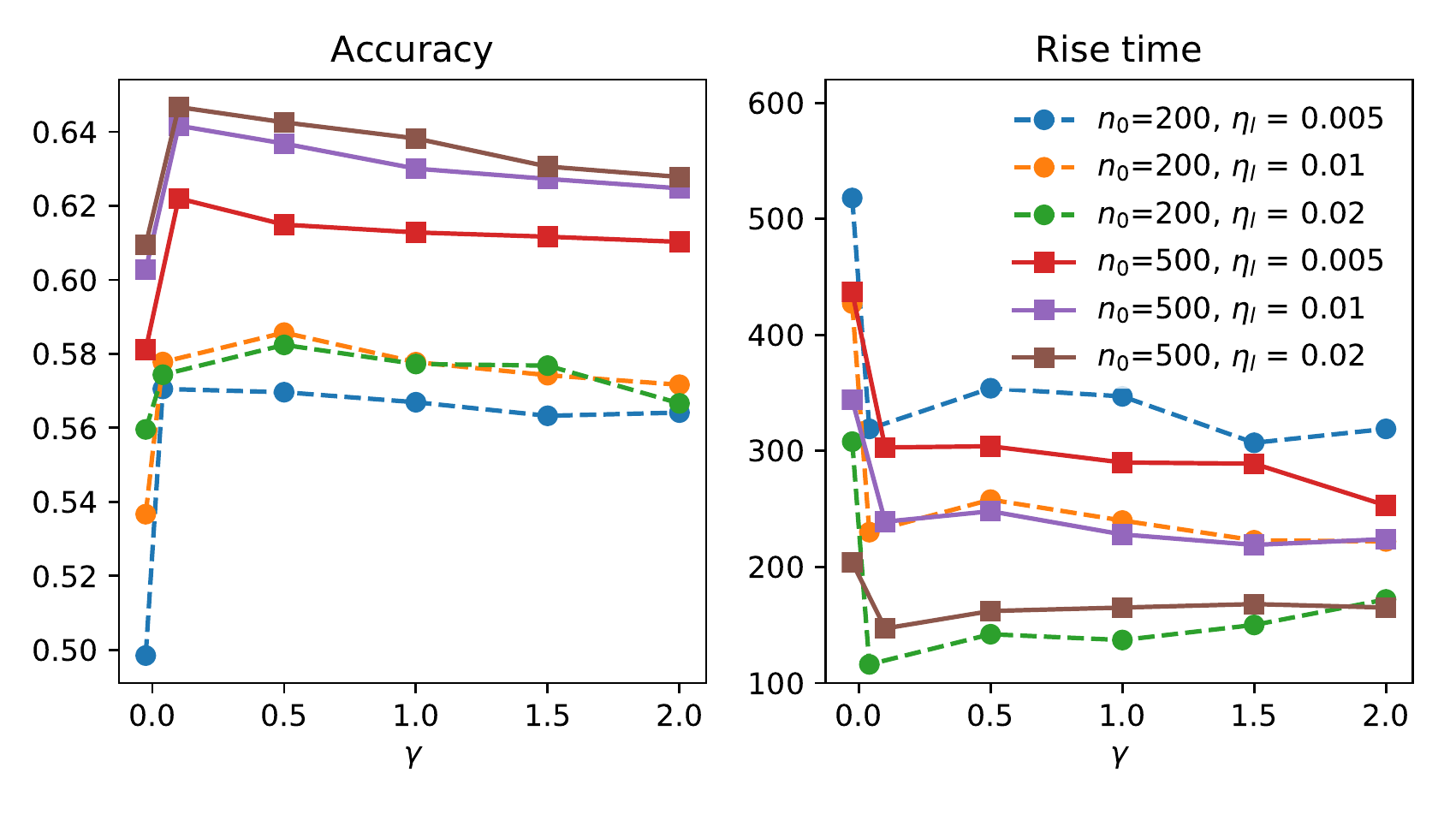}
    \vskip -0.1in
    \centerline{(c) CIFAR-10: $C\!=\!2, S\!=\!2$ \hspace{1.6in} (d) CIFAR-10: $C\!=\!2, S\!=\!4$}
    \caption{Test accuracy \& rise time vs. weight $\gamma$. Values at $\gamma \!>\! 0$ represent \texttt{FSL} and those at $\gamma \!=\! 0$ represent \texttt{DS} (instead of \texttt{FL} since \texttt{DS} outperforms \texttt{FL}).}
    \label{fig_acc_rise}
    \vskip -0.15in
\end{figure*}

\underline{\em Benefits of Server Learning:}   Fig.~\ref{fig_acc_rise} plots the accuracy and rise time of \texttt{FSL} when varying the weight $\gamma$, learning rate $\eta_l$, and server data size $n_0$. 

\noindent \quad  {\bf Role of $\gamma$:} First, in general, increasing $\gamma$ from $0$ improves the accuracy and convergence time significantly compared to \texttt{DS}. The improvement is more pronounced when comparing to \texttt{FL}. Second, such improvements remain significant over a wide range of $\gamma$ values. For example, $\gamma$ over $[0.5, 1.5]$ provides similar performance for all considered local learning rates $\eta_l$, server data sizes $n_0$, and for both datasets. For CIFAR-10, it appears that a smaller $\gamma$ provides better results, while a large value may slightly degrade the performance; the opposite holds true for EMNIST (except when $C=1$ and $\eta_l$ is large, increasing $\gamma>1$ actually decreases the accuracy). This can be attributed to the fact that the client data are more non-IID and server samples are more dissimilar in CIFAR-10 than in EMNIST; see the cases $C=2$ and $C=5$ in Fig.~\ref{fig_emnist_varyC}. %, and Figure~\ref{fig_nonIIDness} in Appendix~\ref{subsec_nonIIDness_bound} for an illustration of the non-IIDness of the server and clients data. 

\noindent \quad  {\bf Server data size:} First, with a small (good quality) dataset, the server can already have a pretrained model much better than random initialization. Second, it is clear that increasing the server data size helps improve \texttt{FSL} further. Here, the accuracy improvement is greater for CIFAR-10 than EMNIST. 
% Here, the accuracy improvement is about \red{$2$ percentage points} for EMNIST and \red{$5$ percentage points} for CIFAR-10. 
The rise time improvement is significant when $\eta_l$ is small and diminishes for larger $\eta_l$. Note that increasing the local learning rate $\eta_l$ also increases the server's effective learning rate, which is $\gamma\eta_0 =\gamma\eta_l \sqrt{S}$. 

% \begin{figure}[tbh]
%     \centering
%     \includegraphics[scale=0.45]{plots/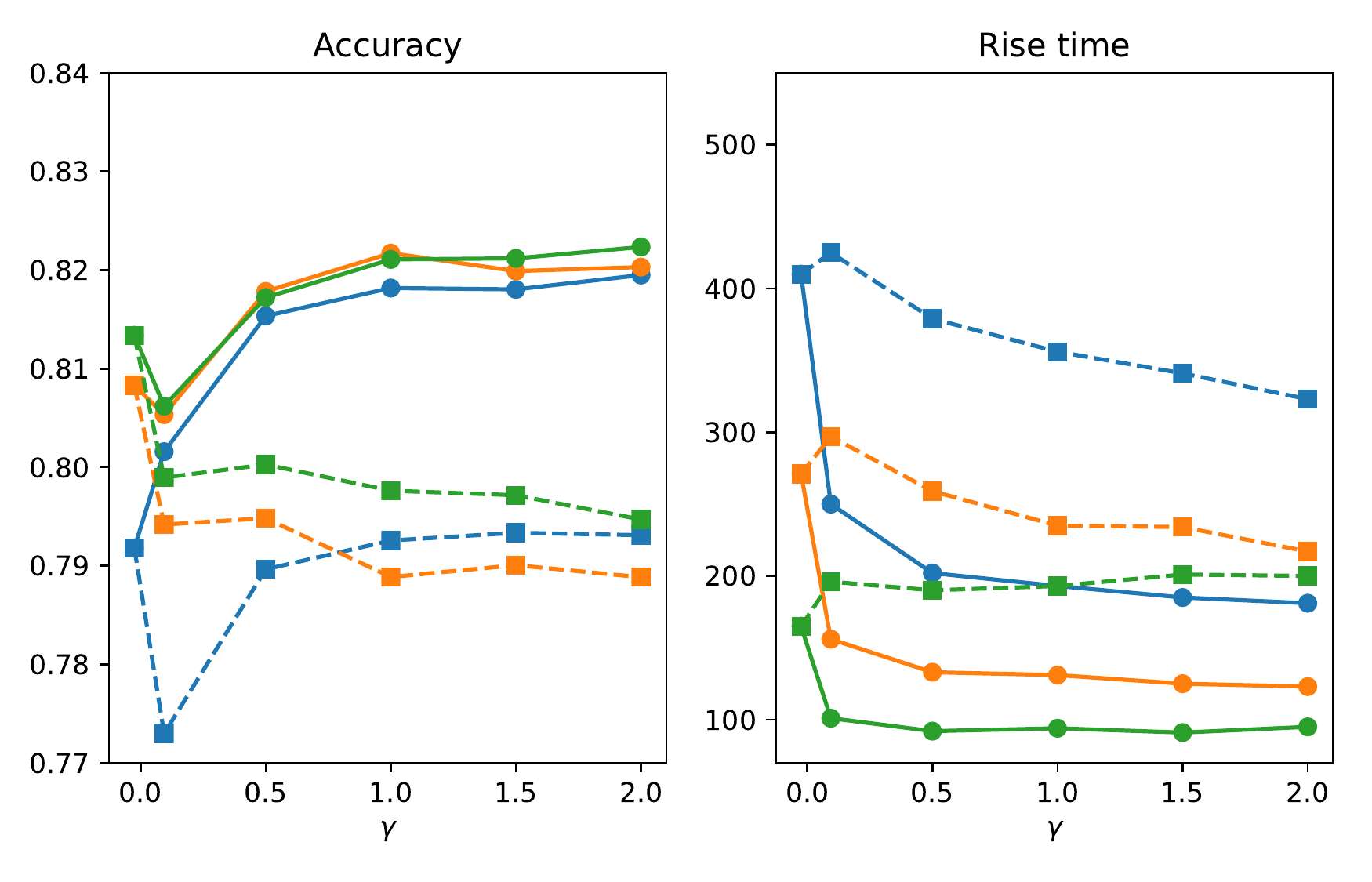}
%     \hspace{2ex}
%     \includegraphics[scale=0.45]{plots/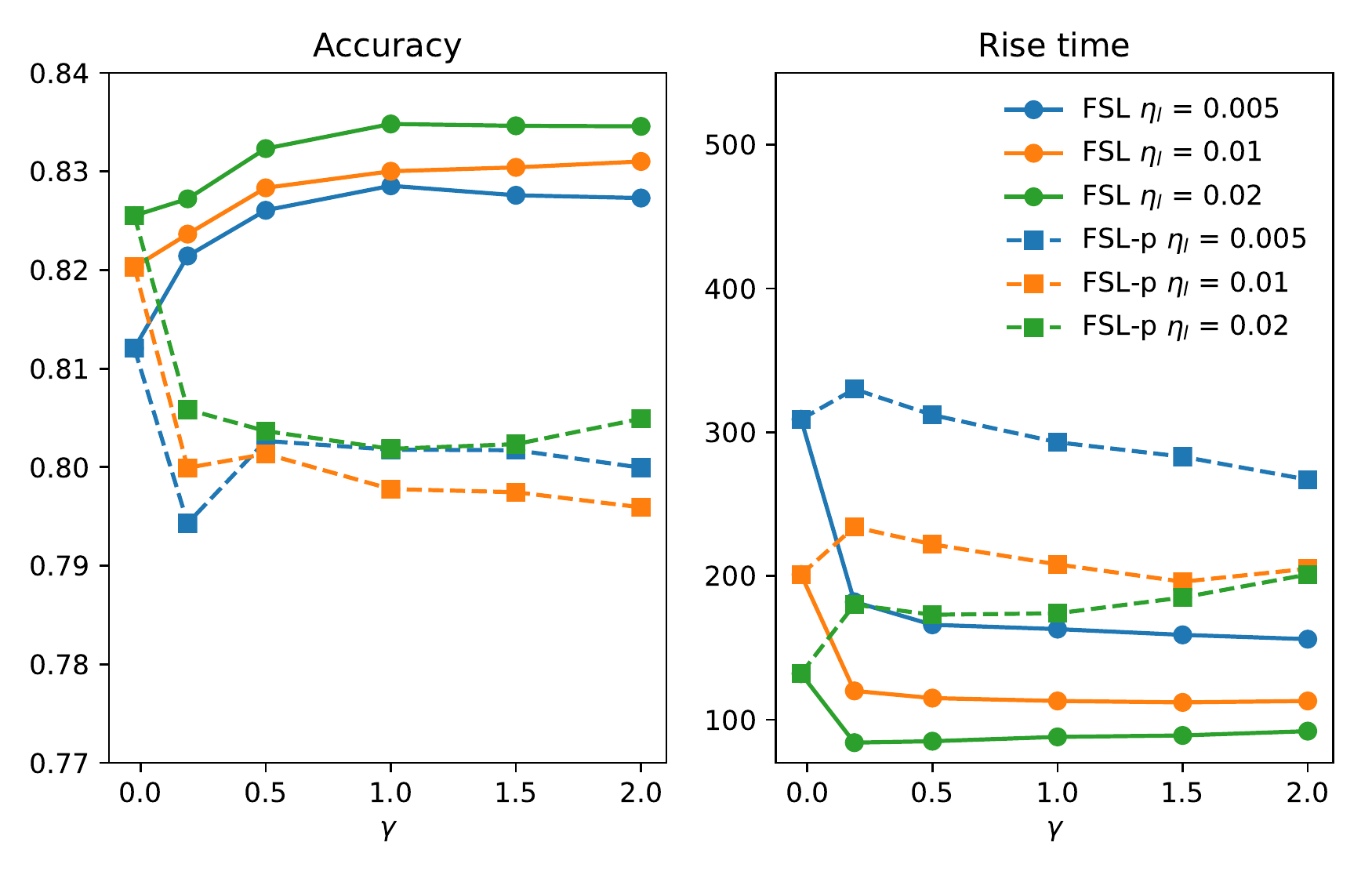}
%     \centerline{(a) EMNIST with $n_0=225$ \hspace{1.6in} (b) EMNIST with $n_0=450$}
%     \caption{Plot of accuracy and rise time with  $C=5, S =5$, where values at $\gamma = 0$ belong to \texttt{DS}.}
%     \label{fig_acc_rise_SLp_EMNIST}
% \end{figure}

% \begin{figure}[!tb]
%     \centering
%     \includegraphics[scale=0.4]{plots/220913_CIFAR10_ave3_Acc_RiseTime_C2_S2.pdf}
%     \hspace{2ex}
%     \centerline{(a) $C\!=\!2, S\!=\!2$}
    
%     \includegraphics[scale=0.4]{plots/220913_CIFAR10_ave3_Acc_RiseTime.pdf}
%     \vskip -0.1in
%     \centerline{(b) $C\!=\!2, S\!=\!4$}
%     %\vspace{-2mm}
%     \caption{CIFAR-10: Accuracy and rise time. Values at $\gamma \!>\! 0$ represent \texttt{FSL} and those at $\gamma \!=\! 0$ represent \texttt{DS}.}
%     \label{fig_acc_rise_CIFAR10}
% \end{figure}
% \red{\textsc{What else do we want to show???}}

\subsection{Scenario 2}
Consider $(N,n_i) =$ (1000, 50) for CIFAR-10 and (450, 240) for EMNIST. Unlike in Scenario 1, we now consider two different sources of data for the server.

\paragraph{Data from a few clients} The server obtains data only from a subset of $c$ clients,\footnote{These clients can be, for example, test vehicles in our AV example; here they are sampled without replacement once prior to training for simplicity.} each contributing $s$ samples (selected uniformly at random without replacement). Here $(c,s,n_0)=(10, 50, 500)$ for CIFAR-10 and $(9, 50, 450)$ for EMNIST. \blue{Note that the server data is imbalanced and non-IID (likely missing one or more label classes when $C = 1$).} %{\color{red}{[RL: are you saying that the server dataset is missing some class labels? Is this true for CIFAR-10? The server dataset comes from 10 clients, so for $C > 1$, this is not obvious.]}}\blue{VS: I meant when $C=1$.}

\paragraph{Data from other source(s)}  For EMNIST, we provide the server $n_0 = 675$ \underline{synthetic} examples by generating for each label class 15 images of the corresponding letter or number using a cursive font with 5 rotation angles $\{-20, -10, 0, 10, 20\}$ and 3 sizes;\footnote{To generate synthetic data, we first plot each character or number in a 2 inch $\times$ 2 inch figure using  font sizes $\{100, 110, 120\}$ in points with each point equal to $1/72$ inch, and then resize it to a 28 pixel $\times$ 28 pixel figure.} see Fig.~\ref{fig_synthetic_emnist} for a comparison of this synthetic data and EMNIST. For CIFAR-10, we collect $n_0 = 504$ images from the dataset STL-10 with 9 similar label classes as in CIFAR-10, each with 56 examples;\footnote{STL-10 images were acquired from labeled examples on ImageNet; data available at: https://cs.stanford.edu/$\sim$acoates/stl10/} see Fig.~\ref{fig_synthetic_cifar} for an illustration of this data, and note that the class \texttt{frog} is absent in STL-10. We refer to our algorithm in this case as \texttt{FSLsyn}. Our goal with \texttt{FSLsyn} is to examine the benefits of server learning when it is performed on data with a significantly different distribution than that of clients' data.

\begin{figure}[!tb]
\centering
\includegraphics[width=0.26\linewidth]{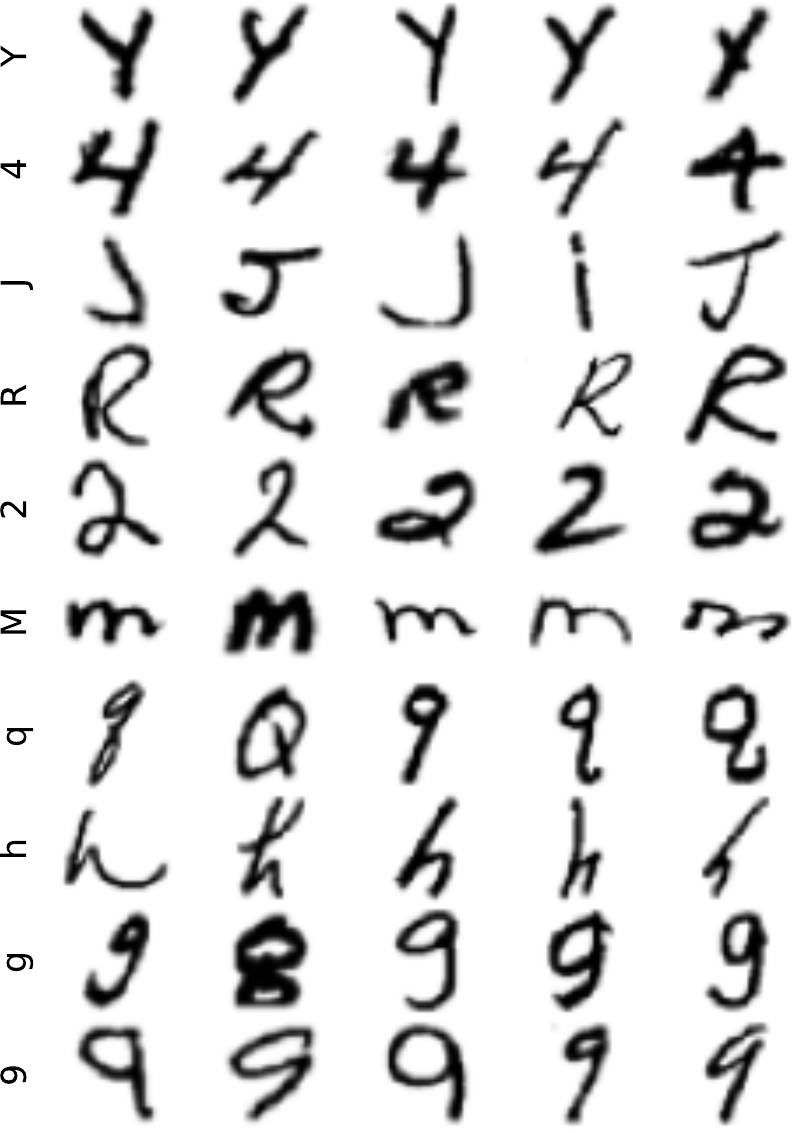}
%\hspace{-0.015in}
\vline
\includegraphics[width=0.245\columnwidth]{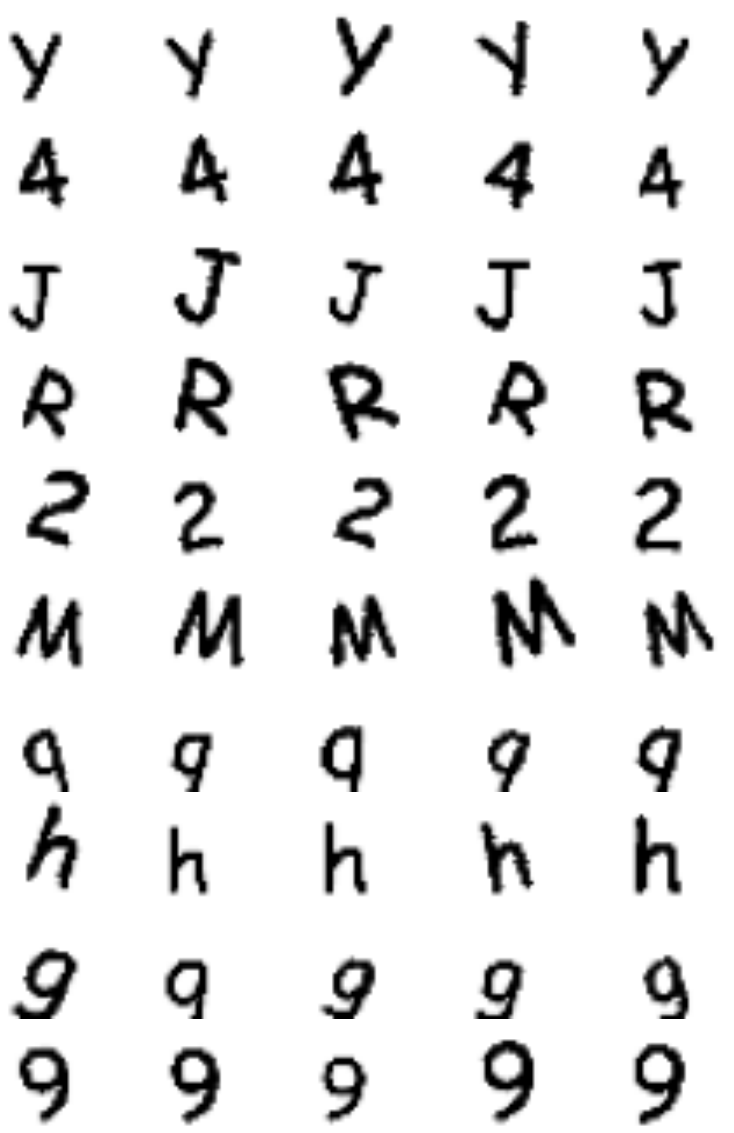} 
    \caption{Left: EMNIST training examples. Right: Server's synthetic examples.}
    \label{fig_synthetic_emnist}
\end{figure}
\begin{figure}[!tb]
\centering
\includegraphics[width=0.26\linewidth]{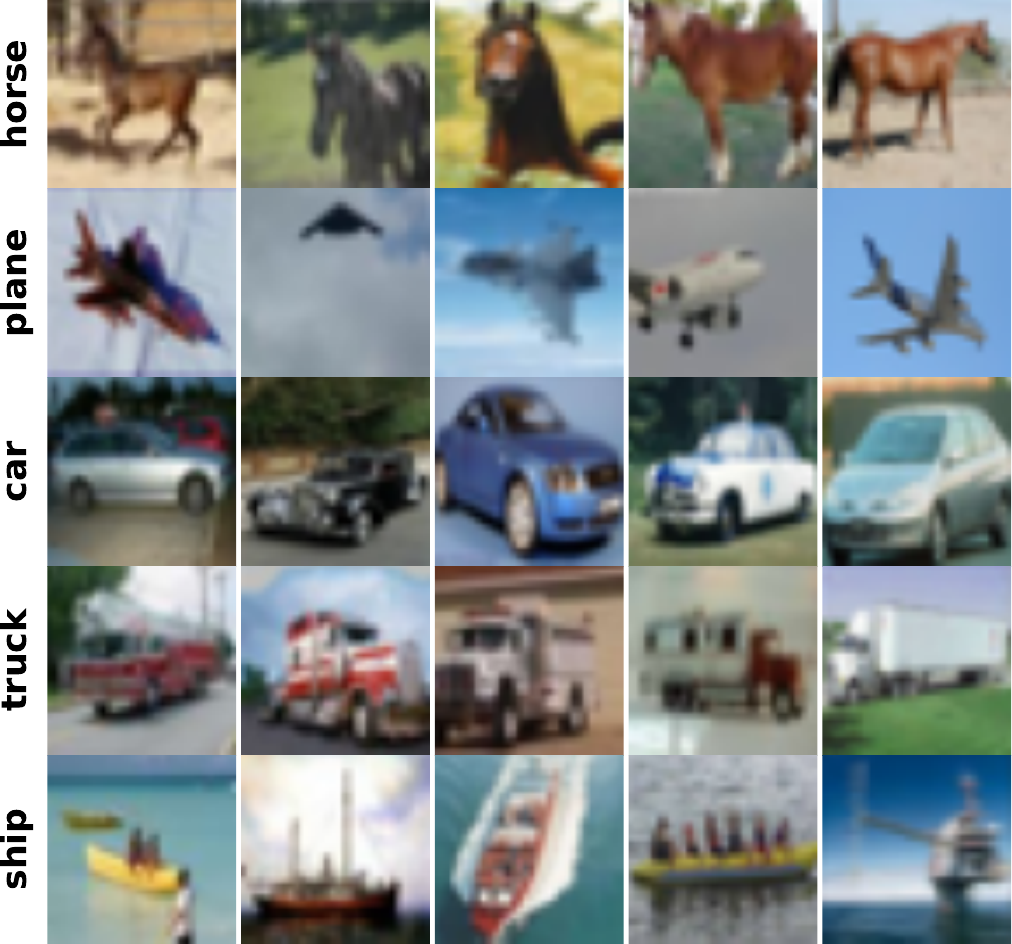}
\hspace{-0.025in} 
\includegraphics[width=0.2476\columnwidth]{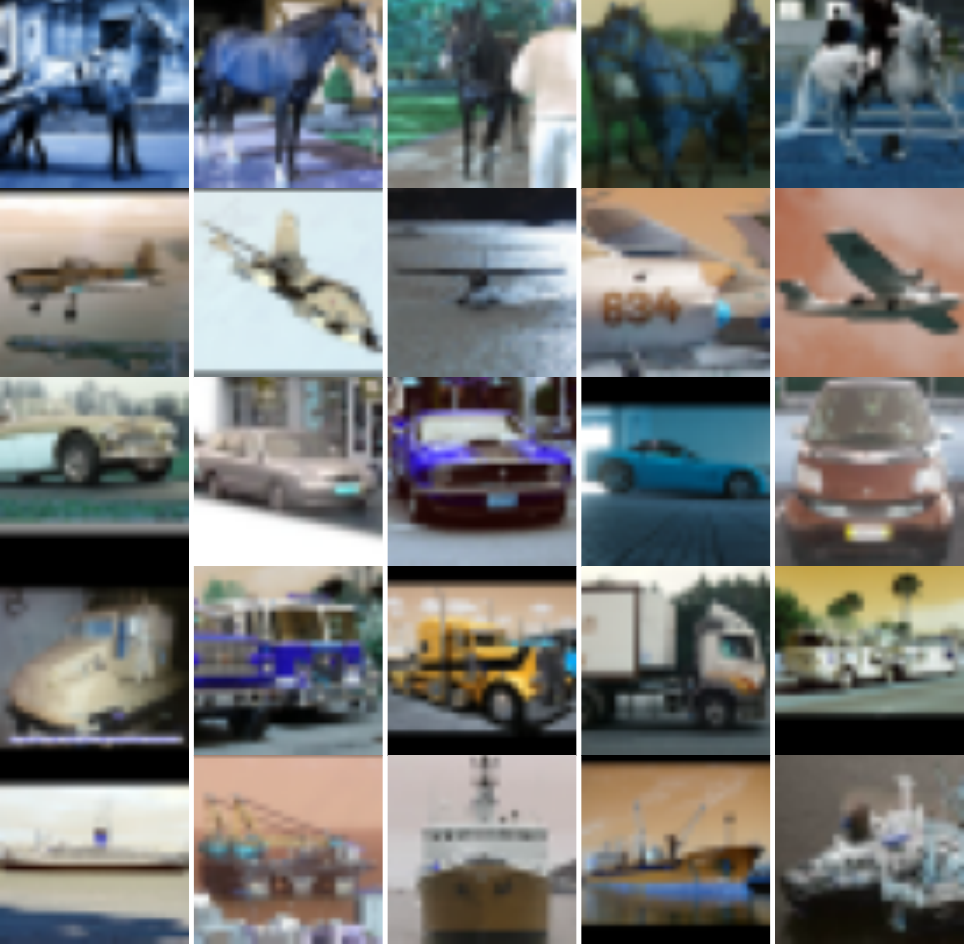} 
    \caption{Left: CIFAR-10 training examples. Right: Server's STL-10 examples.}
    \label{fig_synthetic_cifar}
\end{figure}

\begin{figure*}[!tb]
    \centering
    \includegraphics[width=\textwidth]{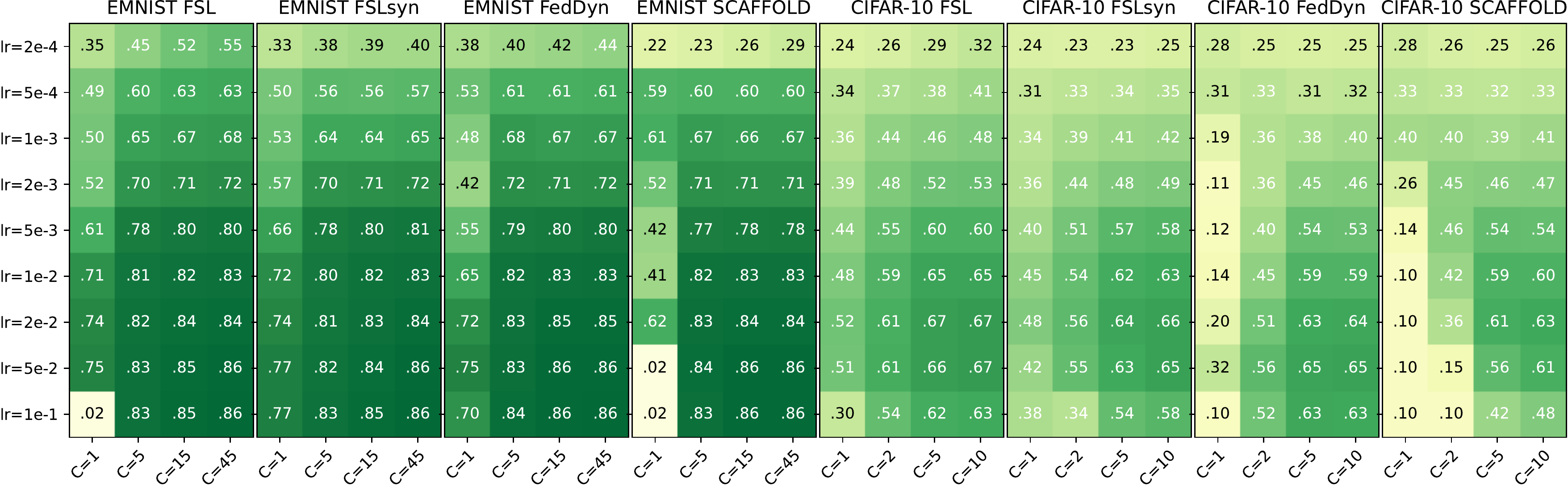}%EMNIST_CIFAR_heatmap_TAI.png
    %\vskip -0.1in
    \caption{Heat maps of test accuracy when varying $\eta_l$ and $C$. Here, $B=50$, $S=5$ for EMNIST, and $B=10$, $S=10$ for CIFAR-10; $B$ is chosen following \cite{Karimireddy2020} so that 1 epoch of clients corresponds to 5 local steps.}
    \label{fig_acc_heatmap}
    %\vskip -0.15in
\end{figure*}

We compare \texttt{FSL} and \texttt{FSLsyn} (without using a pretrained model) against \texttt{SCAFFOLD} and \texttt{FedDyn} when $S=\lceil N/100\rceil$. We use $\eta_g = \sqrt{S}$ for \texttt{FSL}, \texttt{FSLsyn}, and \texttt{SCAFFOLD}.  
% For simplicity, we fix the weight $\gamma\!=\!1$ in \texttt{FSL} and \texttt{FSLsyn} and regularization parameter $\alpha=0.01$ in \texttt{FedDyn}; better performance can be obtained by tuning these parameters.
%\texttt{SCAFFOLD} with $\eta_g\!=\!1$ performs worse than with $\eta_g = \sqrt{S}$ shown here; see our technical report \cite{FLSL_arXiv}.} %Appendix~\ref{subsec_compare_FSL_SCAFFOLD}.
Fig.~\ref{fig_acc_heatmap} shows the test accuracy after $T=1,000$ rounds with varying learning rate $\eta_l$ and non-IIDness $C$. Here, we fix the weight $\gamma\!=\!1$ in \texttt{FSL} and \texttt{FSLsyn} and regularization parameter $\alpha=0.01$ in \texttt{FedDyn}; better performance can be obtained by tuning these parameters as we will show later. 
First, it shows that, compared to \texttt{SCAFFOLD} and \texttt{FedDyn}, our algorithms \texttt{FSL} and \texttt{FSLsyn}
have comparable overall accuracy for EMNIST and much better for CIFAR-10, especially in very non-IID cases,  even without tuning $\gamma$. The heatmap also suggests that it is fairly easy to select learning rates for \texttt{FSL} and \texttt{FSLsyn}. 
The results further indicate that using server learning with synthetic or other `good' sources of data can provide significant benefits. In fact, \texttt{FSLsyn} has comparable performance to \texttt{FSL} for EMNIST and slightly worse performance for \mbox{CIFAR-10} (but still better than \texttt{FedDyn} and \texttt{SCAFFOLD} in this case). 
%We note that \texttt{FSLsyn} outperforms \texttt{FSL} in the highly non-IID case of CIFAR-10 with $C=1$. This is, however, mainly because server's data in \texttt{FSLsyn} have 9 label classes compared to, on average, only 3 label classes available to the server in \texttt{FSL}, giving \texttt{FSLsyn} an (unfair) advantage. As $C$ increases, the quality of server's data in \texttt{FSL} improves with additional label classes, which in turn results in better performance of \texttt{FSL} over \texttt{FSLsyn}. 
Additional experimental results reported in 
% our technical report \cite{FLSL_arXiv} 
Figures~\ref{fig_cifar_N100} and \ref{fig_emnist_N450} in Appendix~\ref{subsec_compare_FSL_SCAFFOLD} 
also show that \texttt{FSL} and \texttt{FSLsyn} have faster rise times in most cases. %, i.e., fewer server-client communication rounds only with a half of the communication overhead of \texttt{SCAFFOLD}. 
Note that our algorithm can be improved further by having more (and better) data for server learning and using a pretraining step.

% \begin{table}[t]
% \caption{\red{Will need to replace this table!!!}Accuracy after 1k rounds and number of global rounds needed to reach 0.45 accuracy $T_{0.45}$ in CIFAR-10 with $C=1$. In \texttt{FSL}, we fix server data size $n_0=500$ and vary $c$ - the number of clients that send data to server. $\Bar{C}_0$ is the average number of classes at the server, and $\delta(\mathcal{D}_0, \mathcal{D})$ is the total variation distance between class distributions of $\mathcal{D}_0$ and $\mathcal{D}$.}
% \label{sample-table}
% % \vskip -0.1in
% \begin{center}
% \begin{small}
% \begin{sc}
% % \resizebox{\columnwidth}{!}{
% \begin{tabular}{l|r|rrr}
% \toprule
% & SCAFFOLD &\multicolumn{3}{c}{FSL ($\eta_l=0.01, \gamma =1$)} \\ 
% & ($\eta_l=0.002$) & $c=5$ & $c=10$ & $c=20$ \\
% \midrule
% $\Bar{C}_0$ & 0 & 3.67   &  6.67 & 9  \\
% $\delta(\mathcal{D}_0, \mathcal{D})$ & \text{n/a} & 0.70   &  0.33 & 0.25  \\
% $T_{0.45}$ & 995  & 794   &  365 & 308 \\
% % Acc.(\%)  & 49.825    & 49.342  & 53.880 & 57.259\\
% Acc.  & 0.498    & 0.493  & 0.539 & 0.573 \\
% \bottomrule
% \end{tabular}%}
% \end{sc}
% \end{small}
% \end{center}
% \vskip -0.1in
% \end{table}

% Python file: Desktop\ENKI\TAI_plots.py
\begin{table}[t]
\caption{Accuracy after 1k rounds and number of global rounds needed to reach 0.5 accuracy $T_{0.5}$ in CIFAR-10 with $C=2$.}
\label{sample-table}
\begin{center}
\begin{small}
% \begin{sc}
% \resizebox{\columnwidth}{!}{
\vskip -0.1in
\begin{tabular}{l|r|rr|rr}
\toprule
& \texttt{FedDyn} 
&\multicolumn{2}{c|}{\texttt{FSLsyn} ($\eta_l=0.02$)} 
&\multicolumn{2}{c}{\texttt{FSL} ($\eta_l=0.02$)} \\ 
& $\eta_l\!=\!0.05$ & $n_0\!=\!504$ & $n_0\!=\!720$ & $n_0\!=\!250$ & $n_0\!=\!500$ \\
\midrule
$T_{0.5}$ & 502  & 339 & 333 &  238 & 203 \\
Acc  & 0.5779    & 0.5763 & 0.5835  & 0.5845 & 0.6144 \\
\bottomrule
\end{tabular}%}
% \end{sc}
\end{small}
\end{center}
\vskip -0.1in
\end{table}

Finally, Table 1 shows that both the quantity and the quality of server's data $\mathcal{D}_0$ affect the performance of \texttt{FSL}. These results are obtained with CIFAR-10 when clients' data is highly non-IID with $C=2$, and we pick the learning rates according to the highest accuracy given in Fig.~\ref{fig_acc_heatmap}. We also fine-tune the regularization parameter of \texttt{FedDyn} with $\alpha\in \{0.01, 0.05, 0.1, 0.5\}$ following \cite{durmus2021federated} and the server weight $\gamma \in \{0.6, 0.8, 1.0, 1.2\}$ in \texttt{FSL} and \texttt{FSLsyn} -- we report the best numbers and skip \texttt{SCAFFOLD} as it underperforms \texttt{FedDyn}.   
Moreover, we vary the server data size $n_0\in \{250, 500\}$ for \texttt{FSL}, and $n_0\in \{504, 720\}$ for \texttt{FSLsyn}. Both the rise time and the accuracy improve as $n_0$ increases, with \texttt{FSL} featuring a more significant improvement since the server's data are more similar to the clients' data compared to synthetic data (see Fig.~\ref{fig_synthetic_cifar}). 
In addition, both of our algorithms require a significantly smaller number of global rounds to reach 0.5 accuracy, showcasing the benefit of server learning. It is also interesting to note that \texttt{FSL} with $n_0=250$ is still slightly better than \texttt{FSLsyn} with $n_0=720$, confirming that the synthetic data are likely taken from a different distribution.

\section{Conclusions}\label{sec_conclusion}
We considered a new approach to mitigate the performance degradation of FL on non-IID data. Our approach augments FL with server learning using a small dataset, and thus is complementary in that it can be utilized in conjunction with other existing approaches in the literature. 
Our analysis and experiments revealed that FSL can offer significant improvements in terms of accuracy and convergence time over conventional FL algorithms, even when the server dataset is relatively small. As expected, the improvements depend not only on server data size but also on the divergence between its distribution and that of the aggregate training data. The improvements are higher when the distributional divergence is smaller. We are currently exploring the issue of choosing a suitable dataset for the server learning and the relationship between the performance improvements and the server data size/the distributional divergence. 
%A major limitation of our approach is the availablity of server data. 

% \newpage
\bibliography{ref}
\bibliographystyle{abbrv}

%%%%%%%%%%%%%%%%%%%%%%%%%%%%%%%%%%%%%%%%%%%%%%%%%%%%%%%%%%%%%%%%%%%%%%%%%%%%%%%
%%%%%%%%%%%%%%%%%%%%%%%%%%%%%%%%%%%%%%%%%%%%%%%%%%%%%%%%%%%%%%%%%%%%%%%%%%%%%%%
% APPENDIX
%%%%%%%%%%%%%%%%%%%%%%%%%%%%%%%%%%%%%%%%%%%%%%%%%%%%%%%%%%%%%%%%%%%%%%%%%%%%%%%
%%%%%%%%%%%%%%%%%%%%%%%%%%%%%%%%%%%%%%%%%%%%%%%%%%%%%%%%%%%%%%%%%%%%%%%%%%%%%%%
% \newpage
% \appendix
% \onecolumn
% % \section{You \emph{can} have an appendix here.}

\begin{appendices}

% \la{Book keeping stuff: Define $\Ept$ and all (conditional) expectations properly. Also, different forms of CS inequality are used throughout and it may be good to have them explained so that we can refer to the correct form.}

\section{Proofs} \label{sec_Proofs}

Our proofs will use the following technical lemmas. 

\begin{lemma}\label{lem_SOS_RV}
If $\{z_1, z_2, \ldots, z_m\}$ are  independent random variables with 0-mean, then $\Ept \big[ \|\sum_{i=1}^m z_i\|^2 \big] = \Ept \big[ \sum_{i=1}^m \|z_i\|^2 \big]$
\end{lemma}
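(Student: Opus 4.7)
The plan is to expand the squared norm via the inner product and then exploit independence and the zero-mean assumption to kill the cross terms. Concretely, I would write
\[
\Bigl\| \sum_{i=1}^m z_i \Bigr\|^2 = \Bigl\langle \sum_{i=1}^m z_i, \sum_{j=1}^m z_j \Bigr\rangle = \sum_{i=1}^m \|z_i\|^2 + \sum_{i \ne j} \langle z_i, z_j \rangle,
\]
and then take expectations, using linearity to move $\Ept$ inside each finite sum.

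For the diagonal terms, linearity immediately gives $\sum_{i} \Ept \|z_i\|^2 = \Ept \sum_i \|z_i\|^2$, which is the right-hand side of the claimed identity. The substantive step is to show that each off-diagonal expectation $\Ept[\langle z_i, z_j\rangle]$ vanishes for $i \ne j$. Writing the inner product in coordinates as $\langle z_i, z_j \rangle = \sum_{k=1}^d z_i^{(k)} z_j^{(k)}$, I would argue coordinate-wise: by independence of $z_i$ and $z_j$ (which implies independence of any measurable functions of them, in particular each pair of coordinates $z_i^{(k)}, z_j^{(k)}$), we have $\Ept[z_i^{(k)} z_j^{(k)}] = \Ept[z_i^{(k)}]\,\Ept[z_j^{(k)}] = 0$ since $\Ept[z_i] = 0 = \Ept[z_j]$ componentwise. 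Summing over $k$ yields $\Ept[\langle z_i, z_j\rangle] = 0$.

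Combining these observations, the cross-term sum vanishes in expectation and what remains equals $\Ept\sum_i \|z_i\|^2$, establishing the lemma. There is no real obstacle here — the only point to be slightly careful about is that independence is needed only pairwise (between distinct $z_i, z_j$), and that one should invoke independence before collapsing the product of expectations to zero via the mean-zero hypothesis. A minor implicit assumption is that the $z_i$ are square-integrable, so that all expectations in sight are finite and Fubini/linearity manipulations are justified; I would note this briefly and then state the result.
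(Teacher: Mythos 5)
Your proof is correct and is the standard argument: expand the squared norm, observe the diagonal terms give the right-hand side, and use pairwise independence plus the zero-mean hypothesis to annihilate the cross terms coordinate-wise. The paper states this lemma without proof as a standard fact, and your argument (including the remark that only pairwise independence and square-integrability are needed) is exactly the canonical one.
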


\begin{lemma}\emph{(CS inequality)}\label{CS-ineq}
The following hold for any $\{v_1,\ldots,v_m\} \subset \mathbb{R}^d$:
\begin{itemize}
    \item[1.] $\|v_i+v_j\|^2 \le (1+a)\|v_i\|^2 + (1+\frac{1}{a})\|v_j\|^2$ for any $a>0$, and
    \item[2.] $\|\sum_{i=1}^m v_i\|^2 \le m\sum_{i=1}^m\|v_i\|^2$.
\end{itemize}
\end{lemma}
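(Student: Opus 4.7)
The plan is to prove the two inequalities by elementary algebraic manipulation; both are standard Young/Cauchy--Schwarz-type bounds and no deep machinery is needed.

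For part 1, I would expand the squared norm using the inner product structure of $\mathbb{R}^d$:
\[
\|v_i+v_j\|^2 = \|v_i\|^2 + 2\langle v_i, v_j\rangle + \|v_j\|^2.
\]
The key ingredient is Young's inequality applied to the cross term. Specifically, for any $a>0$, the identity $\bigl\|\sqrt{a}\,v_i - \tfrac{1}{\sqrt{a}}\,v_j\bigr\|^2 \ge 0$ expands to
\[
a\|v_i\|^2 + \tfrac{1}{a}\|v_j\|^2 \ge 2\langle v_i, v_j\rangle.
\]
Substituting this bound on $2\langle v_i,v_j\rangle$ into the expansion yields the stated inequality $\|v_i+v_j\|^2 \le (1+a)\|v_i\|^2 + (1+\tfrac{1}{a})\|v_j\|^2$.

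For part 2, I would view $\|\sum_{i=1}^m v_i\|$ as the Euclidean norm of a sum and apply the Cauchy--Schwarz inequality to the real vectors $(1,1,\ldots,1)$ and $(\|v_1\|,\ldots,\|v_m\|)$ in $\mathbb{R}^m$. Specifically, by the triangle inequality followed by Cauchy--Schwarz,
\[
\Bigl\|\sum_{i=1}^m v_i\Bigr\|^2 \le \Bigl(\sum_{i=1}^m \|v_i\|\Bigr)^2 = \Bigl(\sum_{i=1}^m 1\cdot \|v_i\|\Bigr)^2 \le m \sum_{i=1}^m \|v_i\|^2.
\]
Alternatively, one can expand $\|\sum_i v_i\|^2 = \sum_{i,j}\langle v_i,v_j\rangle$ and bound each cross term by $\tfrac{1}{2}(\|v_i\|^2+\|v_j\|^2)$, which yields the same constant $m$ after summing the $m^2$ terms.

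Since both inequalities are classical and follow in one or two lines each, there is no real obstacle; the only ``judgment call'' is to recognize that part 1 is a weighted Young's inequality (and reduces to the unweighted Peter--Paul inequality when $a=1$), while part 2 is a direct consequence of Cauchy--Schwarz. I would present them in this order, noting that part 2 can also be derived by repeated application of part 1 with $a=m-1$, though the direct Cauchy--Schwarz proof is cleaner.
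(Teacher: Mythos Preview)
Your proof is correct; the paper itself states this lemma as a standard technical tool without proof, so there is nothing to compare against. Your elementary derivations via Young's inequality for part~1 and Cauchy--Schwarz (or the expansion argument) for part~2 are exactly the expected one-line arguments.
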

We will refer to both inequalities above as the Cauchy-Schwarz (CS) inequality in the rest of this section.  
% \begin{lemma}(\cite{Karimireddy2019})\label{lem_sublinear_rate}
% For every nonnegative sequence $\{u_t\}$, any $\eta_{\max} >0$, $c_1, c_2 \ge 0$, and $T \ge 0$, there exists a constant $\eta \le \eta_{\max}$ such that 
% \begin{align}
%     \Psi_T = \frac{1}{T}\sum_{t=0}^{T-1} \Big( \frac{u_t - u_{t+1}}{\eta} + c_1\eta + c_2\eta^2\Big) \le 
%     \frac{u_0}{\eta_{\max}T} 
%     + 2c_1^{\frac{1}{2}}\big(\frac{u_0}{T} \big)^{\frac{1}{2}} 
%     + 2c_2^{\frac{1}{3}}\big(\frac{u_0}{T} \big)^{\frac{2}{3}}
% \end{align}
% \end{lemma}

\subsection{Proof of Theorem~\ref{thm_descent_lem}} \label{proof_descent_FL}
% \begin{IEEEproof}
Recall that our approximated global loss function is $\tilde{F} = \frac{1}{1+\gamma}(F+\gamma f_0)$. Our training algorithm is as follows. For any $t\ge 1$
\begin{align*}
    x^{(i)}_{t,k} &= x^{(i)}_{t,k-1} - \eta_l g^{(i)}_{t,k-1}, \quad \text{with } x^{(i)}_{t,0} = x_t, \quad \forall k\in [K], i\in \mathcal{S}_t\\
    \bar{x}_t &= x_t + \frac{\eta_g}{S}\sum_{i\in \mathcal{S}_t} \big( x^{(i)}_{t,K} - x_t\big) \nonumber\\
    w_{t,k} &= w_{t,k-1} -  \gamma\eta_0 g^{(0)}_{t,k-1}, \quad \text{with } w_{t,0} = \bar{x}_t, \quad \forall k\in [K_0]\\
    x_{t+1} &= w_{t,K_0} \nonumber
\end{align*}
where $\mathcal{S}_t$ is the random set of clients chosen to update the model at round $t$ with $S=|\mathcal{S}_t|$, $g^{(i)}_{t,k-1}$ is an unbiased estimate of $\nabla f_i(x^{(i)}_{t,k-1})$ for $i \in \mathcal{S}_t$, $g^{(0)}_{t,k-1}$ is an unbiased estimate of $\nabla f_0(w_{t,k-1})$, and the step sizes satisfy
\begin{align}
    K_0\eta_0 = K\eta_g\eta_l .
    \label{eqStepsizes}
\end{align}

% We will prove the following result, which exhibits the per-round progress of our algorithm. Define:
% \begin{align} 
% \tau_s &= \frac{(N-S)}{S(N-1)}, \quad \tilde{\tau}_s := \frac{1}{3} + \tau_s(B^2-1) \\
% G_2 &= \frac{4\sigma_0^2}{K} + \frac{6\sigma^2}{KS} + 6\tau_s G^2 \\
% G_3 &=\frac{\sigma^2}{KS} + \tau_sG^2 + \frac{2\sigma_0^2}{9}\\
% G_4 &= 18L_0^2G_3 + 4\frac{L^2}{\eta_g^2}\big( 2G^2+\frac{\sigma^2}{K}\big)\\
% \bar{L} &= \max\{L_0, L\}, \quad \kappa = \max\big\{ 1, \frac{2B^2}{3\eta_g^2} + 3\tilde{\tau}_s(6L_0+1) \big\}
% \end{align}

% \begin{theorem} 
% Let $K_0=K$, $\eta_0 = \eta_l \eta_g$ and suppose that the step sizes satisfy 
% \begin{align}
%     8K\eta_0 (\tilde L + \kappa \bar{L}) \le 1. \label{eq_stepsize_mainCond}
% \end{align}
% Then for any $t\ge 0$, 
% \begin{align}
%     \Ept_{t} \tilde{F}({x}_{t+1}) 
%     \le \tilde{F}(x_t) - K\eta_0 \|\nabla \tilde F(x_t)\|^2  + K^2\eta_0^2(\kappa\bar{L} \xi^2(x_t) + \tilde{L}G_2 + 2K\eta_0G_4) \label{eq_descent_lemma}
% \end{align}
% where $\xi^2(x_t) = \|\nabla F(x_t) - \nabla f_0(x_t)\|^2$.
% \end{theorem}

Define
\begin{align*}
    E^{(c)}_t &= \Ept_t \left[ \frac{1}{KN}\sum_{i\in [N],k\in [K]} \big\| x_t - x^{(i)}_{t,k-1} \big\|^2 \right], \\ 
    E^{(0)}_t &= \Ept_t \left[ \frac{1}{K_0} \sum_{k\in [K_0]}\big\|x_t - w_{t,k-1} \big\|^2 \right] ,
\end{align*}
where $E^{(c)}_t$ is known as the drift caused by the clients' local updates, while $E^{(0)}_t$ is the drift due to server's updates in our algorithm. The following results are simply an application of the Lipschitz conditions of $\nabla f_i$ for $i=0,1,\ldots,N$. 
\begin{lemma}\label{lem_drifts}
We have the following relations:
\begin{align*}
    \Ept_t \left[ \sum_{k\in [K_0]}\big\|\nabla f_0(x_t) - \nabla f_0(w_{t,k-1}) \big\|^2 \right] \le K_0 L^2E^{(0)}_t  \\
    \frac{1}{N}\Ept_t \left[ \sum_{i\in [N], k\in [K]} \big\| \nabla f_i(x_t) - \nabla f_i(x^{(i)}_{t,k-1}) \big\|^2 \right] \le K L^2 E^{(c)}_t 
\end{align*}
\end{lemma}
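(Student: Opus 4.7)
The plan is to invoke the $L$-smoothness of each local loss function (Assumption~\ref{assm_LossFunc}), which I will use in its equivalent gradient-Lipschitz form: $\|\nabla f_i(u) - \nabla f_i(v)\|^2 \le L^2 \|u-v\|^2$ for all $u,v \in \mathbb{R}^d$ and all $i \in \{0,1,\ldots,N\}$. Both inequalities in the lemma then follow immediately by applying this pointwise bound, swapping sums and expectations, and recognizing the definitions of $E^{(0)}_t$ and $E^{(c)}_t$.

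For the first claim, I would apply the bound with $i = 0$, $u = x_t$, and $v = w_{t,k-1}$, then sum over $k \in [K_0]$ to get
\begin{equation*}
\sum_{k \in [K_0]} \|\nabla f_0(x_t) - \nabla f_0(w_{t,k-1})\|^2 \;\le\; L^2 \sum_{k \in [K_0]} \|x_t - w_{t,k-1}\|^2.
\end{equation*}
Taking the conditional expectation $\Ept_t[\cdot]$ on both sides and using the definition $E^{(0)}_t = \Ept_t \bigl[ \frac{1}{K_0}\sum_{k \in [K_0]} \|x_t - w_{t,k-1}\|^2 \bigr]$ gives the right-hand side $K_0 L^2 E^{(0)}_t$ as required.

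For the second claim, I would apply the same pointwise bound for each client $i \in [N]$ with $u = x_t$ and $v = x^{(i)}_{t,k-1}$, sum over $i \in [N]$ and $k \in [K]$, divide by $N$, and take $\Ept_t[\cdot]$. The right-hand side becomes
\begin{equation*}
L^2 \cdot \Ept_t \left[ \frac{1}{N} \sum_{i \in [N], k \in [K]} \|x_t - x^{(i)}_{t,k-1}\|^2 \right] \;=\; K L^2 E^{(c)}_t,
\end{equation*}
by the definition of $E^{(c)}_t$, finishing the argument.

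There is essentially no substantive obstacle: the lemma is a two-line consequence of smoothness plus the definitions of the drift quantities. The only minor point worth flagging is that the smoothness definition stated in the paper is the one-sided quadratic upper bound on $f$, whereas the proof uses its equivalent gradient-Lipschitz form; this identification is standard and is used implicitly throughout the FL literature cited in the paper, so I will simply invoke it.
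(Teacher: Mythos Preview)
Your proposal is correct and matches the paper's own treatment exactly: the paper simply remarks that the lemma is ``an application of the Lipschitz conditions of $\nabla f_i$ for $i=0,1,\ldots,N$'' without further detail, and your write-up spells out precisely those steps. Your caveat about the one-sided smoothness definition versus gradient-Lipschitzness is fair, but the paper clearly intends the latter (it invokes ``Lipschitz conditions of $\nabla f_i$'' verbatim), so no real issue arises.
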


From the $L$-smoothness of $\tilde{F}$, we have
\begin{align}
    &\Ept_{t} \left[ \tilde{F}({x}_{t+1}) \right] - \tilde{F}(x_t) 
    \le \underbrace{\langle \nabla \tilde{F}(x_t), \Ept_{t} \left[ x_{t+1} \right] - x_t \rangle}_{=:T_1} +  L \underbrace{\frac{1}{2}\Ept_{t} \left[ \|x_{t+1} - x_t\|^2 \right]}_{=:T_2} , 
    \label{descent_lem}
\end{align}
where the difference $x_{t+1} - x_t$ can be expressed as
\begin{align} 
&x_{t+1} - x_t 
= x_{t+1} - \bar{x}_t + \bar{x}_t- x_t %=  -\eta_0 \sum_{k=1}^{K_0} g^{(0)}_{t,k-1} - \eta_g\frac{\eta_l}{S}\sum_{i\in \mathcal{S}} \sum_{k=1}^{K} g^{(i)}_{t,k-1} \nonumber\\
=-K_0\eta_0 \Big( \frac{\sum_{k=1}^{K_0} \gamma g^{(0)}_{t,k-1}}{K_0} + \frac{\sum_{k=1}^{K}\sum_{i\in \mathcal{S}}  g^{(i)}_{t,k-1}}{KS} \Big). \label{eq_dx}
\end{align} 
% For simplicity, let us pick $$K_0 = K, \quad \eta_0 = \eta_g\eta_l$$
% which gives
% \begin{align}
%     x_{t+1} - x_t = -\eta_0 \sum_{k}\Big( g^{(0)}_{t,k-1} + \frac{1}{S}\sum_{i\in \mathcal{S}} g^{(i)}_{t,k-1} \Big) \label{eq_dx}
% \end{align}
Let us now bound the terms $T_1$ and $T_2$ on the right-hand siide of \eqref{descent_lem}. First, using \eqref{eq_dx},  \eqref{eqStepsizes}, and  the fact that $(1+\gamma)\tilde F = F + \gamma f_0$, 
we have
\begin{align}
    T_1 &= \inprod{ \nabla \tilde F(x_t)}{(1+\gamma)K_0\eta_0\nabla\tilde F(x_t) - \eta_0 \Ept_{t} \left[ \sum_{k=1}^{K_0} \gamma g^{(0)}_{t,k-1} \right] - \eta_g\frac{\eta_l}{S}\Ept_{t} \left[ \sum_{i\in \mathcal{S}} \sum_{k=1}^{K} g^{(i)}_{t,k-1} \right] - (1+\gamma)K_0\eta_0 \nabla \tilde F(x_t) } \nonumber\\
    &= \underbrace{\inprod{ \nabla \tilde F(x_t)}{\eta_0\gamma \Ept_{t} \left[ \sum_{k=1}^{K_0}\big( \nabla f_0(x_t) - g^{(0)}_{t,k-1}\big) \right]  + \eta_g\eta_l \Ept_{t} \left[ \sum_{k=1}^{K} \Big( \nabla F(x_t) - \frac{1}{S}\sum_{i\in \mathcal{S}} g^{(i)}_{t,k-1}  \Big) \right] } }_{=:T_3} \label{eq_T1_T3}\\
    &\quad - (1+\gamma)K_0\eta_0 \|\nabla \tilde F(x_t)\|^2 . \nonumber
\end{align}
Note that by taking expectation over $\mathcal{S}$ and using  Assumption~\ref{assm_localGrad}, we have
$\Ept_t \Big[ \frac{1}{S}\sum_{i\in \mathcal{S}} g^{(i)}_{t,k-1} \Big]$ $= \frac{1}{N}\Ept_t \Big[ \sum_{i \in [N]} \nabla f_i(x^{(i)}_{t,k-1}) \Big]$ and $\Ept_{t} \left[ \sum_{k}  g^{(0)}_{t,k-1} \right] = \Ept_{t} \big[ \sum_{k}\nabla f_0(w_{t,k-1}) \big].$ 
Using this, \eqref{eqStepsizes} and the fact that $F = \frac{1}{N}\sum_{i\in [N]}f_i$, we can bound $T_3$ as follows:
\begin{align*}
    \frac{2T_3}{K_0\eta_0} 
    &= \Ept_t \left[ \inprod{2\nabla \tilde F(x_t)}{   \frac{\sum_{i,k}\big( \nabla f_i(x_t) - \nabla f_i(x^{(i)}_{t,k-1}) \big)}{KN} + \frac{\sum_{k\in [K_0]}\gamma\big(\nabla f_0(x_t) - \nabla f_0(w_{t,k-1}) \big)}{K_0} } \right] \\
    &\le \|\nabla\tilde F(x_t)\|^2 
    + \Ept_t \left[ \Big\|  \frac{\sum_{i,k} \big( \nabla f_i(x_t) - \nabla f_i(x^{(i)}_{t,k-1}) \big)}{KN} + \frac{\sum_{k} \gamma\big( \nabla f_0(x_t) - \nabla f_0(w_{t,k-1}) \big)}{K_0} \Big\|^2 \right] \tag{CS ineq.}\\
    &\le \|\nabla\tilde F(x_t)\|^2 
    + 2\Ept_t \left[ \frac{ \sum_{i,k} \big\| \nabla f_i(x_t) - \nabla f_i(x^{(i)}_{t,k-1}) \big\|^2}{KN} + \frac{ \sum_{k}\gamma^2\big\|\nabla f_0(x_t) - \nabla f_0(w_{t,k-1}) \big\|^2}{K_0} \right] \tag{CS ineq.}\\
    &\le \|\nabla\tilde F(x_t)\|^2 
    + 2\big(L^2 E^{(c)}_t + L^2\gamma^2E^{(0)}_t\big) \tag{Lemma~\ref{lem_drifts}}
\end{align*}
Using this bound for \eqref{eq_T1_T3} yields 
\begin{align}
    T_1 \le K_0\eta_0\big( L^2 (E^{(c)}_t + \gamma^2E^{(0)}_t) -(0.5+\gamma)\|\nabla \tilde F(x_t)\|^2 \big). \label{eq_bound_T1}
\end{align}

Next, we bound the term $T_2$. Again, from \eqref{eq_dx} and the fact that $(1+\gamma)\tilde F = F + \gamma f_0$, we have 
\begin{align}
    \frac{2T_2}{3K_0^2\eta_0^{2}} &= \frac{1}{3}\Ept_{t} \left[ \bigg\| \frac{\sum_{k=1}^{K_0} \gamma g^{(0)}_{t,k-1}}{K_0} + \frac{\sum_{k=1}^{K}\sum_{i\in \mathcal{S}}  g^{(i)}_{t,k-1}}{KS} \bigg\|^2 \right]  \nonumber\\
    &=\frac{1}{3} \Ept_{t} \left[ \Big\|\frac{\sum_{k=1}^{K_0} \gamma g^{(0)}_{t,k-1}}{K_0} - \gamma\nabla f_0(x_{t}) + \frac{\sum_{k=1}^{K}\sum_{i\in \mathcal{S}}  g^{(i)}_{t,k-1}}{KS} -\nabla F(x_{t}) + (1+\gamma)\nabla\tilde{F}(x_t)\Big\|^2 \right] \nonumber\\
    &\le \gamma^2\underbrace{\Ept_{t} \Big\| \Big(\frac{1}{K_0}\sum_{k\in [K_0]} g^{(0)}_{t,k-1}\Big) - \nabla f_0(x_{t}) \Big\|^2 }_{=:T_4} 
        + \underbrace{\Ept_{t}  \Big\|\Big(\frac{1}{KS}\sum_{i\in \mathcal{S}}\sum_{k} g^{(i)}_{t,k-1} \Big) -\nabla F(x_{t})\Big\|^2 }_{=:T_5} 
        + (1+\gamma)^2\big\|\nabla\tilde{F}(x_t)\big\|^2 , \label{eq_T2_T4T5}
\end{align}
where the last inequality follows from the Cauchy-Schwarz inequality. We first consider $T_4$.
\begin{align*}
    \frac{T_4}{2} 
    &= \frac{1}{2}\Ept_{t} \left[ \Big\|\Big(\frac{1}{K_0}\sum_{k}  g^{(0)}_{t,k-1} \Big) - \nabla f_0(x_{t})\Big\|^2 \right] \\
    &= \frac{1}{2K_0^2}\Ept_{t} \left[ \Big\|\sum_{k} \big(g^{(0)}_{t,k-1} - f_0(w_{t,k-1})\big) + \sum_{k} \big(f_0(w_{t,k-1}) - \nabla f_0(x_{t})\big) \Big\|^2 \right] \\
    &\le \underbrace{K_0^{-2}\Ept_{t} \left[ \Big\|\sum_{k} \big(g^{(0)}_{t,k-1} - \nabla f_0(w_{t,k-1})\big) \Big\|^2 \right] }_{=:T_{4a}}
        + \underbrace{K_0^{-2}\Ept_{t} \left[ \Big\|\sum_{k} \big(f_0(w_{t,k-1}) - \nabla f_0(x_{t})\big) \Big\|^2 \right] }_{=:T_{4b}}
\end{align*}
Here, by Lemma~\ref{lem_SOS_RV} and Assumption~\ref{assm_localGrad}, we have 
\begin{align}
    T_{4a} = K_0^{-2}\Ept_{t} \left[ \sum_{k\in [K_0]} \big\| g^{(0)}_{t,k-1} - \nabla f_0(w_{t,k-1}) \big\|^2 \right]
            \le \frac{\sigma_0^2}{K_0} . \nonumber
\end{align}
Applying the Cauchy-Schwarz inequality to $T_{4b}$ yields
\begin{align}
    T_{4b} \le \Ept_{t} \left[ \frac{\sum_{k\in [K_0]} \big\| \nabla f_0(w_{t,k-1}) - \nabla f_0(x_{t}) \big\|^2}{K_0} \right] \le L^2E^{(0)}_t \tag{Lemma~\ref{lem_drifts}} . 
\end{align}
Thus, from the above bounds, 
\begin{align}
    T_4 \le 2T_{4a} + 2T_{4b} = \frac{2\sigma_0^2}{K_0} + 2L^2E^{(0)}_t.
\end{align}

Similarly, we can bound $T_5$ as follows:
\begin{align*}
    K^2T_{5} 
    &= \Ept_{t} \left[ \Big\| \Big(\sum_{i\in \mathcal{S},k\in [K]} \frac{1}{S} g^{(i)}_{t,k-1} \Big) - K \nabla F(x_t) \Big\|^2 \right] 
    = \Ept_{t} \left[ \Big\| \frac{1}{S}\sum_{i\in \mathcal{S},k\in [K]} \big( g^{(i)}_{t,k-1} - \nabla F(x_t)\big) \Big\|^2 \right] \\
    &=  \Ept_{t} \left[ \Big\| \frac{1}{S}\sum_{i\in \mathcal{S},k\in [K]} \big( g^{(i)}_{t,k-1} -\nabla f_i(x^{(i)}_{t,k-1}) + \nabla f_i(x^{(i)}_{t,k-1}) - \nabla f_i(x_{t}) + \nabla f_i(x_{t})-  \nabla F(x_t)\big) \Big\|^2 \right]
\end{align*}
Rearranging terms and applying the Cauchy-Schwarz inequality yields
\begin{align*}
    \frac{K^2T_5}{3} 
&\le \underbrace{\Ept_{t} \left[ \Big\| \frac{1}{S}\sum_{i\in \mathcal{S},k\in [K]}  \big( g^{(i)}_{t,k-1} -\nabla f_i(x^{(i)}_{t,k-1}) \big) \Big\|^2 \right] }_{=: T_{5a}}
+ \underbrace{ \Ept_{t} \left[ \Big\| \frac{1}{S}\sum_{i\in \mathcal{S},k\in [K]} \big( \nabla f_i(x^{(i)}_{t,k-1}) - \nabla f_i(x_{t}) \big) \Big\|^2 \right] }_{=: T_{5b}} \\
&\quad + \underbrace{ \Ept_{t} \left[ \Big\| \frac{1}{S}\sum_{i\in \mathcal{S},k\in [K]} \big(\nabla f_i(x_{t})- \nabla F(x_t) \big) \Big\|^2 \right]  }_{=: T_{5c}} .  %\tag{$=: T_{5a} + T_{5b} + T_{5c}$}
\end{align*}
Each term on the RHS can be bounded as follows. First, by using  Lemma~\ref{lem_SOS_RV} and Assumption~\ref{assm_localGrad}, we have 
%\la{Isn't the first inequality just an equality from the definition above with Lemma 1?}
\begin{align*}
    T_{5a} \stackrel{\text{(Lem.~\ref{lem_SOS_RV})}}{=} \frac{1}{S^2} \Ept_t \left[ \sum_{i\in \mathcal{S},k\in [K]} \| g^{(i)}_{t,k-1} -\nabla f_i(x^{(i)}_{t,k-1}) \|^2 \right]  \stackrel{\text{(Assump. \ref{assm_localGrad})}}{\le} \frac{K}{S}\sigma^2, 
\end{align*}
\begin{align*}
    T_{5b} 
    &\le  \frac{K}{S} \Ept_t \left[ \sum_{i\in \mathcal{S},k\in [K]} \| \nabla f_i(x^{(i)}_{t,k-1}) - \nabla f_i(x_{t}) \|^2 \right] \tag{CS ineq.}\\
    &\le \frac{KL^2}{S} \Ept_t \left[ \sum_{i\in \mathcal{S},k\in [K]} \| x^{(i)}_{t,k-1} - x_{t} \|^2 \right] \tag{$L$-smooth.}\\
    &= K^2L^2\underbrace{\frac{1}{KN} \Ept_t \left[ \sum_{i,k} \| x^{(i)}_{t,k-1} - x_{t} \|^2 \right] }_{E^{(c)}_t} = K^2L^2E^{(c)}_t , \tag{Exp. on $\mathcal{S}$}
    %&\le K^2L^2\big( 4K^2\eta_l^2 B^2 \|\nabla F(x_t)\|^2 + 2K\eta_l^2 (\sigma^2 + 2\eta_lG^2) \big) \tag{Lemma~\ref{lem_client_drift}}
\end{align*}
and due to sampling without replacement, 
\begin{align*}
    T_{5c} &= K^2 \Ept_t \left[ \Big\|\frac{\sum_{i\in \mathcal{S}} \nabla f_i(x_{t})}{S}-  \nabla F(x_t) \Big\|^2 \right] 
    = \frac{K^2}{S} \left( 1-\frac{S}{N} \right) \frac{\sum_{i\in [N]} \|\nabla f_i(x_{t})-  \nabla F(x_t)\|^2}{N-1}\\
    &\le \frac{K^2}{S} \left( 1-\frac{S}{N} \right) \frac{N G^2}{N-1} 
    \le K^2\tau_s G^2 , \tag{Assump.~\ref{assm_globalGrad}} 
    %&= \frac{K^2}{(N-1)} \left( \frac{N}{S}-1 \right) \left( \frac{\sum_{i\in [N]} \|\nabla f_i(x_{t})\|^2}{N} - \|\nabla F(x_t)\|^2 \right)\\
    %&\le K^2\tau_s \Big( G^2 + (B^2-1) \|\nabla F(x_t)\|^2 \Big) \tag{Assump.~\ref{assm_globalGrad}}.
\end{align*}
where $\tau_s = \frac{(N-S)}{S(N-1)}$. Thus, 
%\la{The second inequality might be an equality.}
\begin{align}
    T_5\le 3K^{-2}(T_{5a}+T_{5b}+T_{5c})
    &\le 
    %3(KS)^{-1}\sigma^2 +  3L^2E^{(c)}_t + 3\tau_s \big( G^2 + (B^2-1) \|\nabla F(x_t)\|^2 \big) \nonumber\\
    %&= 
    3\left( \frac{\sigma^2}{KS} +\tau_s G^2 +  L^2E^{(c)}_t \right) . \label{eq_bound_T5}
    %&= 3\eta_l^2K^2 \underbrace{\Big( \frac{\sigma^2}{KS} +\frac{G^2}{(N-1)} (\frac{N}{S}-1) +2KL^2\eta_l^2 (\sigma^2 + 2\eta_lG^2) \Big)}_{=:C_1} \\
    %&\quad+3\eta_l^2K^2 \underbrace{\Big(4L^2K^2\eta_l^2 B^2 +\frac{(B^2-1)}{(N-1)} (\frac{N}{S}-1) \Big)}_{=:C_2}\|\nabla F(x_t)\|^2 
\end{align}

Combining the bounds above for $T_4$ and $T_5$, we have
\begin{align}
    T_2 \le 1.5K_0^2\eta_0^2 \left( \frac{2\gamma^2\sigma_0^2}{K_0} + 2L^2\gamma^2E^{(0)}_t 
                        + (1+\gamma)^2\big\|\nabla\tilde{F}(x_t)\big\|^2 
                        + 3\Big( \frac{\sigma^2}{KS} +  L^2E^{(c)}_t + \tau_s G^2 \Big)\right) . \label{eq_bound_T2}
\end{align}
Using this bound and \eqref{eq_bound_T1} for \eqref{descent_lem}, we obtain
\begin{align}
    \Ept_{t} \left[ \tilde{F}({x}_{t+1}) \right] 
    &\le \tilde{F}(x_t) - K_0\eta_0\big(0.5+\gamma - 1.5(1+\gamma)^2K_0\eta_0 L\big) \|\nabla \tilde F(x_t)\|^2 \nonumber  \\
    &\quad      + K_0\eta_0 L^2\gamma^2 (1+3K_0\eta_0L)E^{(0)}_t   + K_0 \eta_0 L^2 (1+4.5K_0\eta_0 L)E^{(c)}_t \nonumber\\
    %&\quad      + 6K_0^2\eta_0^2\tilde{L}\tau_s(B^2-1) \|\nabla F(x_t)\|^2 \nonumber\\
    &\quad      + K_0^2\eta_0^2L\underbrace{\Big( \frac{3\gamma^2\sigma_0^2}{K_0} + \frac{9\sigma^2}{2KS} + \frac{9\rho_s G^2}{2S} \Big)}_{=:G_2} . \label{eq_descent_G2}
\end{align}
% Let us impose the following condition on the stepsize:
% \begin{align}
%     4K\eta_0 \tilde L \le \tilde{c} \le \frac{2}{3}
% \end{align}
% for some positive constant $\tilde{c}$, which will be selected later. 
Note that the step size condition in \eqref{eq_stepsize_mainCond} implies that $1+3K_0\eta_0L < 1+4.5K_0\eta_0L \le 2$. We then have
\begin{align}
    \Ept_{t} \left[ \tilde{F}({x}_{t+1}) \right] 
    &\le \tilde{F}(x_t) - K_0\eta_0\big( 0.5+\gamma - 1.5(1+\gamma)^2K_0\eta_0 L \big)\|\nabla \tilde F(x_t)\|^2   + K_0^2\eta_0^2 L G_2 \nonumber\\
    &\quad      + 2K_0\eta_0 L^2\underbrace{\Big( \gamma^2 E^{(0)}_t   +  E^{(c)}_t \Big)}_{=:T_6} . \label{eq_descent_T6}
\end{align}
To bound $T_6$, let us use 
%\eqref{eq_client_drift} and \eqref{eq_server_drift} in Lemmas~\ref{lem_client_drift} and \ref{lem_server_drift}, respectively. 
the following results for bounding the drift terms above; 
the proofs of which are give in the next section below.  
\begin{lemma}\label{lem_client_drift}
If $4K\eta_l L \le 1$, then
\begin{align}
    E^{(c)}_t \le 4K^2\eta_l^2 \Big(\|\nabla F(x_t)\|^2 + \frac{\sigma^2}{2K} + G^2 \Big) \label{eq_client_drift}
\end{align}
\end{lemma}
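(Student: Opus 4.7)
The plan is to unroll the client SGD recursion, separate the martingale noise from the deterministic drift, and then run a discrete Gr\"onwall-style induction in $k$. For each client $i$ and $k \in \{0,\ldots,K\}$, set $y^{(i)}_k := x^{(i)}_{t,k}-x_t$, so that $y^{(i)}_0=0$ and $y^{(i)}_k = -\eta_l\sum_{j=0}^{k-1} g^{(i)}_{t,j}$. Writing each stochastic gradient as $g^{(i)}_{t,j} = \nabla f_i(x^{(i)}_{t,j}) + u^{(i)}_j$ with conditionally zero-mean noise $u^{(i)}_j$, the martingale orthogonality underlying Lemma~\ref{lem_SOS_RV} gives $\Ept_t\bigl\|\sum_{j=0}^{k-1} u^{(i)}_j\bigr\|^2 \le k\sigma^2$. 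Splitting via Lemma~\ref{CS-ineq} with $a=1$ then yields
\begin{equation*}
\Ept_t\|y^{(i)}_k\|^2 \;\le\; 2\eta_l^2\,\Ept_t\Bigl\|\sum_{j=0}^{k-1}\nabla f_i(x^{(i)}_{t,j})\Bigr\|^2 + 2k\eta_l^2\sigma^2.
\end{equation*}

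For the deterministic term, apply Lemma~\ref{CS-ineq} once more and decompose $\nabla f_i(x^{(i)}_{t,j}) = (\nabla f_i(x^{(i)}_{t,j})-\nabla f_i(x_t)) + (\nabla f_i(x_t)-\nabla F(x_t)) + \nabla F(x_t)$. Bound the first piece by $L$-smoothness (Assumption~\ref{assm_LossFunc}) and average the second across $i$ using Assumption~\ref{assm_globalGrad}. Writing $\bar D_{t,k} := \frac{1}{N}\sum_{i\in[N]} \Ept_t\|y^{(i)}_k\|^2$ and $A := \|\nabla F(x_t)\|^2 + G^2$, this produces a one-step recursion of the form $\bar D_{t,k} \le 6k^2\eta_l^2 A + 2k\eta_l^2\sigma^2 + 6L^2\eta_l^2 k\sum_{j=0}^{k-1}\bar D_{t,j}$. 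The next step is induction: I claim $\bar D_{t,k} \le 12k^2\eta_l^2 A + 4k\eta_l^2\sigma^2$ for all $0\le k\le K$. Substituting the hypothesis, using $\sum_{j<k} j^2 \le k^3/3$ and $\sum_{j<k} j \le k^2/2$, and invoking $4K\eta_l L \le 1$ (so $K^2L^2\eta_l^2 \le 1/16$) absorbs the convolution term back into the claimed bound with slack. Finally, averaging the resulting estimate over $k=1,\ldots,K$ gives $E^{(c)}_t \le 4K^2\eta_l^2 A + 2K\eta_l^2\sigma^2 = 4K^2\eta_l^2\bigl(\|\nabla F(x_t)\|^2 + G^2 + \sigma^2/(2K)\bigr)$, which is exactly \eqref{eq_client_drift}.

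The main difficulty I anticipate is keeping constants tight enough to land on the coefficient $4$ in front of $K^2\eta_l^2$. A naive $(1+a)$-CS split of the whole update with $a = 1/(K-1)$ would produce a geometric-series blow-up of order $\mathrm{e}^{O(1)}$ that is too loose; separating the noise first via martingale orthogonality, which costs only $O(k)$ instead of $O(k^2)$, is precisely what yields the $\sigma^2/(2K)$ scaling in the statement rather than $\sigma^2/2$. A minor care point is the base case: for $k=0$ the bound is immediate, and when $K=1$ one has $E^{(c)}_t = 0$ trivially since only $x^{(i)}_{t,0}=x_t$ enters the sum, so the inequality holds vacuously.
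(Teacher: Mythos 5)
Your argument is correct and lands on the stated constant, but it takes a genuinely different route from the paper's. The paper follows SCAFFOLD's Lemma~8: a one-step contraction $\Ept\big\|x^{(i)}_{k}-x\big\|^2\le\big(1+\tfrac{1.125}{K-1}\big)\Ept\big\|x^{(i)}_{k-1}-x\big\|^2+\eta_l^2\big(2K\|\nabla f_i(x)\|^2+\sigma^2\big)$, unrolled through the geometric sum $\sum_{k=0}^{K-1}(1+a)^k\le 1.85K$, which yields a per-iterate bound uniform in $k$ and the final coefficient $3.7\le 4$. You instead unroll the whole update, split off the noise via martingale orthogonality (correctly noting that Lemma~\ref{lem_SOS_RV} must be invoked in its martingale-difference form, since the $u^{(i)}_j$ are not independent across $j$), and run an induction on the polynomial ansatz $\bar D_{t,k}\le 12k^2\eta_l^2A+4k\eta_l^2\sigma^2$; the convolution term is absorbed because $K^2L^2\eta_l^2\le 1/16$, and the factor of $3$ you need at the end comes from averaging $k^2$ over $k=0,\dots,K-1$ rather than taking the worst case $k=K$. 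I verified the induction: the inherited term contributes at most $1.5k^2\eta_l^2A+0.75k\eta_l^2\sigma^2$, so the ansatz closes with room to spare, and the final average gives exactly $4K^2\eta_l^2A+2K\eta_l^2\sigma^2$. Two small points. First, your motivating claim that the naive $(1+a)$-split "would be too loose" is not borne out: the paper's geometric factor is $\frac{e^{1.125}-1}{1.125}<1.85$, which yields $3.7\le 4$, and the $\sigma^2/(2K)$ scaling also falls out of that route, since the noise enters the recursion additively as $\eta_l^2\sigma^2$ per step and accumulates to $O(K\eta_l^2\sigma^2)$. Second, the final average must be taken over the iterates that actually appear in $E^{(c)}_t$, namely $x^{(i)}_{t,0},\dots,x^{(i)}_{t,K-1}$, i.e., over $\bar D_{t,0},\dots,\bar D_{t,K-1}$; averaging $\bar D_{t,1},\dots,\bar D_{t,K}$ as literally written would give $\frac{12\eta_l^2A}{K}\sum_{k=1}^{K}k^2>4K^2\eta_l^2A$. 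Your use of $\sum_{j<k}j^2\le k^3/3$ indicates you intend the former, so this is only an indexing slip, not a gap.
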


\begin{lemma}\label{lem_server_drift}
If $4K_0\eta_0\gamma L \le 1$, then
% \begin{align}
%     E^{(0)}_t \le \big(2+4^2K^2L_0^2\eta_0^2 \big)\Ept_t\big\|x_t - \bar{x}_t \big\|^2 + 4^2K^2\eta_0^2\|\nabla f_0(x_t)\|^2 + 4K\eta_0^2\sigma_0^2 \label{drift}
% \end{align}
\begin{align}
E^{(0)}_t 
&\le 12K_0^2\eta_0^2 \Big( L^2 E^{(c)}_t  + \|\nabla F(x_t)\|^2 + \frac{4}{3}\gamma^2\|\nabla f_0(x_t)\|^2  + G_3 \Big),\label{eq_server_drift}
\end{align}
where $G_3:=\frac{\sigma^2}{KS} + \frac{\rho_s G^2}{S} + \frac{\gamma^2\sigma_0^2}{3K_0}$.
\end{lemma}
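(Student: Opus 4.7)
The plan is to decompose $w_{t,k-1}-x_t$ into an aggregation piece and a server-SGD piece. Since $w_{t,k-1}=\bar x_t-\gamma\eta_0\sum_{j=0}^{k-2}g^{(0)}_{t,j}$, applying the Cauchy--Schwarz inequality of Lemma~\ref{CS-ineq} yields $\|x_t-w_{t,k-1}\|^2\le 2\|x_t-\bar x_t\|^2+2\gamma^2\eta_0^2\|\sum_{j=0}^{k-2}g^{(0)}_{t,j}\|^2$. Averaging over $k\in[K_0]$ and taking $\Ept_t[\cdot]$ reduces the claim to bounding two quantities: (i) $\Ept_t\|\bar x_t-x_t\|^2$, and (ii) the average $K_0^{-1}\sum_{k=1}^{K_0}\Ept_t\|\sum_{j=0}^{k-2}g^{(0)}_{t,j}\|^2$.

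For (i), I use $\bar x_t-x_t=-K_0\eta_0(KS)^{-1}\sum_{i\in\mathcal{S},k}g^{(i)}_{t,k-1}$ (which follows from $K\eta_g\eta_l=K_0\eta_0$), then split $(KS)^{-1}\sum g^{(i)}=\nabla F(x_t)+[(KS)^{-1}\sum g^{(i)}-\nabla F(x_t)]$ by Cauchy--Schwarz to get $\Ept_t\|\bar x_t-x_t\|^2\le 2K_0^2\eta_0^2\bigl(\|\nabla F(x_t)\|^2+T_5\bigr)$, and plug in the already-derived bound $T_5\le 3\bigl(\sigma^2/(KS)+\rho_sG^2/S+L^2E^{(c)}_t\bigr)$ from the proof of Theorem~\ref{thm_descent_lem}. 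This contributes the $\|\nabla F(x_t)\|^2$, $L^2E^{(c)}_t$, $\sigma^2/(KS)$, and $\rho_sG^2/S$ terms of the final bound.

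For (ii), I decompose $g^{(0)}_{t,j}=\nabla f_0(w_{t,j})+\xi_j$ with $\xi_j$ conditionally zero-mean given the server-trajectory filtration up to step $j$, and $\Ept_t\|\xi_j\|^2\le\sigma_0^2$ by Assumption~\ref{assm_localGrad}. Since $\{\xi_j\}$ forms a martingale difference sequence, the natural conditional analogue of Lemma~\ref{lem_SOS_RV} gives $\Ept_t\|\sum_{j=0}^{k-2}\xi_j\|^2\le(k-1)\sigma_0^2$. For the deterministic partial sum, Cauchy--Schwarz and the $L$-smoothness of $f_0$ yield $\Ept_t\|\sum_{j=0}^{k-2}\nabla f_0(w_{t,j})\|^2\le 2(k-1)\sum_{j=0}^{k-2}\bigl(L^2\Ept_t\|w_{t,j}-x_t\|^2+\|\nabla f_0(x_t)\|^2\bigr)$. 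Invoking the arithmetic identities $\sum_{k=1}^{K_0}(k-1)\le K_0^2/2$ and $\sum_{k=1}^{K_0}(k-1)^2\le K_0^3/3$, together with $\sum_{j=0}^{K_0-1}\Ept_t\|w_{t,j}-x_t\|^2=K_0 E^{(0)}_t$, produces contributions of the form $\gamma^2\eta_0^2 K_0\sigma_0^2$, $\gamma^2\eta_0^2 K_0^2L^2 E^{(0)}_t$, and $\gamma^2\eta_0^2 K_0^2\|\nabla f_0(x_t)\|^2$. The appearance of $E^{(0)}_t$ on the right-hand side makes this an implicit inequality in $E^{(0)}_t$.

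The recursion is then closed using the hypothesis $4K_0\eta_0\gamma L\le 1$, which forces the coefficient of $E^{(0)}_t$ on the right-hand side to be at most $1/2$; subtracting it to the left and doubling yields the explicit bound of the claimed form. The main obstacle is tracking the numerical constants so that they collapse to exactly $12$ in front and $\tfrac{4}{3}$ multiplying $\gamma^2\|\nabla f_0(x_t)\|^2$; this is delicate because a symmetric $(2,2)$ Cauchy--Schwarz split in the outer decomposition slightly over-inflates the coefficients of $L^2E^{(c)}_t$ and the error terms after the amplification from inverting $1-c$ with $c\le 1/4$. Matching the target therefore requires using parameterized Young splits $\|a+b\|^2\le(1+a')\|a\|^2+(1+1/a')\|b\|^2$ at the outer step (and, if needed, at the bias step inside (i)) with $a'$ tuned so that the recursion closes with the advertised constants.
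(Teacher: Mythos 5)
Your proposal is correct, and it shares the paper's outer skeleton --- split $x_t-w_{t,k-1}$ through $\bar x_t$, then bound $\Ept_t\|\bar x_t-x_t\|^2$ by recycling the $T_5$ bound from the proof of Theorem~\ref{thm_descent_lem} --- but it handles the server-drift piece by a genuinely different device. The paper measures the server's local drift relative to $w_{t,0}=\bar x_t$ and reuses the recursive unrolling of Lemma~\ref{lem_client_drift}, which yields a closed-form bound in terms of $\|\nabla f_0(\bar x_t)\|^2$ and $\sigma_0^2/K_0$ with no self-reference; it then converts $\|\nabla f_0(\bar x_t)\|^2$ to $\|\nabla f_0(x_t)\|^2$ via $L$-smoothness, absorbing the resulting $16K_0^2\eta_0^2L^2\gamma^2\Ept_t\|\bar x_t-x_t\|^2$ term using $4K_0\eta_0\gamma L\le 1$. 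You instead expand the partial sums $\sum_j g^{(0)}_{t,j}$ directly, anchor the smoothness at $x_t$ from the start, and arrive at a self-referential inequality whose $E^{(0)}_t$-coefficient, $4\gamma^2\eta_0^2K_0^2L^2\le 1/4$, is killed by the same step-size hypothesis; the resulting amplification factor $4/3$ is mild enough that your constants ($\tfrac{8}{3}\gamma^2\eta_0^2K_0\sigma_0^2$, $\tfrac{32}{9}K_0^2\eta_0^2\gamma^2\|\nabla f_0(x_t)\|^2$, $\tfrac{32}{3}K_0^2\eta_0^2$ on the aggregation terms) all sit below the stated $12K_0^2\eta_0^2(\cdots)$ bound. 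You correctly identify the one place where care is needed: a symmetric $(2,2)$ split of $\tfrac{1}{KS}\sum g^{(i)}=\nabla F(x_t)+[\cdots]$ gives $6K_0^2\eta_0^2$ on the $L^2E^{(c)}_t$, $\sigma^2/(KS)$, and $\rho_sG^2/S$ terms, which after the $8/3$ amplification overshoots $12$; the asymmetric split $4\|\nabla F(x_t)\|^2+\tfrac{4}{3}T_5$ (which is exactly what the paper uses) brings this to $4K_0^2\eta_0^2$ and closes the argument. What your route buys is self-containment --- it does not depend on the unrolling machinery of Lemma~\ref{lem_client_drift} --- at the cost of a recursion closure; the paper's route avoids the recursion entirely by choosing the anchor point $\bar x_t$ for the inner drift.
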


Using the results above, we can continue to bound $T_6$ in \eqref{eq_descent_T6} as follows. 
%Consider
% In particular, 
\begin{align*}
    T_6 = \gamma^2 E^{(0)}_t   + E^{(c)}_t 
    &\le \underbrace{(1 + 12K_0^2\eta_0^2L_0^2) E^{(c)}_t}_{=:T_7}   + 12K_0^2\eta_0^2\gamma^2  \Big( \|\nabla F(x_t)\|^2 + \frac{4}{3}\gamma^2\|\nabla f_0(x_t)\|^2  + G_3 \Big) , 
\end{align*}
where $L_0=\gamma L$. Under condition \eqref{eq_stepsize_mainCond}, we have $12K_0^2\eta_0^2L_0^2 \le 1$. Then, 
\begin{align}
    T_7 &\le 2 E^{(c)}_t \le 8K^2\eta_l^2 \Big( \|\nabla F(x_t)\|^2 + \frac{\sigma^2}{2K} + G^2 \Big) . \tag{cf. \eqref{eq_client_drift} }
\end{align}
As a result, 
%\la{I don't follow the second inequality. Did we state somewhere $K_0 \eta_0 \leq 1$?}
\begin{align*}
    T_6 &= \gamma^2 E^{(0)}_t   + E^{(c)}_t \\
    &\le 8K_0^2\eta_0^2\eta_g^{-2}\Big( \|\nabla F(x_t)\|^2 + \frac{\sigma^2}{2K} + G^2 \Big) \tag{using $K_0\eta_0 = K\eta_l \eta_g$ }\\
    &\quad + 12K_0^2\eta_0^2\gamma^2  \Big( \|\nabla F(x_t)\|^2 + \frac{4}{3}\gamma^2\|\nabla f_0(x_t)\|^2  + G_3 \Big)\\
    &\le K_0^2\eta_0^2 \|\nabla F(x_t)\|^2 \big( 8\eta_g^{-2} +  12\gamma^2\big)  + K_0^2\eta_0^2 \gamma\|\nabla f_0(x_t)\|^2 \big(16 \gamma^3\big)\\
    &\quad + 4K_0^2\eta_0^2 \underbrace{\big( 3\gamma^2G_3 + \eta_g^{-2}(2G^2+\sigma^2K^{-1}) \big)}_{=:G_4} \\
    &\le 4\kappa K_0^2\eta_0^2\big(\|\nabla F(x_t)\|^2 +\gamma\|\nabla f_0(x_t)\|^2\big) + 4K_0^2\eta_0^2 G_4 \tag{$\kappa = \max\{4 \gamma^3,2\eta_g^{-2} +  3\gamma^2\}$}
    %
    %&\le \kappa K\eta_0 \bar{L} \big(\|\nabla F(x_t)\|^2 +\|\nabla f_0(x_t)\|^2\big) + K^2\eta_0^2 G_4
\end{align*}
Again, since $\tilde F = \frac{1}{(1+\gamma)}(F+\gamma f_0)$, we have
$$\|\nabla F(x_t)\|^2 + \gamma \|\nabla f_0(x_t)\|^2 = (1+\gamma)\|\nabla \tilde F(x_t)\|^2 +\frac{\gamma }{1+\gamma }\underbrace{\|\nabla F(x_t) - \nabla f_0(x_t)\|^2}_{\xi^2(x_t)}.$$ 
Therefore, 
\begin{align*}
    T_6 \le 4\kappa K_0^2\eta_0^2 \Big( (1+\gamma )\|\nabla \tilde F(x_t)\|^2 +\frac{\gamma}{1+\gamma} \xi^2(x_t)\Big) + 4K_0^2\eta_0^2G_4, 
\end{align*}
which, in light of \eqref{eq_descent_T6}, implies
\begin{align}
    \Ept_{t} \left[ \tilde{F}({x}_{t+1}) \right] 
    &\le \tilde{F}(x_t) - K_0\eta_0(0.5+\gamma - 1.5(1+\gamma)^2K_0\eta_0 L)\|\nabla \tilde F(x_t)\|^2   + K_0^2\eta_0^2 L G_2 + 2K_0\eta_0L^2T_6 \nonumber\\
    &\le \tilde{F}(x_t) - \frac{K_0\eta_0}{2} \Big(2\gamma +1 - K_0\eta_0 L(1+\gamma)\big( 3(\gamma+1) + 16\kappa K_0\eta_0L \big) \Big)\|\nabla \tilde F(x_t)\|^2   \nonumber\\
    &\quad + K_0^2\eta_0^2 LG_2 + 8K_0^3\eta_0^3L^2\big( \textstyle\frac{\gamma\kappa}{1+\gamma} \xi^2(x_t) + G_4 \big) .%\nonumber\\
    %&\le \tilde{F}(x_t) - K_0\eta_0 \|\nabla \tilde F(x_t)\|^2  + K_0^2\eta_0^2(\textstyle\frac{2\gamma}{1+\gamma}\kappa\bar{L} \xi^2(x_t) + \tilde{L}G_2 + 2K_0\eta_0G_4) 
    %\tag{cf. \eqref{eq_stepsize_mainCond}} 
\end{align}
The proof is then completed by noting that $G_3 \le \frac{2}{9}G_2 \le \Psi$ and $G_4 \le \Phi$. %\hfill\qedsymbol
% Suppose
% \begin{align}
%     4K\eta_0 (\tilde L + \kappa \bar{L}) \le 0.5
% \end{align}
% Then 
% \begin{align}
%     \Ept_{t} \tilde{F}({x}_{t+1}) 
%     \le \tilde{F}(x_t) - K\eta_0 \|\nabla \tilde F(x_t)\|^2  + K^2\eta_0^2(\kappa\bar{L} \xi^2(x_t) + \tilde{L}G_2 + 2K\eta_0G_4) \label{eq_descent_2}
% \end{align}
%\end{IEEEproof}

\subsection{Proof of Lemma~\ref{lem_client_drift}}
% \begin{IEEEproof}
% \begin{align}
%     E^{(c)}_t \le 4K^2\eta_l^2 \Big( B^2 \|\nabla F(x_t)\|^2 + \frac{\sigma^2}{2K} + G^2 \Big) \label{eq_client_drift}
% \end{align}

The proof follows the same line of arguments as in the proof of Lemma~8 in \cite{Karimireddy2020}; we provide it here %with a tighter bound 
for completeness and for later reference in the proof of Lemma~\ref{lem_server_drift}. 

For simplicity, we drop the index $t$ in this proof, including conditional expectation $\Ept_t$. Clearly, the result holds for $K=1$ and thus we consider only $K\ge 2$ below. 
%\la{Is this expectation or conditional expectation? If it is expectation, we need to have expectation throughout. -- This is conditioned on $x_t$, but the dependence on $t$ is omitted.}
\begin{align*}
    \Ept \left[ \big\| x^{(i)}_{k} - x \big\|^2 \right] 
    &= \Ept \big\| x^{(i)}_{k-1} - x - \eta_l g^{(i)}_{k-1}\big\|^2 
    \le \Ept \big\| x^{(i)}_{k-1} - x - \eta_l \nabla f_i(x^{(i)}_{k-1})\big\|^2 + \eta_l^2 \sigma^2 \\
    &\le \Big( 1+ \frac{1}{K-1}\Big) \Ept \left[ \big\| x^{(i)}_{k-1} - x \big\|^2 \right] + K\eta_l ^2\big\|\nabla f_i(x^{(i)}_{k-1})\big\|^2 + \eta_l^2 \sigma^2 \tag{CS ineq.} \\
    &\le \frac{K}{K-1}\Ept \big\| x^{(i)}_{k-1} - x \big\|^2 + 2K\eta_l ^2\big\|\nabla f_i(x^{(i)}_{k-1}) - \nabla f_i(x)\big\|^2 + 2K\eta_l ^2\big\|\nabla f_i(x)\big\|^2 + \eta_l^2 \sigma^2 \tag{CS ineq.} \\
    &\le \Big(\frac{K}{K-1} + 2K\eta_l^2L^2\Big) \Ept \left[ \big\| x^{(i)}_{k-1} - x \big\|^2 \right] + 2K\eta_l^2\big\|\nabla f_i(x)\big\|^2 + \eta_l^2 \sigma^2 \tag{$L$ smooth.} \\
    &\le (1+a )\Ept \left[ \big\| x^{(i)}_{k-1} - x \big\|^2 \right] + \eta_l^2\big( 2K\big\|\nabla f_i(x)\big\|^2 + \sigma^2\big) ,\tag{$a:=\frac{1.125}{K-1}$}
\end{align*}
where the last inequality holds because $4K\eta_lL\le 1$  and $2K\eta_l^2L^2 = \frac{(4K\eta_lL)^2}{8K}< \frac{1}{8(K-1)}$ for any $K$. Unrolling the relation above
\begin{align}
\Ept \left[ \big\| x^{(i)}_{k} - x \big\|^2 \right]
    &\le \eta_l^2\big( 2K\big\|\nabla f_i(x)\big\|^2 + \sigma^2\big) \sum_{k=0}^{K-1} (1+a)^k \le 1.85K \eta_l^2\big( 2K\big\|\nabla f_i(x)\big\|^2 + \sigma^2\big) , \label{eq_client_i_drift}
\end{align}
where the last inequality holds since $a=\frac{1.125}{K-1}$ and $\sum_{k=0}^{K-1}\frac{ (1+a)^k}{K} = \frac{(1+a)^K-1}{aK} < \frac{e^{1.125}-1}{1.125} < 1.85$ for any $K\ge 2$.  %\footnote{To see this, note that} 
Thus, averaging the above relation over $k$ and $i$ yields
\begin{align*}
\Ept \left[ \frac{1}{KN}\sum_{i,k}\big\| x^{(i)}_{k} - x \big\|^2 \right]
    \le 3.7K^2\eta_l^2\Big(\frac{\sum_{i}\|\nabla f_i(x)\|^2}{N} + \frac{\sigma^2}{2K}\Big) 
    \le 3.7K^2\eta_l^2\Big( \|\nabla F (x)\|^2 + G^2+ \frac{\sigma^2}{2K}\Big). %\hfill\qed
\end{align*}

% \end{IEEEproof}

\subsection{Proof of Lemma~\ref{lem_server_drift}}
% \begin{IEEEproof}
Note that $w_{t,0} = \bar{x}_t$ and 
\begin{align} 
E^{(0)}_t &= \Ept_t \left[ \frac{1}{K_0} \sum_{k\in [K_0]}\big\|x_t - w_{t,k-1} \big\|^2 \right]
\le \Ept_t \left[ \frac{2}{K_0} \sum_{k\in [K_0]} \Big( \big\|x_t - \bar{x}_t\big\|^2 +\big\| \bar{x}_t - w_{t,k-1} \big\|^2 \Big) \right] \tag{CS ineq.} \\
&= 2\Ept_t \left[ \big\|x_t - \bar{x}_t\big\|^2 \right] + \frac{2}{K_0}\Ept_t \left[ \sum_{k\in [K_0]}\big\| w_{t,k-1} - w_{t,0} \big\|^2 \right] . 
\end{align}
Following the same line of arguments to obtain \eqref{eq_client_i_drift} as in the proof of Lemma~\ref{lem_client_drift}, we have
\begin{align}
    \Ept_t \left[ \frac{1}{K_0}\sum_{k\in [K_0]}\big\| w_{t,k-1} - w_{t,0} \big\|^2 \right]
    \le 4K_0^2\eta_0^2\gamma^2 \big( \|\nabla f_0(\bar{x}_t)\|^2 + \frac{\sigma_0^2}{2K_0} \big) .
\end{align}
Note that
\begin{align}
    \|\nabla f_0(\bar{x}_t)\|^2 
    &= \|\nabla f_0(x_t) + \nabla f_0(\bar{x}_t) - \nabla f_0(x_t) \|^2 
    \le 2\|\nabla f_0(x_t)\|^2 + 2\|\nabla f_0(\bar{x}_t) - \nabla f_0(x_t) \|^2 \tag{CS ineq.} \\
    &\le  2\|\nabla f_0(x_t)\|^2 + 2L^2 \|\bar{x}_t - x_t \|^2 \tag{$L$-smooth.}
\end{align}
Therefore, 
\begin{align*} 
E^{(0)}_t 
&\le (2+4^2K_0^2\eta_0^2L^2\gamma^2)\Ept_t\big\|x_t - \bar{x}_t\big\|^2 + 4^2K_0^2\eta_0^2\gamma^2\|\nabla f_0(x_t)\|^2 + 4K_0\eta_0^2 \gamma^2\sigma_0^2 \\
&\le 3\Ept_t \left[ \big\|x_t - \bar{x}_t\big\|^2 \right] + 4^2K_0^2\eta_0^2\gamma^2\|\nabla f_0(x_t)\|^2 + 4K_0\eta_0^2 \gamma^2\sigma_0^2 . \tag{cf. $4K_0\eta_0\gamma L \le 1$}
\end{align*}

Next, let us consider the term $\Ept_t\big\|\bar{x}_t - x_t\big\|^2$. Note that 
\begin{align*}
    \frac{1}{K^2\eta_g^2\eta_l^2} \Ept_t \left[ \big\|\bar{x}_t - x_t\big\|^2 \right] 
    &= \Ept_{t} \left[ \Big\| \Big( \frac{1}{KS}\sum_{i\in \mathcal{S}, k\in [K]} g^{(i)}_{t,k-1}\Big) - \nabla F(x_t) + \nabla F(x_t) \Big\|^2 \right] \\
    &\le 4\|\nabla F(x_t)\|^2 + \frac{4}{3}\underbrace{\Ept_{t} \left[ \Big\| \Big( \frac{1}{KS}\sum_{i\in \mathcal{S}, k\in [K]} g^{(i)}_{t,k-1}\Big) - \nabla F(x_t)\Big\|^2 \right] }_{=T_5 \text{ in \eqref{eq_T2_T4T5}} } \tag{CS ineq.} \\
    &\le 4\Big( (KS)^{-1}\sigma^2 + \tau_sG^2+ L^2E^{(c)}_t + \|\nabla F(x_t)\|^2 \Big) .  \tag{cf.~\eqref{eq_bound_T5}}
    %&\le K^2\Big( 1 + 3\tau_s(B^2-1) + 12 K^2\eta_l^2 L^2 B^2 \Big)\|\nabla F(x_t)\|^2 \tag{Lemma~\ref{lem_client_drift}} \\
    %&\quad + 6K^3\eta_l^2L^2 (\sigma^2 + 2KG^2) + 3KS^{-1}\sigma^2 +  3K^2\tau_sG^2
\end{align*}
Therefore, 
$\Ept_t \left[ \big\|\bar{x}_t - x_t\big\|^2 \right] \le 4K_0^2\eta_0^2 \Big( \frac{\sigma^2}{KS} + \tau_sG^2+ L^2E^{(c)}_t + \|\nabla F(x_t)\|^2 \Big)$. Thus, 
\begin{align*} 
E^{(0)}_t 
&\le 12K_0^2\eta_0^2 \Big( L^2 E^{(c)}_t  + \|\nabla F(x_t)\|^2 + \frac{4}{3}\gamma^2\|\nabla f_0(x_t)\|^2  + G_3 \Big), \quad \text{with } G_3=\frac{\sigma^2}{KS} + \tau_sG^2 + \frac{\gamma^2\sigma_0^2}{3K_0}. %\hfill\qed
\end{align*}
% \end{IEEEproof}

% \subsection{FL as a Special Case}
% FL can be seen as a special case of our result above with $f_0 \equiv 0$, in which case one has $\sigma_0 = L_0 = 0$. In fact, we have the following result.

% \begin{corollary}
% \label{cor_descent_lem_FL}
% Let $\kappa_0 = \frac{2B^2}{3\eta_g^2}$ and suppose that $f_0 \equiv 0$ the step sizes satisfy $4K\eta_0 L(1+ 2\kappa_0) \le 1$. 
% Then for any $t\ge 0$, 
% \begin{align}
%     \Ept_{t} {F}({x}_{t+1}) 
%     \le {F}(x_t) - K\eta_0\big(\frac{1}{2}-K\eta_0\kappa_0L \big) \|\nabla F(x_t)\|^2  + K^2\eta_0^2 LG'_2 + K^3\eta_0^3 2G'_4 \label{eq_descent_lemma}
% \end{align}
% where $G'_2 = \frac{3\sigma^2}{KS} + 3\tau_s G^2 $ and $G'_4 = \frac{4L^2}{\eta_g^2}\big( 2G^2+\frac{\sigma^2}{K}\big)$. 
% \end{corollary} 

\subsection{Proof of Theorem \ref{thm_progress_overall}}
From \eqref{eq_descent_lemma} we obtain 
\begin{align}
    h\Ept \|\nabla \tilde F(x_t)\|^2 
    &\le \frac{\Ept \tilde{F}(x_t) - \Ept \tilde{F}({x}_{t+1})}{K_0\eta_0}   + 5K_0\eta_0 L\Psi  + 8K_0^2\eta_0^2L^2\big( \textstyle\frac{\gamma\kappa}{1+\gamma} \bar{\xi}^2 + \Phi \big) \nonumber\\
    &= \frac{\Ept \big[ \tilde{F}(x_t) - \tilde{F}^*\big] - \Ept \big[ \tilde{F}({x}_{t+1}) - \tilde{F}^*\big] }{K_0\eta_0}   + 5K_0\eta_0 L\Psi  + 8K_0^2\eta_0^2L^2\big( \textstyle\frac{\gamma\kappa}{1+\gamma} \bar{\xi}^2 + \Phi \big) . \nonumber %\label{eq_descent_lemma2}
\end{align}
Summing this relation over $t = 0,\ldots,T-1$ and then simplifying terms yields the desired result.

\subsection{Corollary~\ref{coro:1} and its proof} \label{subsec_coro_proof}
The following results show the convergence error for different step size conditions, which subsume Corollary~\ref{coro:1}. 
\begin{corollary}
Assume that condition \eqref{eq_stepsize_mainCond2} is satisfied. 
\begin{itemize}
    \item[(a)] If $K_0\eta_0 = \Theta(1/(\gamma+1)\sqrt{T})$, then
    \begin{align}
        \min_{0\le t \le T-1}\Ept \|\nabla \tilde F(x_t)\|^2 = \mathcal{O} \left( \frac{\tilde{D}_0}{\sqrt{T}}   + \frac{L\Psi}{\sqrt{T}(1+\gamma)^2} + \frac{L^2\gamma\kappa\bar{\xi}^2 }{T(1+\gamma)^4} + \frac{L^2\Phi}{T(1+\gamma)^3} \right), \nonumber
    \end{align}
    where $\tilde{D}_0 =\tilde{F}(x_0) - \tilde{F}^*$.
    
    \item[(b)]  If $\eta_g = \Theta(\sqrt{S})$ and $K_0\eta_0 = \Theta \Big( \frac{\sqrt{S}}{\sqrt{LT}(\gamma+1)} \Big)$, then 
    \begin{align}
        \min_{0\le t \le T-1} \Ept \|\nabla \tilde F(x_t)\|^2 
        &= \mathcal{O} \left( \frac{\sqrt{L}}{\sqrt{ST}}   \big( \tilde{D}_0+\frac{\tilde{G}^2}{(\gamma+1)^2}\big)  + \frac{L}{T} \big( \frac{\tilde{G}^2}{1+\gamma} + \frac{\gamma\kappa S\bar{\xi}^2}{(1+\gamma)^4} \big) \right), \nonumber%\label{eq_convergence_rate_b}
    \end{align}
    where  $\tilde{G}^2 = \frac{\gamma^2\sigma_0^2S}{K_0} + \frac{\sigma^2}{K}+ \rho_s G^2$.

    \item[(c)] If $\eta_g = \Theta(\sqrt{S})$, $K_0 = \Theta(K)$ and $K_0\eta_0 = \Theta \Big( \frac{\sqrt{KS}}{\sqrt{LT}(\gamma+1)} \Big)$, then 
    \begin{align}
        \min_{0\le t \le T-1} \Ept \|\nabla \tilde F(x_t)\|^2 
        &= \mathcal{O} \left( \frac{\sqrt{L}}{\sqrt{KST}}   \Big( \tilde{D}_0+\frac{M^2}{(\gamma+1)^2} \Big)  + \frac{L}{T} \Big( \frac{M^2}{1+\gamma} + \frac{\gamma\kappa KS\bar{\xi}^2}{(1+\gamma)^4} \Big) \right) \nonumber%\label{eq_convergence_rate_c}
    \end{align}
    with $M^2 = \gamma^2\sigma_0^2S +\sigma^2 + \rho_s KG^2$.
\end{itemize}
\end{corollary}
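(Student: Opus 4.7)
The plan is to instantiate Theorem~\ref{thm_progress_overall} with each of the step-size schedules in (a)--(c) and simplify. Since condition~\eqref{eq_stepsize_mainCond2} guarantees $h \ge (3\gamma+1)/4 = \Theta(1+\gamma)$, dividing both sides of the bound in Theorem~\ref{thm_progress_overall} by $h$ introduces a factor of order $1/(1+\gamma)$ into each of the three summands: $\tilde{D}_0/(TK_0\eta_0)$, $5K_0\eta_0 L\Psi$, and $8K_0^2\eta_0^2 L^2(\gamma\kappa\bar{\xi}^2/(1+\gamma) + \Phi)$. Before substituting, I would verify for each schedule that the proposed $K_0\eta_0$ does satisfy~\eqref{eq_stepsize_mainCond2}; because each schedule makes $K_0\eta_0$ decay in $T$, this holds automatically once $T$ exceeds a problem-dependent threshold.

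For part (a) with $K_0\eta_0 = \Theta(1/((1+\gamma)\sqrt{T}))$, direct substitution yields $\tilde{D}_0/(TK_0\eta_0) = \Theta(\tilde{D}_0(1+\gamma)/\sqrt{T})$, so after dividing by $h$ this is $\Theta(\tilde{D}_0/\sqrt{T})$; the $\Psi$-term becomes $\Theta(L\Psi/(\sqrt{T}(1+\gamma)^2))$; and the $\bar{\xi}^2$-plus-$\Phi$ term becomes $\Theta(L^2(\gamma\kappa\bar{\xi}^2/(1+\gamma)+\Phi)/(T(1+\gamma)^3))$, which matches the claimed expression.

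For parts (b) and (c), the additional choice $\eta_g = \Theta(\sqrt{S})$ eliminates the $\eta_g^{-2}$ contribution inside $\Phi = \gamma^2\Psi + (2\gamma^2/S)(\sigma^2/K + \rho_s G^2) + \eta_g^{-2}(2G^2 + \sigma^2/K)$, so that $\Phi = \mathcal{O}((1+\gamma^2)\Psi)$. Moreover, $\Psi$ specializes to $\mathcal{O}(\tilde{G}^2/K_0)$ in case (b) and, under the further choice $K_0 = \Theta(K)$, to the $M^2/(KS)$ grouping in case (c). Then setting $K_0\eta_0 = \Theta(\sqrt{S/(LT)}/(1+\gamma))$ in (b), or $\Theta(\sqrt{KS/(LT)}/(1+\gamma))$ in (c), balances the first term $\tilde{D}_0/(TK_0\eta_0)$ against $K_0\eta_0 L\Psi$ via the standard AM-GM trick, producing the dominant $1/\sqrt{ST}$ and $1/\sqrt{KST}$ rates, respectively; the third term is then automatically of lower order in $T$ and contributes the $1/T$ residuals shown.

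The main technical obstacle is bookkeeping rather than any nontrivial estimate: one must carefully track how $\eta_g$, $K$, and $K_0$ propagate through the definitions of $\Psi$ and $\Phi$, and confirm that for the chosen schedules the $K_0^2\eta_0^2$ term in Theorem~\ref{thm_progress_overall} is strictly subdominant in $T$ compared to the balanced first two terms. Once the simplification $\Phi = \mathcal{O}((1+\gamma^2)\Psi)$ under $\eta_g = \Theta(\sqrt{S})$ is established and the substitutions into $\Psi$ under (b) and (c) are made, the remaining calculation is a routine rebalancing, and Corollary~\ref{coro:1} follows as the special case (c) of this statement.
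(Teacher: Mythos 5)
Your proposal is correct and follows essentially the same route as the paper: the paper's own proof simply notes that condition \eqref{eq_stepsize_mainCond2} gives $h=\Omega(\gamma+1)$, divides the bound of Theorem~\ref{thm_progress_overall} by $h$, and substitutes each step-size schedule. Your additional bookkeeping on how $\eta_g=\Theta(\sqrt{S})$ collapses $\Phi$ to $\mathcal{O}((1+\gamma^2)\Psi)$ and how $\Psi$ specializes to $\tilde{G}^2/S$ and $M^2/(KS)$ matches the remarks the paper makes just before the corollary.
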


\begin{proof}
Note that under condition \eqref{eq_stepsize_mainCond2}, we have $h = \Omega(\gamma+1)$. Thus, the proof follows immediately from dividing both sides of the inequality in Theorem~\ref{thm_progress_overall} by $h$ and then using the step size conditions in each statement. 
\end{proof}

\subsection{Special Cases} \label{sec_FSL_special_cases}
Let us show here that our FSL approach includes FedAvg and centralized SGD as special cases.

\paragraph{FedAvg as a Special Case} 
Clearly, FedAvg is the special case of our formulation with $\gamma \to 0$, i.e., there is no server learning. Take Corollary 3.7, for example, although the bound on the RHS is by no means tight, it’s clear that using a positive $\gamma$ will lower RHS (provided $\bar{\xi}$ small compared to $G$), especially the coefficient of the dominant error term $\frac{1}{\sqrt{T}}$ in the bound, which is $\tilde{D}_0 + \frac{G^2}{(1+\gamma)^2}$. Note also that $\tilde{D}_0 = \mathcal{O}(\frac{D_0}{1+\gamma})$ for small $\gamma$. We briefly discussed this after Corollary 3.7. 

\paragraph{Centralized SGD as a Special Case} Consider the case where Server has access to all training data of clients and thus can just perform local/centralized learning. This extreme case can be approximated by FSL using $\bar{\xi}=0$ and a sufficiently large value of $\gamma$. Since Corollary 3.7 is for $\gamma = \mathcal{O}(1)$, we can use Theorem 3.6 instead. 

For large $\gamma$, we have $\kappa = \Theta(\gamma^3)$. From (9), let us choose $K_0\eta_0L =\Theta(1/\gamma^2)$, which implies $h =\Theta(\gamma)$, $\Phi =\Theta(\gamma^2 \Psi)$, and $\Psi =\Theta( \frac{\gamma^2 \sigma_0^2}{K_0} + \tilde{G})$ with $\tilde{G} = \frac{\sigma^2}{KS} + \frac{\rho_s G^2}{S}$. 
Let $\alpha = \eta_0 \gamma$, which is the stepsize of server in Algorithm 1. 
As a result, Theorem 3.6 implies
$\mathcal{E}_T = \mathcal{O}(\frac{\tilde{D}}{TK_0 \alpha} + L\sigma_0^2\alpha + \frac{\tilde{G}}{\gamma^3})$. The first two terms are exactly the error bound of SGD with a fixed stepsize, and the last term is the contribution of clients, which is negligible for large $\gamma$.

% \newpage
\section{Further Numerical Results} \label{sec_further_result}
In this section, we present detailed models and further simulation results. 

\subsection{Neural Network Models Used in Our Experiments}\label{subsec_NN_models}
In our experiments, we use networks with 2 convolutional layers followed by 2 dense layers as shown in Tables \ref{tab_EMNIST_model} and  \ref{tab_CIFAR10_model}. Note that these models are enough for our purpose of illustrations and comparing different algorithms; they are by no means designed to achieve the state-of-the-art accuracy. 
\begin{table}[tbh]
    \caption{Model used in EMNIST experiments}
    \label{tab_EMNIST_model}
    \vskip 0.15in
    \centering
    \begin{tabular}{ccccc}
    \toprule
        Layer & Output Shape & Param. \#  & Activation & Hyper-param. \\
    \midrule
        Input & $(28, 28, 1)$ &      &  &  \\
        Conv2D & $(28, 28, 32)$ &     $320$  & relu & kernel size = 3, stride = $(1,1)$ \\
        Conv2D & $(26, 26, 64)$ &     $18,496$ & relu & kernel size = 3, stride = $(1,1)$ \\ 
        MaxPooling2D & $(13, 13, 64)$ & & & pool size = $(2, 2)$\\
        Dropout & $(13, 13, 64)$ & & & $p=0.25$ \\
        Flatten & $10,816$ & & & \\
        Dense   &  $128$  & $1,384,576$ & relu & \\
        Dropout & $128$  & & & $p=0.5$ \\
        Dense   & $45$  & $5,805$ & softmax & \\
    \bottomrule
    \end{tabular}
    \vskip -0.1in
\end{table}

\vskip -0.1in
\begin{table}[tbh]
\caption{Model used in CIFAR-10 experiments}
    \label{tab_CIFAR10_model}
    \centering
    \begin{tabular}{ccccc}
    \toprule
        Layer & Output Shape & Param. \#  & Activation & Hyper-param. \\
    \midrule 
        Input & $(32, 32, 3)$ &      &  &  \\
        Conv2D & $(32, 32, 32)$ &     $896$  & relu & kernel size = 3, stride = $(1,1)$ \\
        MaxPooling2D & $(13, 13, 64)$ & & & pool size = $(2, 2)$\\
        Conv2D & $(16, 16, 64)$ &     $18,496$ & relu & kernel size = 3, stride = $(1,1)$ \\ 
        MaxPooling2D & $(8, 8, 64)$ & & & pool size = $(2, 2)$\\
        Dropout & $(8, 8, 64)$ & & & $p=0.25$ \\
        Flatten & $4,096$ & & & \\
        Dense   &  $128$  & $524,416 $ & relu & \\
        Dropout & $128$  & & & $p=0.5$ \\
        Dense   & $10$  & $1,290$ & softmax & \\
    \bottomrule
    \vspace{1ex}
    \end{tabular}
    \vskip -0.1in
\end{table}

\subsection{Futher Results}
\subsubsection{Test Accuracy Without Using Server Pretrained Model}\label{subsec_wo_Pretrain}
Figure \ref{fig_varyC_woPret} shows the performance of \texttt{FSL}, \texttt{FL} and \texttt{DS} when they start from a randomly initialized model instead of a pretrained one as in Figure~\ref{fig_emnist_varyC}. Clearly, in this case, the acceleration provided by SL is much more significant, even in the IID cases, in which \texttt{DS} offers little to no benefits as one would expect.
\begin{figure}[!tbh]
    \centering
    \includegraphics[width=1.5in]{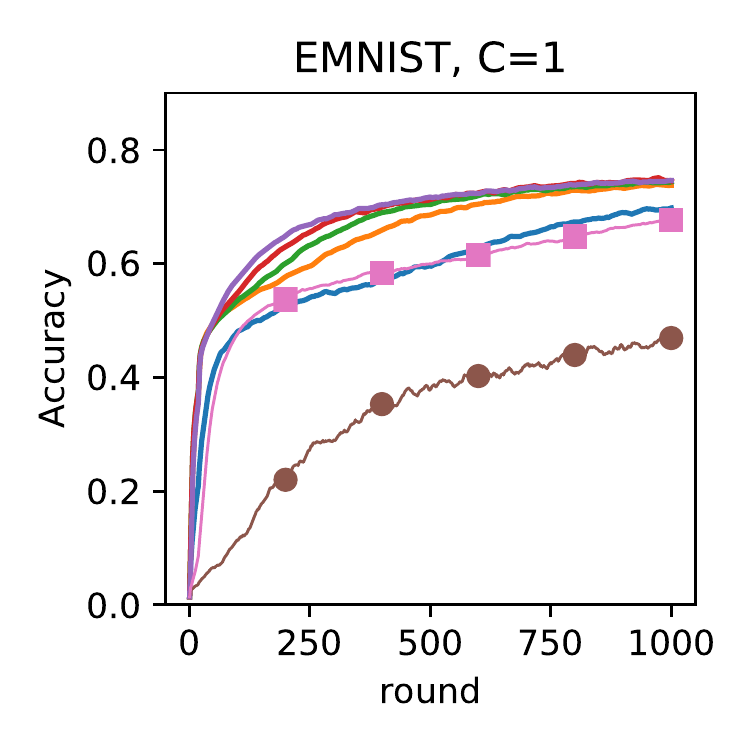}
    \includegraphics[width=1.5in]{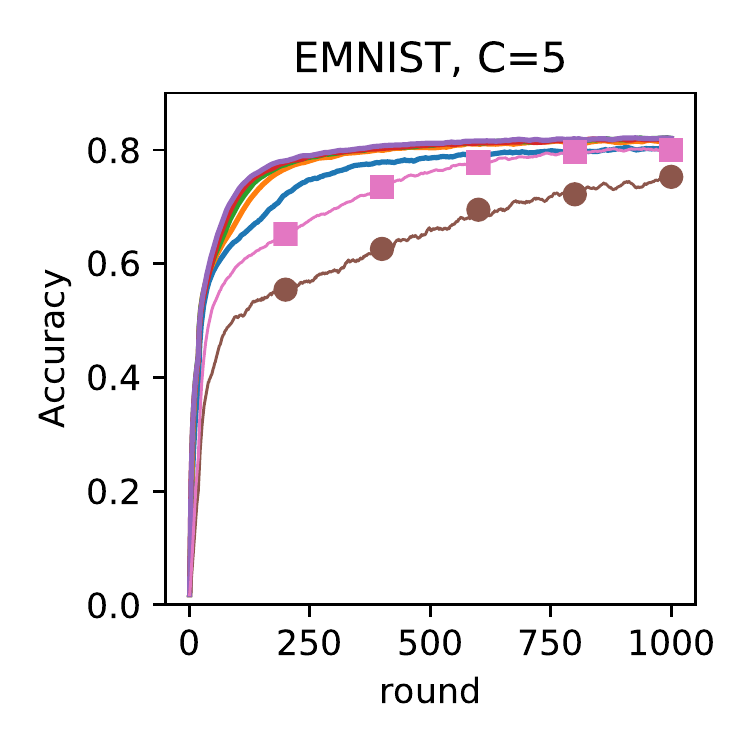}
    \includegraphics[width=1.5in]{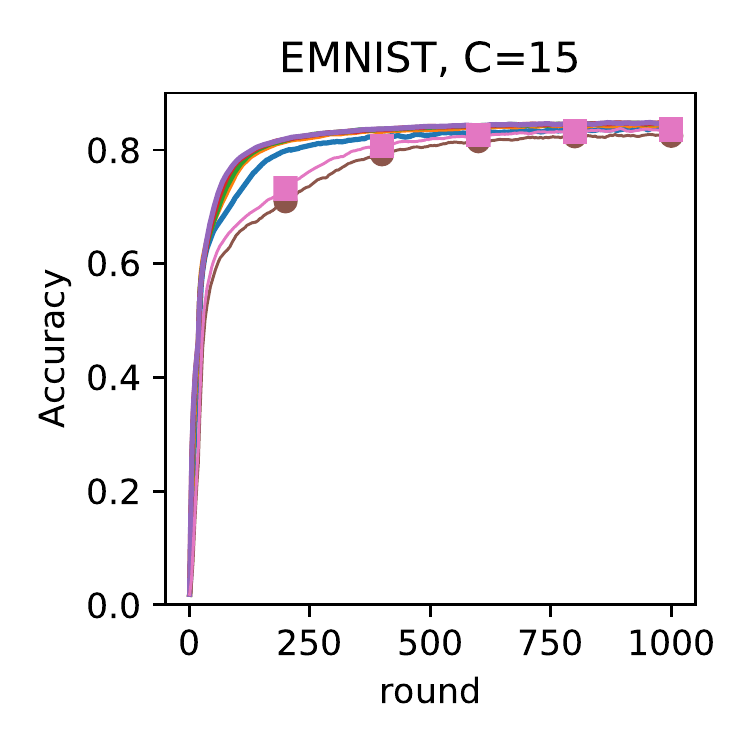}
    \includegraphics[width=1.5in]{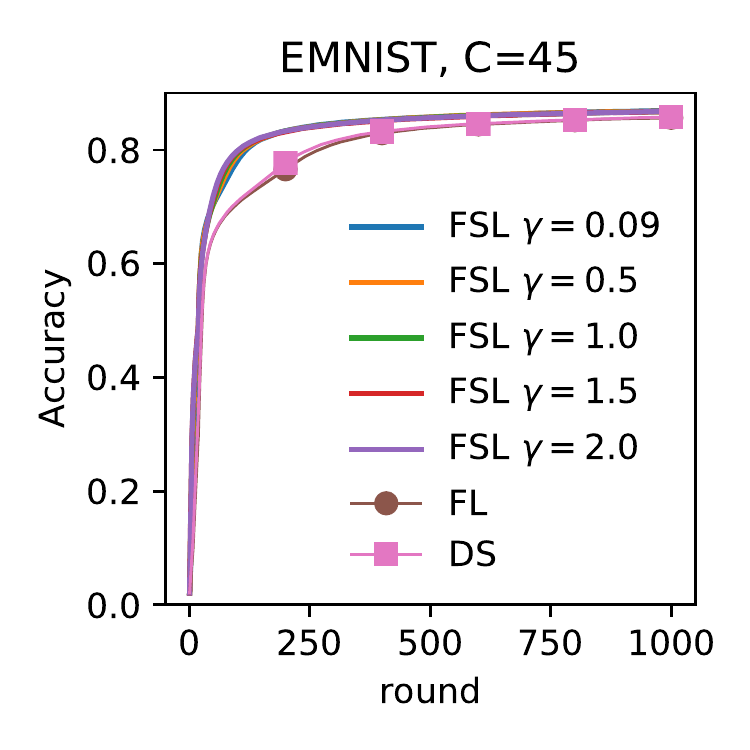}

    \vskip -0.1in
    \includegraphics[width=1.5in]{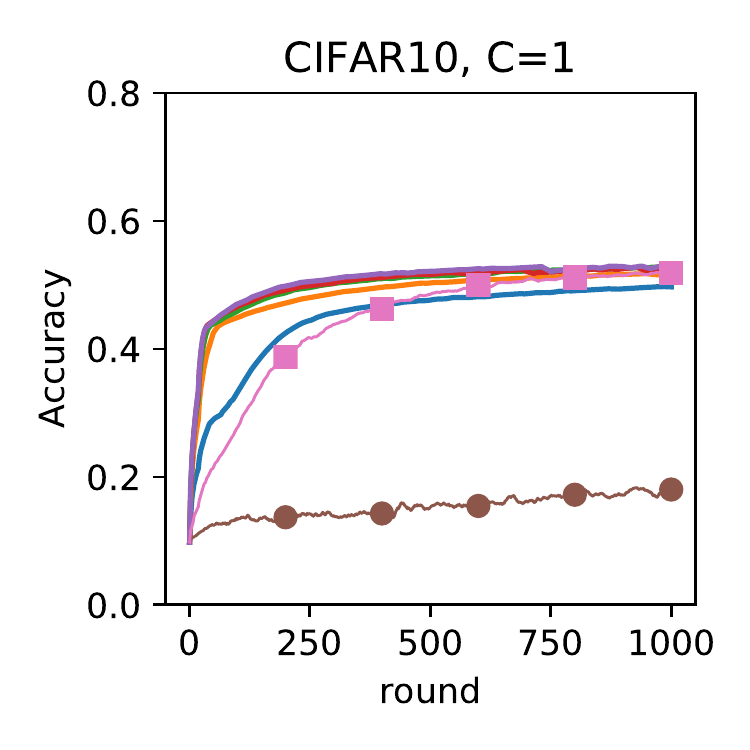}
    \includegraphics[width=1.5in]{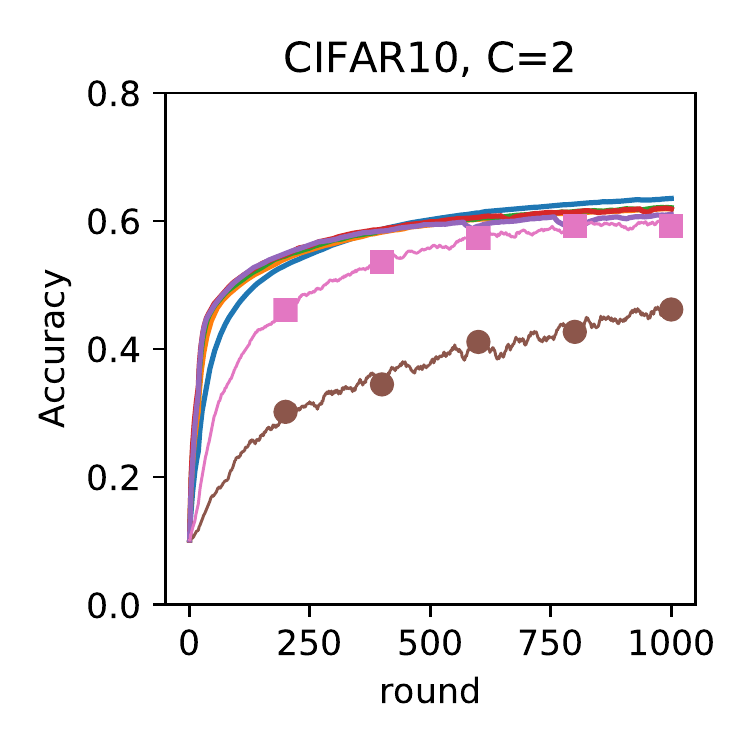}
    \includegraphics[width=1.5in]{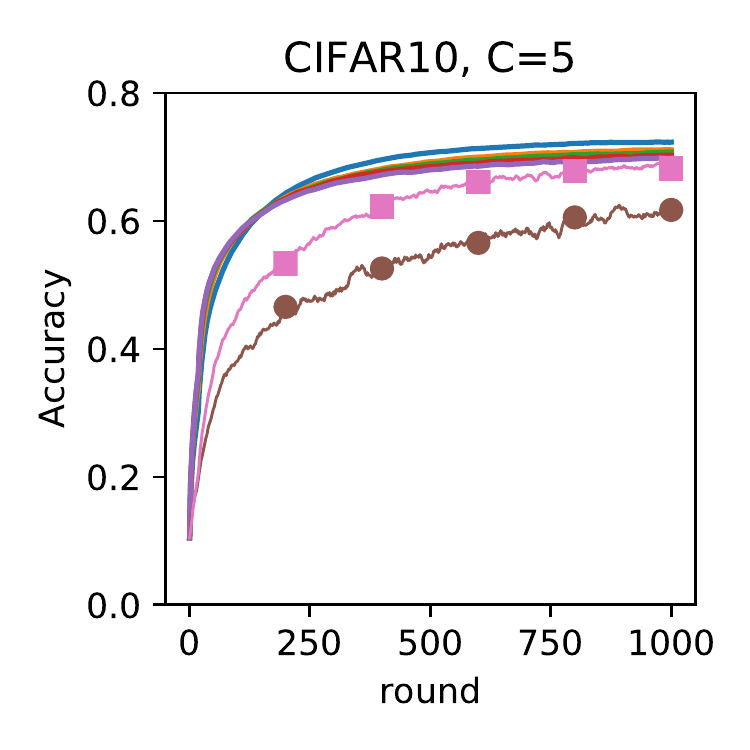}
    \includegraphics[width=1.5in]{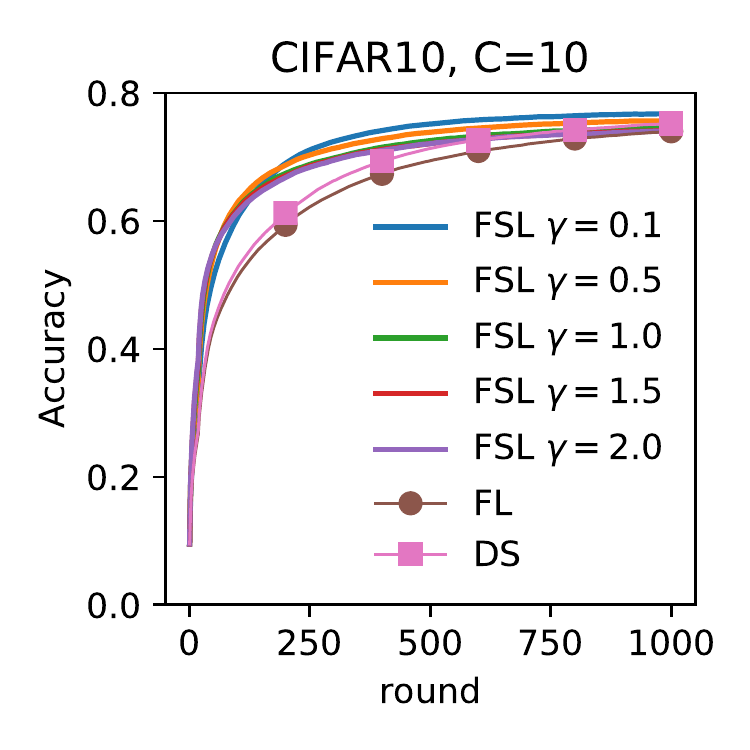}
    
    \vskip -0.15in
    \caption{Test accuracy of \texttt{FSL}, \texttt{FL} and \texttt{DS} when not using server pretrained model. Here, $n_0 = 225$, $S = 5$ and $\eta_l = 0.01$ in EMNIST experiments and $n_0 = 500$, $S = 4$, $\eta_l = 0.01$ for CIFAR-10.}
    \label{fig_varyC_woPret}
\end{figure}

\subsubsection{Comparison with Non-incremental SL}\label{subsec_compare_FSL_nonIncremental}
Figure~\ref{fig_wFSLp_varyCS} compares the performance of \texttt{FSL}, \texttt{DS} and the non-incremental version of SL, denoted by \texttt{FSL-p}, when varying $C$ and $S$. Clearly, \texttt{FSL-p} is slightly worse than \texttt{DS} while  \texttt{FSL} significantly outperforms in all cases. A similar conclusion can be drawn as we vary $\gamma$ as shown in Figure~\ref{fig_acc_rise_SLp_EMNIST}. 

\begin{figure}[!tbh]
    \centering
    \includegraphics[width=1.5in]{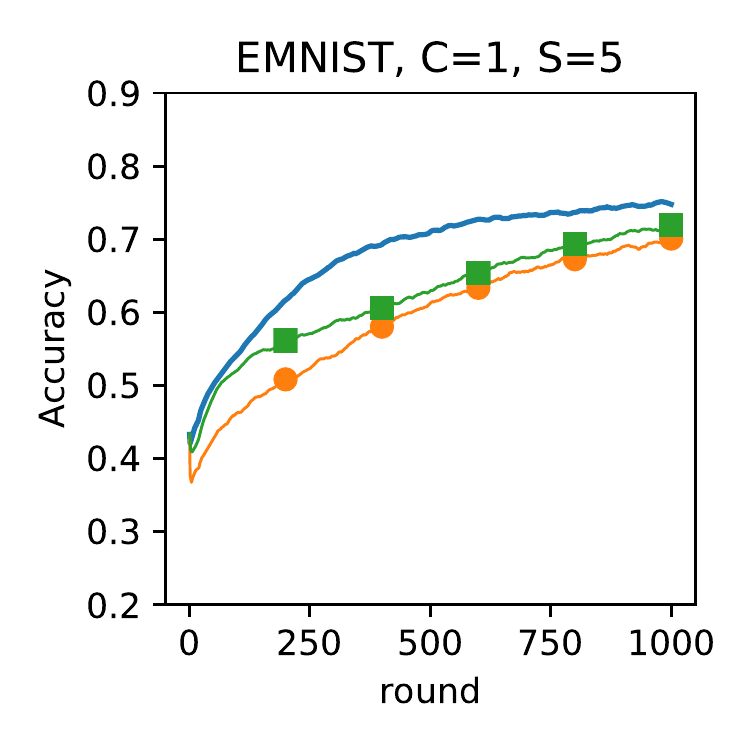}
    \includegraphics[width=1.5in]{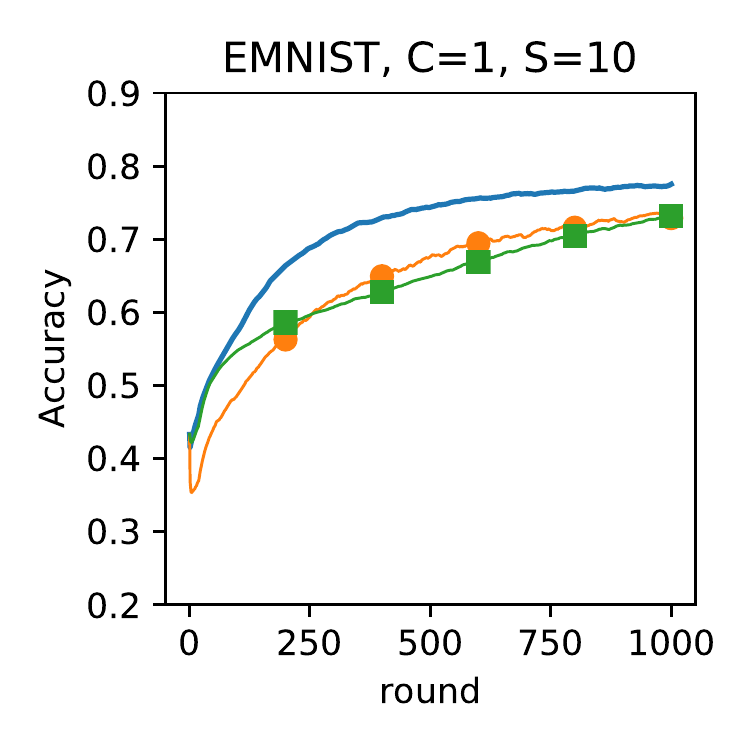}
    \includegraphics[width=1.5in]{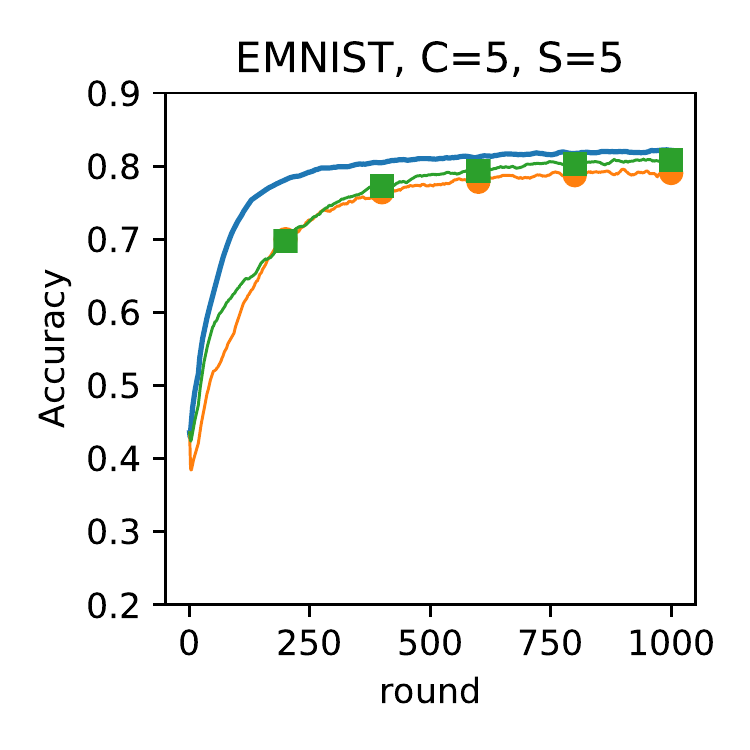}
    \includegraphics[width=1.5in]{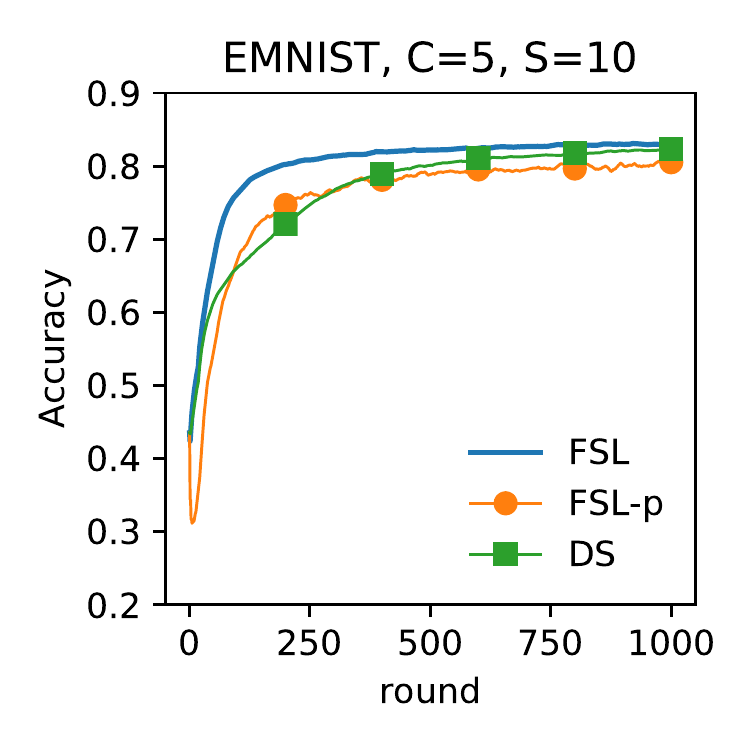}

    \vskip -0.1in
    \includegraphics[width=1.5in]{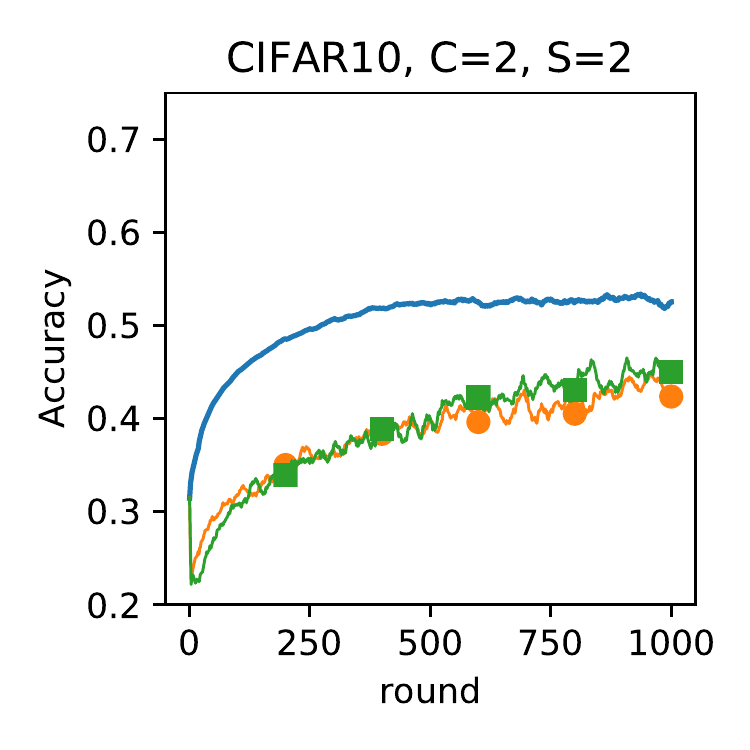}
    \includegraphics[width=1.5in]{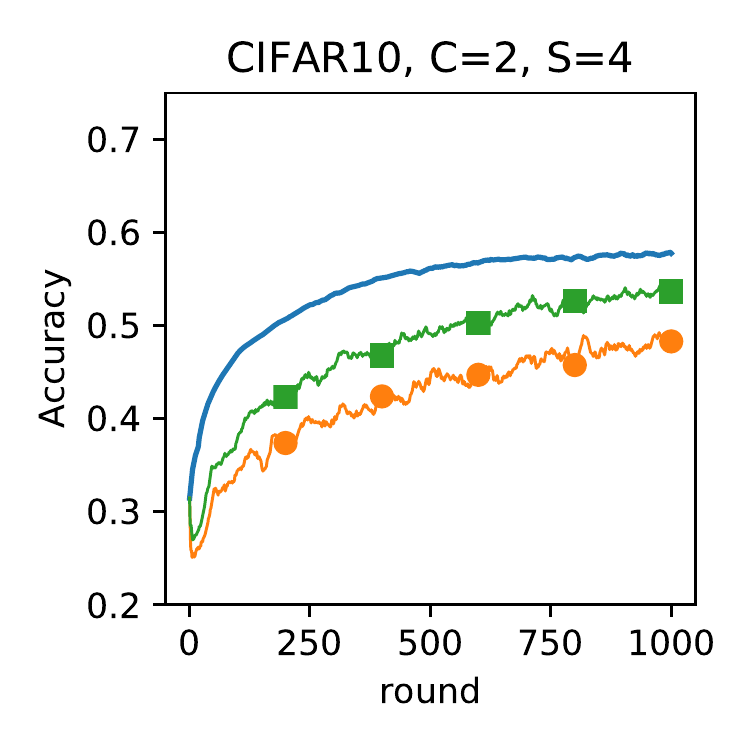}
    \includegraphics[width=1.5in]{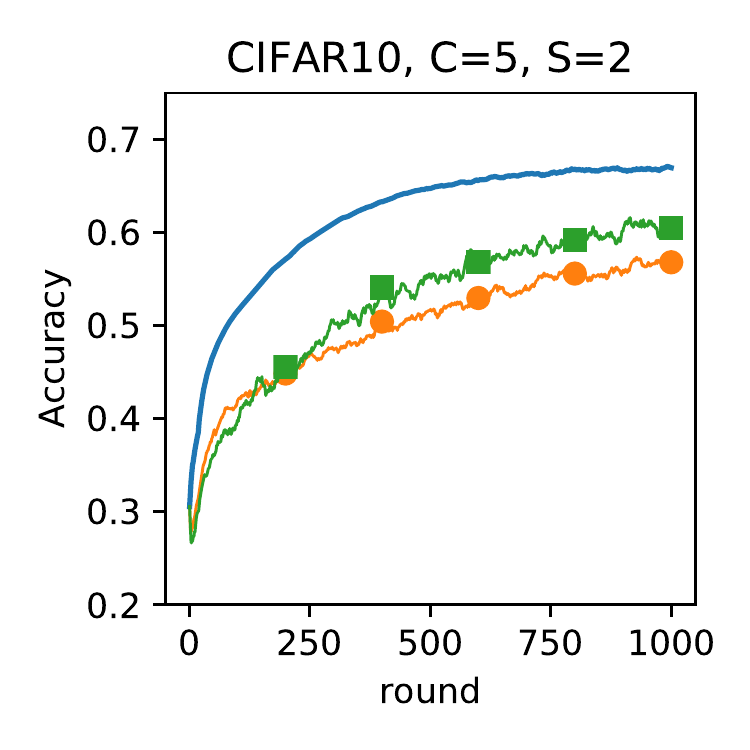}
    \includegraphics[width=1.5in]{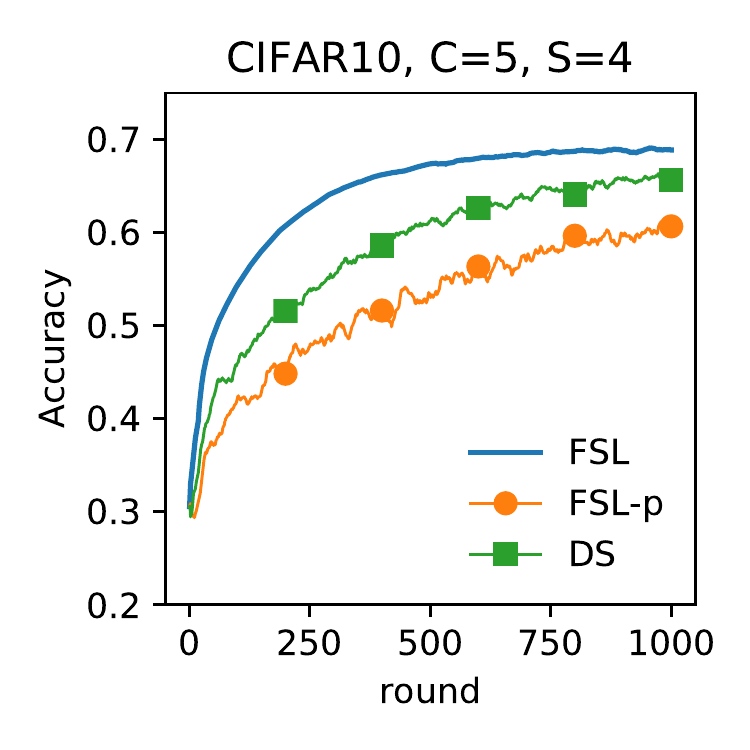}
    \vskip -0.1in
    \caption{Test accuracy of \texttt{FSL}, \texttt{FSL-p} and \texttt{DS}, where $n_0 = 225$, $\eta_l = 0.01$, $\gamma = 1$ for EMNIST, and $n_0 = 200$, $\eta_l = 0.01$, and $\gamma = 1$ for CIFAR-10.}
    \label{fig_wFSLp_varyCS}
    \vskip -0.1in
\end{figure}
% \begin{figure}[!tbh]
%     \centerline{ \hspace{0.0in}
%     \includegraphics[width=1.65in]{220913CIFAR10_ave3_CIFAR10_C=2_S=2_wFSLp.pdf}
%     \hspace{0.0in}
%     \includegraphics[width=1.65in]{220913CIFAR10_ave3_CIFAR10_C=2_S=4_wFSLp.pdf}
%     \hspace{0.0in}
%     \includegraphics[width=1.65in]{220913CIFAR10_ave3_CIFAR10_C=5_S=2_wFSLp.pdf}
%     \hspace{0.0in}
%     \includegraphics[width=1.65in]{220913CIFAR10_ave3_CIFAR10_C=5_S=4_wFSLp.pdf}
%     }
%     \caption{Test accuracy in CIFAR-10 experiments with $n_0 = 200$, $\eta_l = 0.01$, and $\gamma = 1$.}
%     \label{fig_cifar10_varyCS}
% \end{figure} 

\begin{figure}[!t]
    \centering
    \includegraphics[scale=0.4]{220913_EMNIST_ave3_Acc_RiseTime_SLp_N0225.pdf}
    \hspace{2ex}
    \includegraphics[scale=0.4]{220913_EMNIST_ave3_Acc_RiseTime_SLp_N0450.pdf}
    \vskip -0.05in
    \centerline{(a) EMNIST with $n_0=225$ \hspace{1.6in} (b) EMNIST with $n_0=450$}
    
    \includegraphics[scale=0.4]{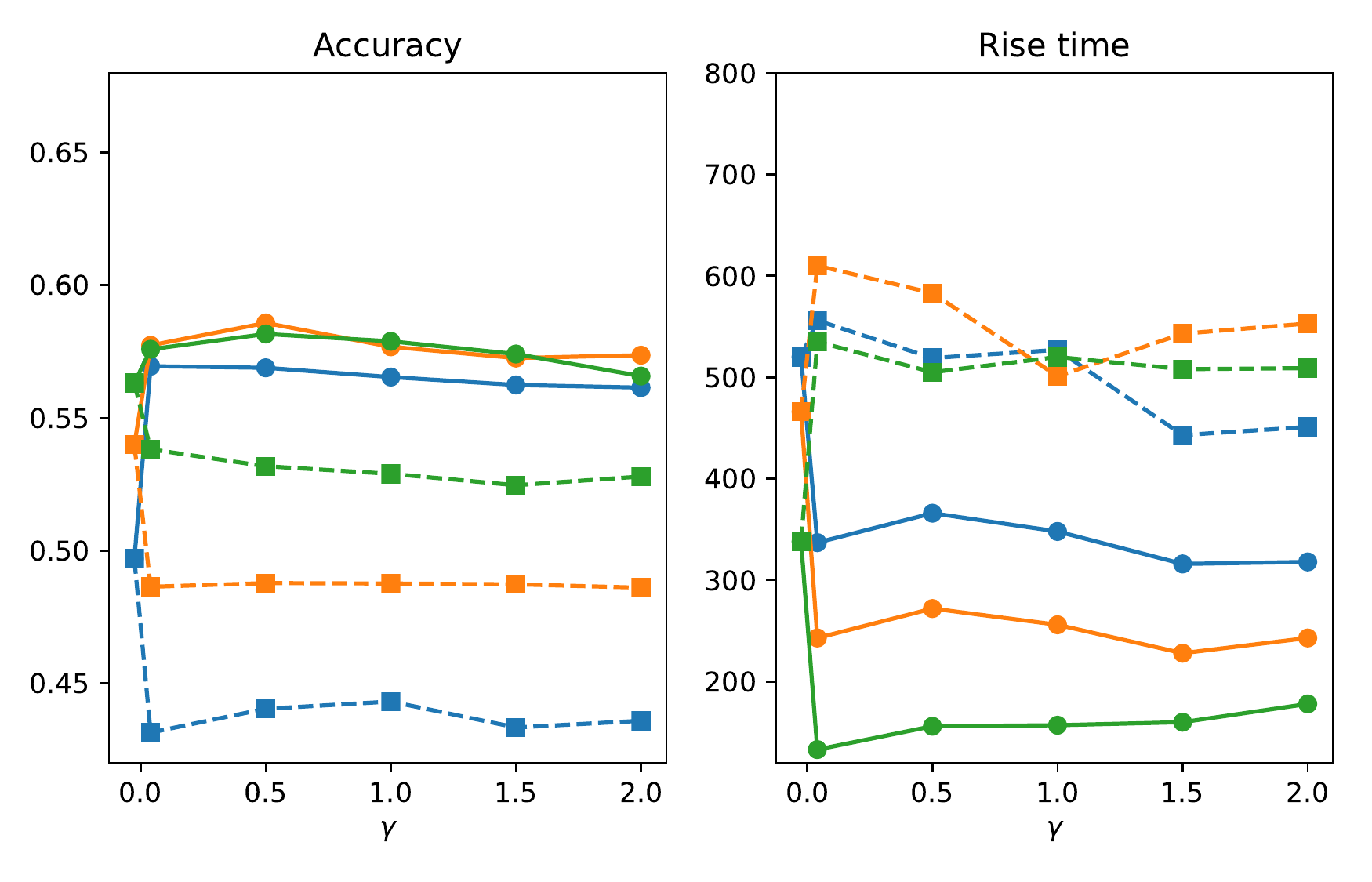}
    \hspace{2ex}
    \includegraphics[scale=0.4]{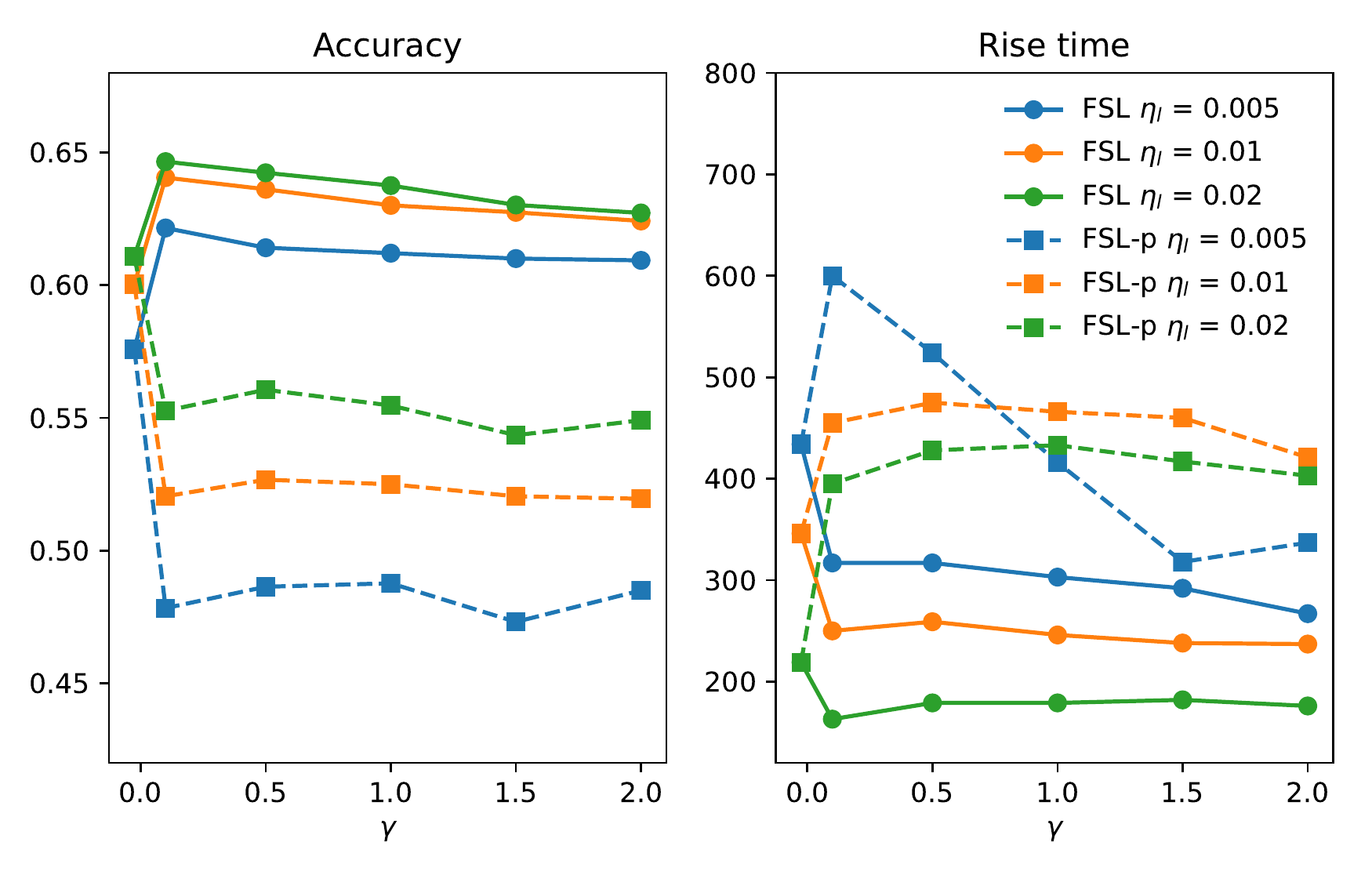}
    \vskip -0.05in
    \centerline{(c) CIFAR-10 with $n_0=200$ \hspace{1.6in} (d) CIFAR-10 with $n_0=500$}
    \caption{Comparison between \texttt{FSL}, \texttt{FSL-p}, and \texttt{DS} (shown at $\gamma = 0$). Here, $(C,S)=(5,5)$ for EMNIST and $(2,4)$ for CIFAR-10.}
    \label{fig_acc_rise_SLp_EMNIST}
\end{figure}
\begin{figure}[!t]
    \centering
    \includegraphics[scale=0.5]{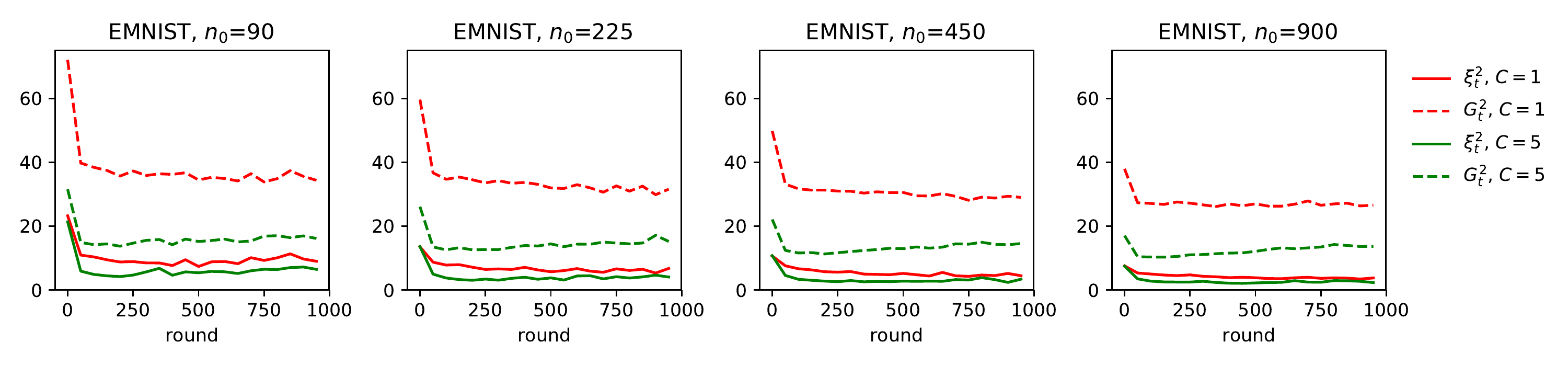}
    
    \vskip -0.1in
    \includegraphics[scale=0.5]{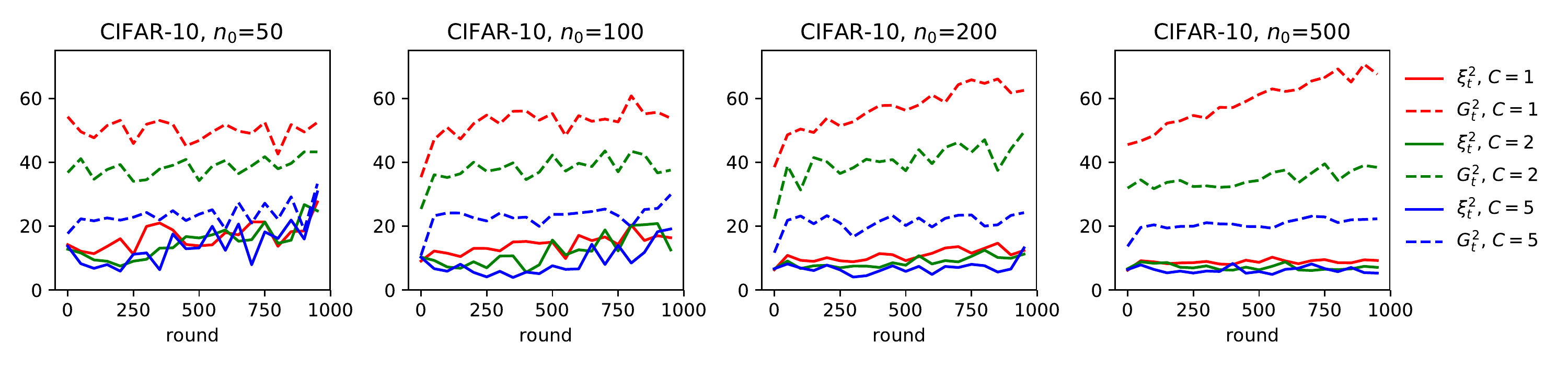}
    \vskip -0.1in
    \caption{Plots of $\xi_t^2 = \|\nabla f_0(x_t) - \nabla F(x_t)\|^2$ and $G_t^2 = \frac{1}{N} \sum_{i\in [N]} \|\nabla f_i(x_{t})-  \nabla F(x_t)\|^2$.}
    \label{fig_nonIIDness}
\end{figure}

\subsubsection{Quantifying the Non-IIDness of Clients and Server}\label{subsec_nonIIDness_bound}
In general, it is difficult to obtain uniform bounds $\bar{\xi}$ and $G$ in Assumptions~\ref{assm_globalGrad}--\ref{assm_server_data}. Thus, in Figure~\ref{fig_nonIIDness}, we show the following two related quantities in some of our experiments: $\xi_t^2 = \|\nabla f_0(x_t) - \nabla F(x_t)\|^2,$ and $\textstyle G_t^2 = \frac{1}{N} \sum_{i\in [N]} \|\nabla f_i(x_{t})-  \nabla F(x_t)\|^2$ for $t=0, 50, 100, \ldots 1000.$ 
First, it is clear that increasing $C$ reduces the non-IIDness considerably in our experiments with both datasets. 
Second, in most cases and on average, $\xi_t^2$ is much smaller than $G_t^2$ and is improved when $n_0$ increases. Third, perhaps somewhat surprisingly, even having access to 2 or 5 samples per label ($n_0 = 90$ and $225$) in EMNIST dataset already offers \texttt{FSL} a significant advantage to combat the  non-IIDness of clients' data. To have a similar level of benefit for the case of CIFAR-10, many more samples are needed for the server's data. This is another indication besides final accuracy that in our experiments, EMNIST dataset is easier to learn from even though it has more label classes than CIFAR-10.

\subsubsection{Further Comparison with FedDyn and SCAFFOLD}\label{subsec_compare_FSL_SCAFFOLD}
Figures \ref{fig_cifar_N100} and \ref{fig_emnist_N450} show the test accuracy (averaged over 3 runs) of \texttt{FSL}, \texttt{FSLsyn}, \texttt{FedDyn}, and  \texttt{SCAFFOLD} with server step size $\eta_g = \sqrt{S}$ and $T=1000$. \texttt{FSL}, \texttt{FSLsyn} not only has comparable (if not better) final accuracy than \texttt{SCAFFOLD} in most cases but also achieves higher initial training acceleration.

\begin{figure}
    \centering
    \includegraphics[scale=0.5]{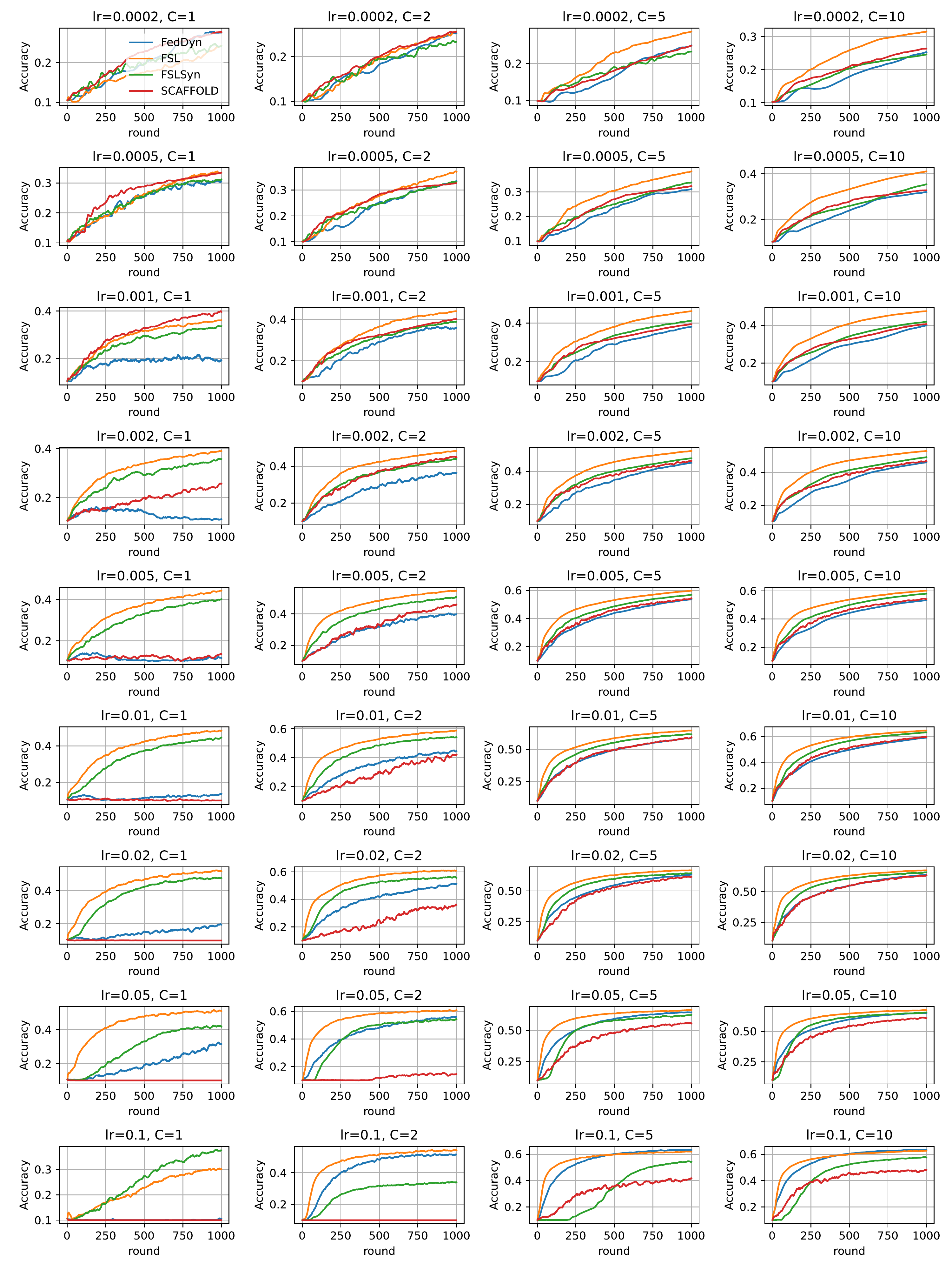}
    \caption{Test accuracy of \texttt{FSL}, \texttt{FSLsyn}, \texttt{FedDyn}, and  \texttt{SCAFFOLD} in CIFAR-10 experiments when varying local learning rate lr$=\eta_l$ and number of label classes $C$ each client has.}
    \label{fig_cifar_N100}
\end{figure}

\begin{figure}
    \centering
    \includegraphics[scale=0.5]{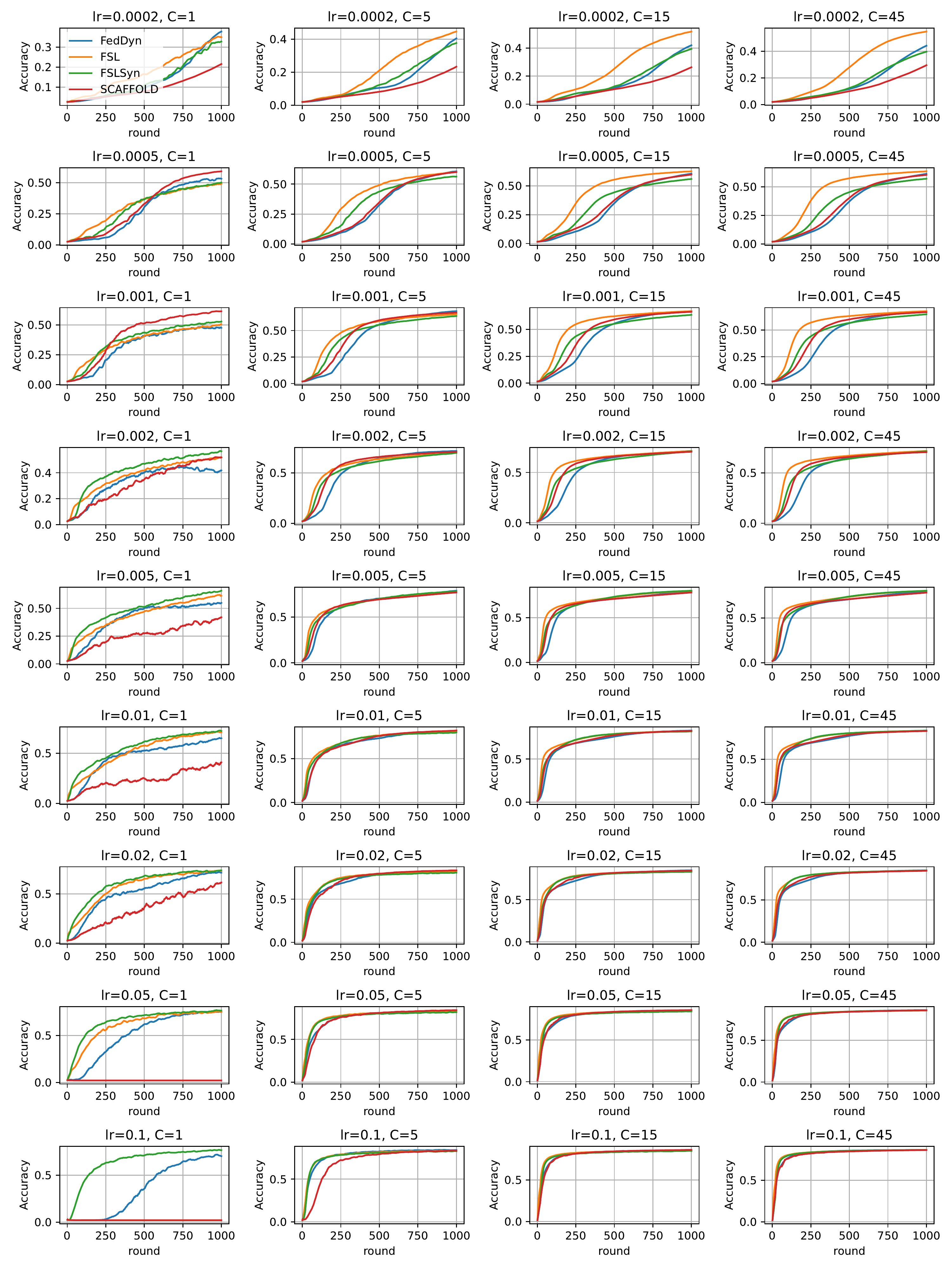}
    \caption{Test accuracy of \texttt{FSL}, \texttt{FSLsyn}, \texttt{FedDyn}, and  \texttt{SCAFFOLD} in EMNIST experiments when varying local learning rate lr$=\eta_l$ and number of label classes $C$ each client has.}
    \label{fig_emnist_N450}
\end{figure}

\end{appendices}
%%%%%%%%%%%%%%%%%%%%%%%%%%%%%%%%%%%%%%%%%%%%%%%%%%%%%%%%%%%%%%%%%%%%%%%%%%%%%%%
%%%%%%%%%%%%%%%%%%%%%%%%%%%%%%%%%%%%%%%%%%%%%%%%%%%%%%%%%%%%%%%%%%%%%%%%%%%%%%%

\end{document}